%% file: neurips_2026.tex
\documentclass{article}
\usepackage{multirow}
\usepackage{capt-of} 

 \usepackage[preprint]{neurips_2026}

\usepackage[colorlinks=true, allcolors=blue]{hyperref}

\usepackage[utf8]{inputenc} 
\usepackage[T1]{fontenc}    
\usepackage{hyperref}       
\usepackage{url}            
\usepackage{booktabs}       
\usepackage{amsfonts}       
\usepackage{nicefrac}       
\usepackage{microtype}      
\usepackage{xcolor}         

\title{The Bayesian Learning Rule Beyond KL Geometry}

\author{%
  Sophia Sklaviadis\textsuperscript{$*$,1},
  ~Wu Lin\textsuperscript{$\ddagger$},
  ~\textbf{Andr\'{e} F. T. Martins}\textsuperscript{$*$}, \\
  ~\textbf{M\'{a}rio A. T. Figueiredo}\textsuperscript{$*$},
  ~\textbf{Thomas M\"{o}llenhoff}\textsuperscript{$\dagger$},
  ~\textbf{Mohammad Emtiyaz Khan}\textsuperscript{$\dagger$,\S}
  \\[1em]
  \textsuperscript{$*$}Instituto de Telecomunica\c{c}\~{o}es and
   \\ Instituto Superior T\'{e}cnico,
  Universidade de Lisboa, Portugal \\
  \textsuperscript{$\dagger$}RIKEN Center for Advanced Intelligence Project,
  Japan \\
  \textsuperscript{$\ddagger$}University of Central Florida, USA \\
  \textsuperscript{\S}TU Darmstadt and hessian.AI,  Germany
  \\[1em]
  \textsuperscript{1}\texttt{ssklaviadis@gmail.com}
}

\usepackage{wrapfig}

\usepackage{amsmath,amssymb, mathtools}
\usepackage{amsthm}
\usepackage{algorithm, algorithmic}

\newcommand{\algorithmicinit}{\textbf{Init:}}
\newcommand{\INIT}{\item[\algorithmicinit]}
\newcommand{\algorithmiccompute}{\textbf{Compute:}}
\newcommand{\COMPUTE}{\item[\algorithmiccompute]}

\usepackage{float}

\usepackage{xcolor}

\newcommand{\blue}[1]{\textcolor{TolMutedBlue}{#1}}

\usepackage{titletoc}

\usepackage[textsize=tiny]{todonotes}

\newcommand{\postdeformidx}{q}                
     
\newcommand{\varpost}{\pi_{\text{\vlambda}}}                   
\newcommand{\varcand}{\pi}                  
\newcommand{\prior}{\pi_0}           
\newcommand{\Qgauss}{\mathcal{N}_q}

\usepackage[capitalize,nameinlink]{cleveref}
\crefname{section}{Sec.}{Sec.}
\crefname{thm}{Thm.}{Theorem}
\crefname{appendix}{App.}{Appendices}
\crefname{algorithm}{Alg.}{Algorithms}
\crefname{equation}{Eq.}{Eqs.}
\crefname{figure}{Fig.}{Figs.}
\crefname{prop}{Prop.}{Props.}
\creflabelformat{equation}{#2\textup{#1}#3} 

\DeclareMathOperator*{\argmax}{arg \ max}

\definecolor{TolMutedBlue}{HTML}{332288}
\definecolor{TolMutedGreen}{HTML}{117733}
\definecolor{TolMutedPurple}{HTML}{AA4499}
\definecolor{TolMutedRed}{HTML}{BB3322}

\hypersetup{
    colorlinks=true,
    citecolor=TolMutedBlue,%
    linkcolor=TolMutedGreen,%
    urlcolor=TolMutedPurple,%
}

\input{defns}

\begin{document}

\maketitle

\begin{abstract}
The Bayesian learning rule (BLR) exploits the relation between Bayesian updating and the Kullback-Leibler divergence (KLD) to derive efficient additive updates for exponential family variational posteriors. The KL geometry is powerful but restrictive: combined with posterior families with parameter-dependent support it can lead to undefined objectives. 
We ask which non-KL geometries preserve the additive BLR computation.
Previous works that use  $f$-divergences suggest that this is only possible with the KLD. Here, we show that this is not true.
We show that additive information-geometric optimization algorithms can be derived for the class of mixed-space Bregman or Fenchel-Young discrepancies.
These discrepancies induce generalized maximum-entropy families, dual coordinates, and generalized-BLR canonical gradient updates, recovering the classical BLR. 
The generalized BLR leads to new algorithms that \textit{cannot} be derived from the KL-based BLR.
For the Tsallis negentropy with quadratic statistics, the generalized BLR yields
bounded-support and heavy-tailed $q$-Gaussian posteriors and an easy Newton-like implementation. Experiments up to CIFAR-100 illustrate controllable support and tail behavior and competitive performance.
\end{abstract}

\section{Introduction}

There is substantial work on generalizing Bayesian inference from the perspective of optimization: the variational posterior strikes a balance between information from the observations and adherence to the prior. Recently, \citet{knoblauch2022optimization} exploit the ``modularity'' of this optimization perspective, treating the variational family in which the posterior is sought, the loss connecting the posterior to the observations, and the divergence between the posterior and prior as independent design choices, ignoring formal interactions of these objects. In contrast, \citet{khan2023bayesian}'s Bayesian learning rule (BLR) is an iterated, conjugate inference extension of classical Bayes, preserving the standard couplings between the prior and entropic regularizer, and between prior and posterior families (\cref{sec:blr}). In the BLR, the updates stem from the interaction of Shannon entropy, KL geometry, and exponential-family posteriors, which is computationally powerful but restrictive: it ties the BLR to standard fixed-support exponential families. Posterior families with variable support or adaptive tail behavior fall outside the scope of the KL-based BLR. Here, we focus on a structural question raised by the BLR: are there generalizations of the KL-based variational objective that preserve the additive canonical-coordinate updates that free the BLR from its KL restriction?


Our contributions are: (1) We identify mixed-space Bregman or Fenchel-Young discrepancies, as defined by \citet{amari2010information, martins2022sparse} but only first connected to variational Bayes in \citet{fyvi}, as a class that preserves BLR-style additive canonical updates.
(2) We propose a generalized BLR update beyond the KL geometry and show that the classical BLR is recovered as a special case (\cref{sec:genblr}). (3) We instantiate the generalized BLR update for the Tsallis negative entropy, yielding parameter-dependent bounded-support and heavy-tailed $q$-Gaussian Newton-like variational learning methods (\cref{sec:qvon_summary}, App.~\ref{sec:qvon}).
Our new Newton-like optimizers require changing a few lines of code in state-of-the-art Improved Variational Online Newton (IVON)~\citep{shen2402variational}, a scalable Gaussian BLR-style optimizer for deep networks, to obtain heavy-tailed or bounded-support noise injection, maintain non-Gaussian posteriors, and target a generalized Bayesian objective using the maximum entropy (maxent) geometry associated with each objective.  
These optimizers enable generalized Bayesian learning beyond the KLD at scale (\cref{sec:experiments_summary}, App.s~\ref{app:experiments}, \ref{app:varying_q}).

By KL or mixed-space Bregman geometry we mean the convex-analytic structure defined by a reference score and convex function, which induce a mixed Bregman divergence: the score family and convex function determine the maxent posterior family whose convex conjugate determines dual coordinates, and a Hessian metric. Our terminology is motivated by the fact that a mixed Bregman divergence is the slack in a Fenchel-Young inequality.\footnote{We use \emph{Fenchel} rather than \emph{Legendre} deliberately. In convex analysis, the Legendre transform is the differentiable special case of the Fenchel conjugate \citep{boyd2004convex, touchette2005legendre}.}
Closest to our generalized BLR is \citet{alquier2021non}'s mirror-descent update that optimizes the sum of an expected loss and a $f$-divergence. 
Only when the $f$-divergence coincides with the KLD is it possible to separate the $f$-divergence ratio into a difference of linear expectations.
Uniquely, the KLD belongs both to the classes of $f$- and Bregman divergences \citep{amari2009alpha}.
The mixed-space Bregman divergence that we use recovers the KLD and BLR, but encompasses other divergences corresponding to generalized entropies. We optimize a generalized variational objective using dual coordinates of a relaxed class of reference scores in place of normalized priors, and generalized Fenchel posteriors in place of Gibbs posteriors. 
The resulting information-geometric algorithm has the same additive computation as the BLR, and admits a dual-space analogue of  \citet{bissiri2016general}'s sequential coherence, unlike \(f\)-divergence updates.
\begin{figure}
    \centering
    \includegraphics[width=\linewidth]{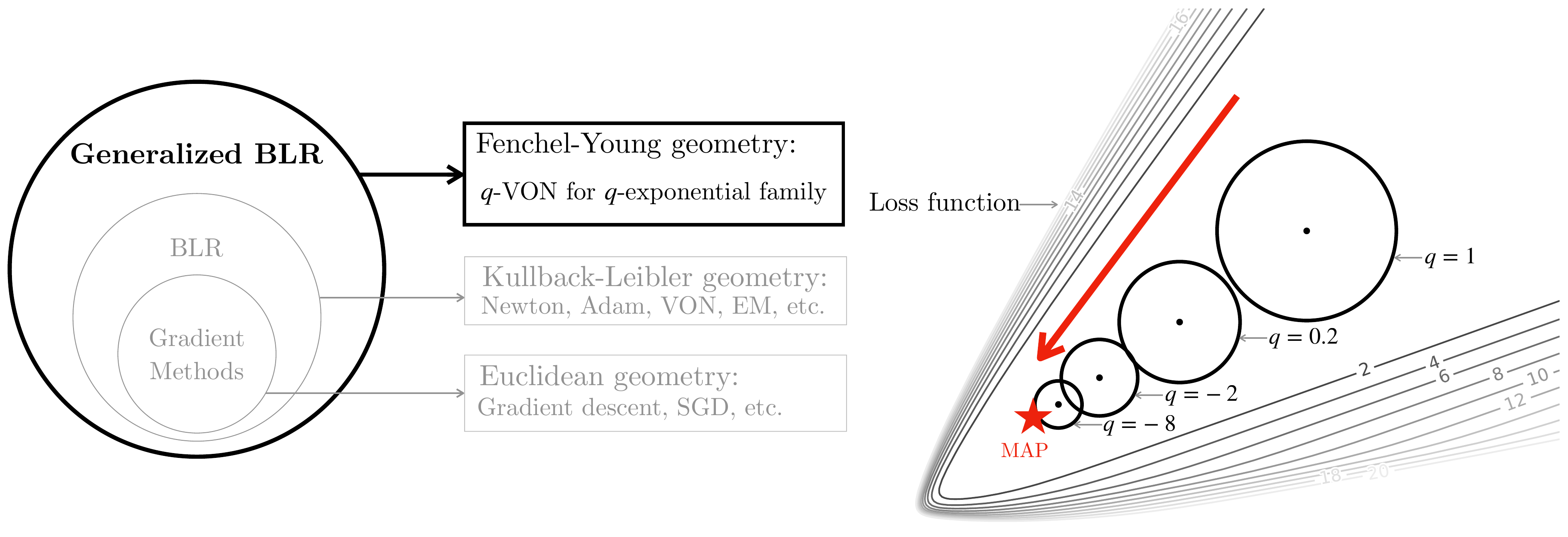}
    \caption{\textbf{Left:} We generalize the BLR by exploiting the geometry of Fenchel-Young discrepancies. We recover as special cases the BLR and the learning algorithms derived from it, and derive new algorithms using bounded-support and heavy-tailed $q$-exponential families.
    \textbf{Right:} Unlike Gaussians, bounded-support $q$-Gaussians allow control of the support by varying $q$. Thus, we can fit posteriors that closely track the MAP. The illustrated loss is that of 2D Bayesian logistic regression \citep{murphy2012machine}, each circle indicates the first standard deviation, and outer loss contours indicate higher loss values. The Gaussian ($q=1$) fit is farther from the MAP because its full support assigns non-negligible mass to high-loss regions; bounded-support $q$-Gaussians truncate this exploration.}
    \label{fig:fig}
\end{figure}

Parameter-dependent bounded-support $q$-Gaussian posteriors allow the posterior mean to closely track MAP solutions (\cref{fig:fig,fig:fig3}). By contrast, Gaussian variational means are far enough away from the MAP to avoid sampling parameter-space regions with high-loss.
The heavy-tailed $q$-Gaussian posterior updates we derive are reparametrizations of Student-$t$ distributions studied in robust Bayesian inference. Various aspects of robust Bayes can be reinterpreted in our framework, with the benefit of easy conjugate priors, but drawing these connections is beyond the scope of this paper; as is demonstrating state-of-the-art or scale. We show experiments up to CIFAR-100 (App.~\ref{app:experiments}), and defer large experiments, for example on quantization-aware training, pruning, and language model training with bounded variance to future work. We develop the theoretical framework and check that the proposed algorithms for variational training with controllable support and tail behaviors are competitive with their Gaussian IVON counterpart \citep{khan2018fast,osawa2019practical,shen2402variational}, without additional tuning.




\section{KLD Geometry and the BLR}\label{sec:blr}

The BLR relies on two levels of a common convex duality. At the density-space level, the Gibbs-Donsker-Varadhan variational formula is the Fenchel duality between the log-Laplace functional of a tilted prior, and the relative entropy with respect to that prior. Equivalently, if the prior is represented by a reference score, the KLD is the Fenchel-Young slack generated by the Shannon negentropy and that score. The density-space duality, restricted by finite moment constraints, results in the maximum entropy characterization of exponential families: with constant base measure, the exponential family \(\pi_{\text{\vlambda}}\) with canonical parameter $\vlambda$ maximizes the Shannon entropy among all densities with the same moments, and with a non-constant base measure it maximizes the relative entropy with respect to that base measure. At the parametric level, restricting the score to the finite-dimensional linear class
\(
\eta_{\text{\vlambda}}(\vtheta)=\langle\vlambda,\vT(\vtheta)\rangle
\)
induces the cumulant \(A(\vlambda)\). Under regularity and minimality, \(\vmu=\nabla A(\vlambda)\), and \((\vlambda,\vmu)\) are the dual coordinates on which the additive BLR update is based.

Fix a measurable space \((\Theta,\mathcal A)\) and a \(\sigma\)-finite reference measure \(h\). Let \(\mathcal P\) be a convex class of probability densities with respect to \(h\), and write \smash{\( \myexpect_\pi[f] = \int_\Theta f(\vtheta)\pi(\vtheta) d h(\vtheta)\)}.
For the Shannon negentropy \smash{\(\Omega_{\mathrm{Sh}}= -\mathcal H(\pi) = \int_\Theta \pi(\vtheta)\log \pi(\vtheta) d h(\vtheta) \)}, the Fenchel conjugate over densities 
\[
\Omega_{\mathrm{Sh}}^*(\eta) = \sup_{\pi\in\mathcal P} \left\{ \myexpect_\pi[\eta(\vtheta)]-\Omega_{\mathrm{Sh}}(\pi) \right\}
= \log\int_\Theta e^{\eta(\text{\vtheta})} d h(\vtheta),
\]
is log-normalization, whenever the integral is finite. The corresponding Fenchel-Young slack is
\[
\mathbb D_{-\mathcal H}(\eta,\pi) = \Omega_{\mathrm{Sh}}^*(\eta) + \Omega_{\mathrm{Sh}}(\pi) - \myexpect_\pi[\eta(\vtheta)]
\ge 0.
\]

The KLD can be derived as the Fenchel-Young slack generated by Shannon
negentropy and a reference score. In particular, if a prior density is
represented by score \(\eta_0\), as
\smash{\(\prior(\vtheta) = \exp\left( \eta_0(\vtheta)-\Omega_{\mathrm{Sh}}^*(\eta_0) \right)\)},
then \(\log\prior(\vtheta)=\eta_0(\vtheta)-\Omega_{\mathrm{Sh}}^*(\eta_0)\).
Hence, for any density \(\pi\),
\begin{align}
\mathbb D_{-\mathcal H}(\eta_0,\pi)
&= \Omega_{\mathrm{Sh}}^*(\eta_0) + \int_\Theta \pi(\vtheta)\log\pi(\vtheta) dh(\vtheta) - \myexpect_\pi[\eta_0(\vtheta)] \nonumber\\
&= \int_\Theta \pi(\vtheta) \left[ \log\pi(\vtheta) - \eta_0(\vtheta)
+ \Omega_{\mathrm{Sh}}^*(\eta_0) \right] dh(\vtheta) 
= \dkl{\pi}{\prior}.
\label{eq:kl_mixed_space}
\end{align}
Additive constants in \(\eta_0\) do not matter, since they change \(\Omega_{\mathrm{Sh}}^*(\eta_0)\) by the same constant.

The Gibbs or Donsker-Varadhan variational formula follows by applying the same slack to the tilted score \(\eta_0-\loss\). The normalizer of the tilted prior is
\[
\mathcal Z = \int_\Theta e^{-\loss(\text{\vtheta})}\prior(\vtheta) d h(\vtheta)
= \exp\left(\Omega_{\mathrm{Sh}}^*(\eta_0-\loss) - \Omega_{\mathrm{Sh}}^*(\eta_0) \right).
\]
Therefore, for every \(\varcand\in\mathcal P\) with \(\varcand\ll\prior\),
\begin{align}
\myexpect_\varcand[\loss(\vtheta)] + \dkl{\varcand}{\prior}
&= \myexpect_\varcand[\loss(\vtheta)] + \Omega_{\mathrm{Sh}}(\varcand) - \myexpect_\varcand[\eta_0(\vtheta)] + \Omega_{\mathrm{Sh}}^*(\eta_0) \nonumber\\
&= -\log\mathcal Z + \Omega_{\mathrm{Sh}}^*(\eta_0-\loss) + \Omega_{\mathrm{Sh}}(\varcand) - \myexpect_\varcand[\eta_0(\vtheta)-\loss(\vtheta)] \nonumber\\
&= -\log\mathcal Z + \mathbb D_{-\mathcal H}(\eta_0-\loss,\varcand)
\ge -\log\mathcal Z .
\label{eq:dv_from_fy}
\end{align}
Taking the infimum over candidates \(\varcand\) we get the Gibbs-Donsker-Varadhan variational formula
\begin{equation} \label{eq:dv_formula}
    - \log \mathcal{Z} = \inf_{\varcand\in\mathcal P: \varcand\ll\prior} \myexpect_\varcand[\loss(\vtheta)] + \dkl{\varcand}{\prior}.
\end{equation}
Conversely, sharpness of the Gibbs bound uniquely characterizes the KL regularizer (App.~\ref{app:kld}).
Equality in \cref{eq:dv_formula} holds when the Fenchel-Young slack in \cref{eq:dv_from_fy} vanishes, that is, when \(\varcand\) is the normalized density induced by the tilted score \(\eta_0-\loss\), namely the Gibbs posterior:
\[
\varcand_*(\vtheta) = \exp\left( \eta_0(\vtheta)-\loss(\vtheta) - \Omega_{\mathrm{Sh}}^*(\eta_0-\loss) \right) = \frac{e^{-\loss(\text{\vtheta})}\prior(\vtheta)}{\mathcal Z}.
\]

This duality appears in several literatures. In statistical mechanics and information theory, free-energy minimization and maximum Shannon entropy subject to moment constraints yield exponential families \citep{gibbs1902elementary,jaynes1957information,williams1980bayesian,Zellner1988}.
In generalized Bayesian inference, the negative log-likelihood is replaced by a general loss while retaining the KLD penalty to the prior \citep{zhu2014bayesian,bissiri2016general,ghosh2016robust,jewson2018principles,miller2019robust}.
In PAC-Bayesian learning theory, the KLD measures the complexity of moving from a fixed prior to a data-dependent posterior \citep{zhang1999theoretical,catoni2007pac,grunwald2020fast}. 
In variational inference, the ELBO is equivalently a KLD-regularized expected-fit objective or a KL projection of the exact posterior onto a tractable family \citep{wainwright2008graphical,hoffman2013stochastic,Blei2017}.
Instead of replacing the KLD in \cref{eq:dv_formula}, we use the Fenchel-Young structure that generates the KLD as the template for the mixed-space Bregman construction in \cref{sec:genblr}.\footnote{ A different line of thought in generalized VI is optimizes over a variational family by minimizing alternative divergences to the Bayesian posterior \citep{minka2005divergence, hernandez2016black, li2016renyi}. }

\subsection{Exponential Family Posteriors from Moment-Constrained Maxent}\label{ssec:expfam}
When the posterior belongs to a regular, minimal exponential family (expfam), through a constant base measure determined by finite moment constraints, the duality between log-normalization and negative entropy characterizes the posterior as the unique Shannon maxent distribution. 
To see this, first we fix a base measure $b(\vtheta) d\vtheta$ and a finite sufficient statistic
\begin{align}\label{eq:suff_stat} 
\vT(\vtheta) = \left(T_1(\vtheta), ..., T_P(\vtheta) \right).
\end{align}
For each \smash{$\vlambda\in\real^P$}, the linear score \smash{\(\eta_{\text{\vlambda}}(\vtheta) = \myang{\vlambda,\vT(\vtheta)} \)}
defines an exponential tilt \smash{\(e^{\eta_{\text{\vlambda}}(\text{\vtheta})} b(\vtheta) d\vtheta \)}, which is not, in general, a probability measure. For \smash{$\vlambda \in \Lambda = \{\vlambda \in \real^P: \int b(\vtheta)\exp \left(\eta_{\text{\vlambda}}(\vtheta)\right) d\vtheta < \infty \}$}, the unique additive correction is the cumulant or log-partition function 
\smash{\(
A(\vlambda) = \log \int b(\vtheta)\exp \left( \langle \vlambda, \vT(\vtheta) \rangle \right) d\vtheta
 \)},
defined on $\Lambda$, and the density induced by $h$ and $\vT$ has the familiar exponential family form
\begin{align}
\varpost(\vtheta) = b(\vtheta)\exp \left( \langle \vlambda, \vT(\vtheta) \rangle - A(\vlambda)
\right).
\label{eq:expfam}
\end{align}
Note that $\vlambda$ is introduced as the coefficient of the finite-dimensional score, and $A(\vlambda)$ is derived as the log-normalizer of the corresponding exponential tilt, which is the measure-theoretic operation that uses the score $\eta_{\text{\vlambda}}$ to tilt the base measure, thereby characterizing the expfam form.\footnote{The family is called regular if $\Lambda$ is a nonempty open subset of $\real^P$.
It is called minimal if there do not exist $(a, \vb) \in \real \times \real^P$, $(a,\vb) \neq (0, \vzero)$, such that \(a + \myang{\vb, \vT(\vtheta)} = 0 \) for almost every $\vtheta$. Under minimality, the natural parameter is identifiable, that is \(\pi_{\text{\vlambda}_1} = \pi_{\text{\vlambda}_2} \text{ a.e. } \Rightarrow \vlambda_1 = \vlambda_2 \), and $A$ is strictly convex.} 

Differentiating \smash{\( \int \varpost (\vtheta) d\vtheta = 1 \)} with respect to $\vlambda$ we obtain the moment map
\smash{\(
\nabla_{\text{\vlambda}} A(\vlambda) = \myexpect_{\varpost} [\vT(\vtheta)] = \vmu(\vlambda)
\)}.
Thus, natural and expectation parameter form a Legendre dual coordinate system. 
For simplicity, we assume the base measure is constant. In this case, the member of the expfam with moment vector \( \vmu(\vlambda) \) is the unique Shannon entropy maximizer among all densities $\pi \in \mathcal P$, that satisfy the same moment constraints (see App.~\ref{ssec:shannon_maxent} for a proof).\footnote{More generally, if the base measure is not constant, a corresponding statement can be made in terms of maximum relative Shannon entropy with respect to that base measure.}

\subsection{The BLR in Exponential Family Coordinates}\label{ssec:blr}
\citet{khan2023bayesian} apply \Cref{eq:dv_formula} to unify various learning algorithms under the Bayesian Learning Rule (BLR).
The BLR is obtained by restricting the DV optimization in \cref{eq:dv_formula} to the set of regular, minimal exponential families \smash{$\mathcal Q \subseteq \mathcal P$}, and exploiting the \smash{$(\vlambda, \vmu)$} dual pair. Assuming, in addition, that the prior belongs to the same family, \( \prior(\vtheta) = b(\vtheta) \exp \left( \langle \vlambda_0, \vT(\vtheta) \rangle - A(\vlambda_0) \right) \), and taking $b$ to be constant for simplicity, \smash{$\mathbb{D}_{-\mathcal{H}} (\log \prior, \varpost) = A(\vlambda_0) + A^*(\vmu(\vlambda) ) - \langle \vlambda_0, \vmu \rangle$}, where $A^*$ is the Legendre dual of $A$, and coincides with the negative Shannon entropy \smash{$ A^*(\vmu(\vlambda)) = - \mathcal{H}(\varpost)$}.
Differentiating $\mathbb{D}_{-\mathcal{H}} (\log \prior, \varpost)$ with respect to the expectation parameter, 
\smash{\( 
\nabla_{\text{\vmu}} \dkl{\varpost}{\prior} = \vlambda - \vlambda_0
\)}.

For exponential families, the Fisher information matrix $\vF(\vlambda)$ can be shown to equal \( \nabla^2_{\text{\vlambda}}A(\vlambda) \), which allows us to use the chain rule to express natural gradients $\widetilde{\nabla}$ as reparametrized regular gradients, \smash{$\widetilde{\nabla}_{\text{\vlambda} } = \vF(\vlambda)^{-1} \nabla_{\text{\vlambda} } = \nabla_{\text{\vmu} }$}. The expfam restriction renders the natural gradient of the KLD affine in $\vlambda$, and the natural-gradient step into a mirror-descent step with an additive $\vlambda$ update
\begin{align}
\vlambda & \gets \vlambda - \rho \nabla_{\text{\vmu}} \left( \myexpect_{\varpost}[\ell(\vtheta)] + \dkl{\varpost}{\prior} \right) 
= (1 - \rho) \vlambda - \rho \left( \nabla_{\text{\vmu}} \myexpect_{\varpost}[\ell(\vtheta)] - \vlambda_0 \right),
\label{eq:blr_additive}
\end{align}
where $\rho$ is a step size or learning rate. 

Under conjugacy, we can simplify further. There exists a natural parameter $\vlambda'$ such that
\begin{align}
\pi_{\text{\vlambda}'}(\vtheta) = \frac{1}{\mathcal Z} \exp(-\loss(\vtheta)) \prior(\vtheta) 
\Rightarrow \loss(\vtheta) &= \log \prior (\vtheta) - \log \pi_{\text{\vlambda}'}(\vtheta) - \log \mathcal Z \\
&= \langle \vlambda_0 - \vlambda', \vT(\vtheta) \rangle + A(\vlambda') - A(\vlambda_0) - \log \mathcal Z 
\end{align}
Thus, \smash{\( \myexpect_{\varpost} \loss(\vtheta) = \langle \vlambda_0 - \vlambda', \vmu (\vlambda) \rangle + \text{const.}\)}, and \(\nabla_{\text{\vmu}}\myexpect_{\pi_{\text{\vlambda}}}[\loss(\vtheta)] = \vlambda_0 - \vlambda'\), so \cref{eq:blr_additive} simplifies to
\footnote{Writing \Cref{eq:blr_additive} in terms of densities by substituting the update for $\vlambda$ into the exponential family form of $\varpost$ results in a multiplicative BLR update, which under conjugacy simplifies to 
\begin{align}  \label{eq:blr_measure}
\pi_{\text{\vlambda}}(\vtheta) 
\gets \pi_{\text{\vlambda}}(\vtheta)^{1-\rho} \pi_{\text{\vlambda}'}(\vtheta)^{\rho}
= \frac{1}{\mathcal{Z}_\rho } \pi_{\text{\vlambda}}(\vtheta)^{1-\rho} \left[\exp(-\loss(\vtheta))\prior(\vtheta) \right]^{\rho}.
\end{align}
}
\[
\vlambda \gets (1-\rho)\vlambda + \rho \vlambda'.
\]
For conjugate expfam models, a single BLR step coincides with classical Bayes' rule. More generally, outside conjugacy, the BLR optimizes the Bayesian objective whose minimizer over $\mathcal{P}$ is the exact posterior. Many algorithms from optimization, statistics, and deep learning arise as special cases by varying the form of the exponential family. New algorithms have also been inspired by the general form of the BLR, including the Variational Online Newton algorithm that we extend in \cref{sec:qvon_summary} and App.~\ref{sec:qvon} to exemplify our proposed generalized BLR.

\subsection{Limitations of the KLD and the BLR}

KLD geometry has certain limitations. The KLD requires absolute continuity: if $\varpost$ places mass outside the support of $\prior$, then $\dkl{\varpost}{\prior}=+\infty$. Thus posterior families with bounded, parameter-dependent support, and more generally families that cannot be represented as ordinary exponential families with fixed support, fall outside the scope of the KL-based BLR. Figs.~\ref{fig:fig} and~\ref{fig:fig3} illustrate why bounded-support families are useful: they can avoid sampling high-loss regions and allow the variational mean to approach MAP solutions more closely than Gaussian approximations.

How then can we generalize the KLD so as to preserve the BLR's efficient, additive updating? 
In \cref{sec:genblr} we show that Fenchel-Young discrepancies mediate exactly such a generalization. They yield generalized maxent families with well-defined dual parameters and generalized canonical gradients, allowing BLR-style updates under bounded, parameter-dependent support and beyond. 
Within the class of updates obtained by minimizing the sum of expected cumulative loss and an \(f\)-divergence, sequential coherence as defined by \citet{bissiri2016general} singles out the Kullback-Leibler divergence. In that sense, an \(f\)-divergence generalization of the KLD, like the \(f\)-divergence-regularized updates of \citet{alquier2021non}, cannot be used to define Bayes-like additive updates. 
Our mixed-space Bregman generalization of the KLD avoids this problem. The generalized BLR respects a dual-space analogue of sequential coherence: losses additively tilt the reference score, so processing losses sequentially or jointly yields the same tilted score. 
An interesting question that we hope to address in the future is whether mixed-space Bregman divergences uniquely recover BLR-style updates when we also wish to preserve the linear expectation of the loss term in \cref{eq:dv_formula}.

\section{Fenchel-Young BLR}\label{sec:genblr}

To derive a generalized BLR beyond KL geometry, first we introduce mixed-space Bregman divergences to which we refer interchangeably as Fenchel-Young discrepancies (FYD). We call \(\mathbb D_\Omega(\eta,\pi)\) a \emph{mixed-space} Bregman because its two arguments belong to different spaces: \smash{\(\eta\)} is a dual score, while \smash{\(\pi\)} is a density. This generalization of the KLD forms the basis of our proposal and allows our construction to remain meaningful for generalized entropies that may be nonsmooth, extended-valued, or have parameter-dependent support. We show how FYDs lead to generalized maxent families, for which we can define dual coordinates. Finally, we derive a generalized variational objective and learning rule. FYDs are also studied as discriminative losses by \citet{blondel2019learning, martins2022sparse}, and their information-geometric properties are discussed in \citet{amari2010information, nielsen2022statistical}. Here, we use them in a novel way to generalize the Bayesian learning rule optimizing the generalized objective defined in \citet{fyvi}.

\subsection{Generalizing the KLD to the Fenchel-Young Discrepancy} \label{ssec:fyd}

Fix a proper convex function $\Omega: \mathcal P \rightarrow  \real \cup \{ \infty \}$ on densities. The Fenchel conjugate of \(\Omega\) 
\[
\Omega^*(\eta) = \sup_{\pi \in\mathcal P} ~ \myexpect_\pi[\eta(\vtheta)] - \Omega(\pi)
\]
acts on measurable scores \smash{\(\eta:\Theta\to\mathbb R\cup\{\pm\infty\}\)}, for which the pairing
\smash{\( \langle \eta,\pi\rangle = \myexpect_\pi[\eta(\vtheta)] = \int_\Theta \eta(\vtheta)\pi(\vtheta) dh(\vtheta)\)} is well defined. 
For any \(\eta\in\operatorname{dom}\Omega^*\) and \(\pi\in\operatorname{dom}\Omega\) with finite pairing, the Fenchel-Young discrepancy is \citep{martins2022sparse, amari2010information}
\begin{align}\label{eq:fyd}
\mathbb{D}_\Omega(\eta,\pi) = \Omega^*(\eta) + \Omega(\pi) - \myexpect_\pi[\eta(\vtheta)].
\end{align}
The first argument is a reference score, not necessarily the log-density of a normalized prior. 
Since every \(\pi\in\mathcal P\) integrates to one,
\( 
\Omega^*(\eta+c) = \sup_{\pi\in\mathcal P} \left\{ \myexpect_\pi[\eta(\vtheta)]+c-\Omega(\pi) \right\}
= \Omega^*(\eta)+c,
\)
and therefore \(\mathbb D_\Omega(\eta+c,\pi)=\mathbb D_\Omega(\eta,\pi)\).
For general \(\Omega\), when the supremum defining \(\Omega^*(\eta)\) is attained, any maximizer
\( 
\pi_\eta \in \argmax_{\pi\in\mathcal P} \left\{ \myexpect_\pi[\eta(\vtheta)]-\Omega(\pi) \right\}
\)
is a density induced by the score \(\eta\). Equivalently, \(\pi_\eta\in\partial\Omega^*(\eta)\) with respect to the
\((\eta,\pi)\) pairing. 

\subsection{Generalizing the Exponential Family}
\label{ssec:maxent}

Dual coordinates related to the FYD can be defined for a much larger class of distributions that the expfam, including deformed exponential families \citep{amari2016information, naudts2002deformed}. Our generalization preserves the same logic we followed in \cref{ssec:expfam} to derive exponential families and characterize their dual coordinates. We begin with a finite-dimensional $\eta_{\text{\vlambda}}$, thereby identify a convex potential $\Psi^*(\vlambda) = \Omega^*(\eta_{\text{\vlambda}})$, and only when the supremum defining $\Omega^*(\eta_{\text{\vlambda}})$ is attained by a density, do we recover the corresponding member $\pi_{\text{\vlambda}}$ of the induced maximum-entropy family. The generalized Fisher we use is distinct from both the invariant and conformally flat geometries of \citet{amari2012geometry}, and it is a generalization of the metric associated with the derivation of deformed exponential families subject to linear (rather than escort) expectation constraints discussed in \citet{naudts2011generalised}.

Starting with a fixed measurable statistic $\vT(\vtheta)$ as in \cref{eq:suff_stat}, define the moment vector $\vmu = \myexpect_{\pi}[\vT(\vtheta)]$, and associate with it the Lagrange multiplier $\vlambda \in \mathbb{R}^P$. The corresponding finite-dimensional score family is $\eta_{\text{\vlambda}}(\vtheta)=\langle\vlambda,\vT(\vtheta)\rangle
$, defined up to an additive constant.\footnote{An additional constant statistic $T_0 = 1$ in $\vT$ can absorb normalization constants into $\vlambda$.} We define $\Psi$ to assign to each moment vector $\vmu$ the optimal value of the corresponding constrained minimization problem
\begin{align}\label{eq:maxent_psi}
    \Psi(\vmu) =
    \inf_{\pi\in\mathcal P} \Omega(\pi)
     \text{ s.t. } \underset{\pi}{\myexpect} [\vT(\vtheta)] = \vmu, \text{ with domain } \mathcal M = \left\{\underset{\pi}{\myexpect}[\vT(\vtheta)]: \pi \in \mathcal P, \Omega(\pi) < \infty \right\}.
\end{align}
We do not assume that the infimum in \Cref{eq:maxent_psi} is attained for every $\vmu$. When $\Omega=-\mathcal H$, \Cref{eq:maxent_psi} reduces to the classical Shannon maxent, and under the constant base measure convention of \cref{ssec:fyd}, whenever the minimizer exists it is the expfam in \cref{eq:expfam} with natural parameter $\vlambda = \nabla A^*(\vmu)$. In that case, $\Psi(\vmu) = A^*(\mu)$ and $\Psi^*(\vlambda) = A(\vlambda)$. 

\begin{prop}\label{prop:psi_star}
For every $\vlambda \in \real^P$,
\( 
\Psi^*(\vlambda) = \Omega^*(\eta_{\text{\vlambda}}).
\)
\end{prop}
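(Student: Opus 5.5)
The plan is to unfold both sides as suprema and show they are the same supremum, regrouped. By definition of the convex conjugate on $\real^P$,
\[
\Psi^*(\vlambda) = \sup_{\vmu\in\mathcal M} \left\{ \langle \vlambda,\vmu\rangle - \Psi(\vmu) \right\},
\]
where the restriction to $\mathcal M$ is harmless because $\Psi(\vmu)=+\infty$ off $\mathcal M$, the constrained problem in \cref{eq:maxent_psi} being then infeasible or having only $\Omega=\infty$ candidates. Substituting the definition of $\Psi$ as an infimum, and using $-\inf = \sup(-\cdot)$, gives
\[
\Psi^*(\vlambda) = \sup_{\vmu\in\mathcal M}\ \sup_{\pi\in\mathcal P:\ \myexpect_\pi[\vT]=\vmu} \left\{ \langle\vlambda,\vmu\rangle - \Omega(\pi) \right\}.
\]
On the feasible set, $\langle\vlambda,\vmu\rangle = \langle \vlambda, \myexpect_\pi[\vT(\vtheta)]\rangle = \myexpect_\pi[\eta_{\text{\vlambda}}(\vtheta)]$ by linearity of the integral. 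So the inner objective no longer depends on $\vmu$ explicitly, only through the constraint.

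Next I merge the two suprema. The sets $\{\pi\in\mathcal P:\myexpect_\pi[\vT]=\vmu\}$, for $\vmu$ ranging over $\mathcal M$, partition the set of $\pi\in\mathcal P$ with $\Omega(\pi)<\infty$ and finite moments. The remaining densities, those with $\Omega(\pi)=+\infty$, contribute $-\infty$ to the supremum defining $\Omega^*(\eta_{\text{\vlambda}})$ and can be dropped. An iterated supremum over a partition equals the supremum over the union, hence
\[
\Psi^*(\vlambda)=\sup_{\pi\in\operatorname{dom}\Omega}\left\{\myexpect_\pi[\eta_{\text{\vlambda}}(\vtheta)]-\Omega(\pi)\right\}=\Omega^*(\eta_{\text{\vlambda}}).
\]
No attainment of either supremum is used, consistent with the remark that the infimum in \cref{eq:maxent_psi} need not be attained. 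The identity holds in $\real\cup\{+\infty\}$, including the case where both sides are $+\infty$.

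The main obstacle is bookkeeping of integrability rather than any deep step. The pairing $\myexpect_\pi[\eta_{\text{\vlambda}}]$ must be well defined exactly when $\myexpect_\pi[\vT]$ is. I will interpret $\mathcal P$ in the definition of $\Omega^*$ (restricted to finite-dimensional scores) and in \cref{eq:maxent_psi} as the densities for which $\vT$ is integrable, so that $\mathcal M$ consists of finite vectors and linearity applies. One subtlety needs a sentence: for particular $\vlambda$, $\langle\vlambda,\vT\rangle$ may be integrable even though some component of $\vT$ is not. If the paper's convention allows such $\pi$ in $\Omega^*(\eta_{\text{\vlambda}})$, one would need to argue that they do not change the supremum. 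I would avoid this by adopting the componentwise-integrability convention implicit in the definition of $\mathcal M$. Finally, the additive-constant ambiguity of $\eta_{\text{\vlambda}}$ is consistent on both sides, since $\Omega^*(\eta+c)=\Omega^*(\eta)+c$ as shown in \cref{ssec:fyd}. A constant statistic $T_0=1$ shifts $\Psi^*$ by the same $c$.
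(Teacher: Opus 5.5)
Your proposal is correct and follows essentially the same route as the paper. In both arguments you unfold $\Psi^*$ as a supremum over $\mathcal M$, replace $-\Psi$ by a supremum over moment-feasible densities, use linearity to rewrite $\langle \lambda, \myexpect_\pi[\vT]\rangle$ as $\myexpect_\pi[\eta_\lambda]$, and merge the iterated supremum into a single supremum over $\mathcal P$. Your extra bookkeeping (componentwise integrability of $\vT$, discarding densities with $\Omega(\pi)=+\infty$, no attainment needed) only makes explicit conventions that the paper's one-line chain of equalities leaves implicit.
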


\begin{proof}
By definition of the convex conjugate,
\begin{flalign*}
\Psi^*(\vlambda) &= \sup_{\text{\vmu} \in \mathcal M} \left\{ \langle \vlambda, \vmu \rangle - \Psi(\vmu) \right\} 
= \sup_{\text{\vmu} \in \mathcal M} \ \  \sup_{\pi \in \mathcal P: \myexpect_{\pi}[\text{\vT}] = \text{\vmu}} \left\{ \langle \vlambda, \vmu \rangle - \Omega(\pi) \right\} \\
&= \sup_{\pi \in \mathcal P} \left\{ \left \langle \vlambda, \underset{\pi}{\myexpect} [\vT(\vtheta)] \right \rangle - \Omega(\pi) \right\} 
= \sup_{\pi \in \mathcal P} \left\{ \underset{\pi}{\myexpect} [\eta_{\text{\vlambda}}(\vtheta)] - \Omega(\pi) \right\}
= \Omega^*(\eta_{\text{\vlambda}}). \qedhere
\end{flalign*}
\end{proof}
\cref{prop:psi_star} identifies the finite-dimensional convex potential associated with the score family. However, it does not yet identify a family of densities, it only shows how the infinite dimensional moment problem projects to canonical coordinates.
Next, we determine when a density obtained from a finite dimensional score solves the infinite dimensional problem in \cref{eq:maxent_psi}.
\begin{prop} \label{prop:score_optimality}
Assume that for $\vlambda \in \real^P$ the supremum defining $\Omega^*(\eta_{\text{\vlambda}})$ is attained by $\pi_{\text{\vlambda}} \in \mathcal P$:
\(
\Omega^*(\eta_{\text{\vlambda}}) = \myexpect_{\pi_{\text{\vlambda}}}[\eta_{\text{\vlambda}}(\vtheta)] - \Omega(\pi_{\text{\vlambda}})
\).
Define
\(
\vmu(\vlambda) = \myexpect_{\pi_{\text{\vlambda}}}[\vT(\vtheta)]
\).
Then $\pi_{\text{\vlambda}}$ minimizes \cref{eq:maxent_psi}. 
\end{prop}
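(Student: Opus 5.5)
The plan is a direct comparison argument that uses only the optimality of \(\pi_{\text{\vlambda}}\) in the supremum defining \(\Omega^*(\eta_{\text{\vlambda}})\) and the linearity of the pairing in the moments. I will show that \(\pi_{\text{\vlambda}}\) is feasible for \cref{eq:maxent_psi} at the moment vector \(\vmu=\vmu(\vlambda)\), and that no feasible density with those moments has smaller \(\Omega\).

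First, feasibility. By definition \(\myexpect_{\pi_{\text{\vlambda}}}[\vT(\vtheta)]=\vmu(\vlambda)\). The attained identity \(\Omega^*(\eta_{\text{\vlambda}})=\myexpect_{\pi_{\text{\vlambda}}}[\eta_{\text{\vlambda}}]-\Omega(\pi_{\text{\vlambda}})\) involves finite quantities, since the pairing is finite and \(\Omega\) is proper. Hence \(\Omega(\pi_{\text{\vlambda}})<\infty\), and so \(\vmu(\vlambda)\in\mathcal M\).

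Second, optimality. Take any \(\pi\in\mathcal P\) with \(\myexpect_\pi[\vT]=\vmu(\vlambda\)\). If \(\Omega(\pi)=\infty\) there is nothing to prove, so assume it is finite. By the definition of the supremum,
\[
\myexpect_\pi[\eta_{\text{\vlambda}}]-\Omega(\pi)\le\Omega^*(\eta_{\text{\vlambda}})=\myexpect_{\pi_{\text{\vlambda}}}[\eta_{\text{\vlambda}}]-\Omega(\pi_{\text{\vlambda}}).
\]
Because \(\eta_{\text{\vlambda}}=\langle\vlambda,\vT\rangle\) is linear in \(\vT\), both pairings equal \(\langle\vlambda,\vmu(\vlambda)\rangle\). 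Cancelling this common term gives \(\Omega(\pi_{\text{\vlambda}})\le\Omega(\pi)\). Taking the infimum over all such \(\pi\) yields \(\Psi(\vmu(\vlambda))=\Omega(\pi_{\text{\vlambda}})\), so \(\pi_{\text{\vlambda}}\) attains \cref{eq:maxent_psi}.

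As a by-product, this gives \(\Psi^*(\vlambda)=\langle\vlambda,\vmu(\vlambda)\rangle-\Psi(\vmu(\vlambda))\), consistent with \cref{prop:psi_star}. That is, \(\vlambda\) and \(\vmu(\vlambda)\) satisfy Fenchel–Young equality for the pair \((\Psi,\Psi^*)\), so \(\vmu(\vlambda)\in\partial\Psi^*(\vlambda)\).

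The main obstacle is a technical one: justifying that \(\myexpect_\pi[\eta_{\text{\vlambda}}]=\langle\vlambda,\myexpect_\pi[\vT]\rangle\). This needs each \(T_i\) to be \(\pi\)-integrable, which the moment constraint implicitly assumes. Under that assumption, linearity of the integral settles it.
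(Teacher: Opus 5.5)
Your proposal is correct and follows the paper's proof: use the optimality inequality from the supremum defining \(\Omega^*(\eta_{\text{\vlambda}})\), then cancel the pairings, which both equal \(\langle\vlambda,\vmu(\vlambda)\rangle\) for densities with matching moments. Your explicit feasibility check (\(\Omega(\pi_{\text{\vlambda}})<\infty\), so \(\vmu(\vlambda)\in\mathcal M\)) and the Fenchel--Young by-product are minor additions that the paper leaves implicit.
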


\begin{proof}
By the optimality of $\pi_{\text{\vlambda}}$ in the definition of $\Omega^*(\eta_{\text{\vlambda}})$, for every \smash{$\pi \in \mathcal P$},
\smash{\(
\Omega(\pi) - \myexpect_{\pi}[\eta_{\text{\vlambda}}(\vtheta)]
\ge \Omega(\pi_{\text{\vlambda}}) - \myexpect_{\pi_{\text{\vlambda}}}[\eta_{\text{\vlambda}}(\vtheta)]
\)}.
Let $\pi$ satisfy the same moment constraints as $\pi_{\text{\vlambda}}$, namely $\myexpect_{\pi}[\vT (\vtheta)] = \vmu(\vlambda)$. Then
\begin{align}
\myexpect_{\pi}[\eta_{\text{\vlambda}}(\vtheta)]
&= \langle \vlambda, \myexpect_{\pi}[\vT(\vtheta)] \rangle 
= \langle \vlambda, \myexpect_{\pi_{\text{\vlambda}}}[\vT(\vtheta)] \rangle 
= \myexpect_{\pi_{\text{\vlambda}}}[\eta_{\text{\vlambda}}(\vtheta)]. 
\end{align}
Hence $\Omega(\pi) \ge \Omega(\pi_{\text{\vlambda}})$, which proves optimality. If $\Omega$ is strictly convex, this minimizer is unique.
\end{proof}
In \cref{prop:score_optimality} the infinite dimensional optimizer is certified by a finite dimensional score lying in the span of the constraint functions. This motivates the induced family
\[
\mathcal{Q}_{\Omega, \text{\vT}} = \left\{ \pi_{\text{\vlambda}} \in \mathcal P : \pi_{\text{\vlambda}} \text{ attains } \Omega^*(\eta_{\text{\vlambda}}), \vlambda \in \mathbb{R}^P \right\}.
\]
\cref{prop:psi_star,prop:score_optimality} are the only occasions where the full density-space duality is needed.
Once the induced family $Q_{\Omega,\text{\vT}}$ and the potential $\Psi^*$ are identified, all subsequent geometry and updates are finite-dimensional in the dual coordinates $(\vlambda,\vmu)$.
From here on, $\pi_\lambda$ denotes a member of $\mathcal{Q}_{\Omega, \text{\vT}}$. Whenever the maximizer is unique and the conjugate is differentiable at $\eta_{\text{\vlambda}}$, we write
\(
\pi_{\text{\vlambda}} = \nabla \Omega^*(\eta_{\text{\vlambda}})
\).
When $\Psi^*$ is differentiable at $\vlambda$ and the corresponding optimizer $\pi_{\text{\vlambda}}$ has finite moments,
\[
\vmu(\vlambda) = \nabla_{\text{\vlambda}} \Psi^*(\vlambda)
= \mathbb{E}_{\pi_{\text{\vlambda}}}[\vT(\vtheta)].
\]
We further write \( \vlambda(\vmu) = \nabla_{\text{\vmu}} \Psi(\vmu)\).
Thus $\vlambda$ and $\vmu$ play the roles of canonical and expectation parameters, as in exponential families, but for the larger class of maxent families induced by $\Omega$.

Finally, whenever $\pi_{\text{\vlambda}} \in \mathcal{Q}_{\Omega, \text{\vT}}$ solves \cref{eq:maxent_psi} at $\vmu$, the FYD admits the mixed-space form
\[
\mathbb{D}_\Omega(\eta_{{\text{\vlambda}}_0},\pi_{\text{\vlambda}}) =
\Psi^*(\vlambda_0) + \Psi(\vmu) - \langle \vlambda_0, \vmu \rangle.
\]
Indeed, \cref{prop:psi_star} gives $\Omega^*(\eta_{{\text{\vlambda}}_0}) = \Psi^*(\vlambda_0)$, optimality of $\pi_{\text{\vlambda}}$ in \cref{eq:maxent_psi} gives $\Omega(\pi_{\text{\vlambda}}) = \Psi(\vmu)$, and \smash{$\mathbb{E}_{\pi_{\text{\vlambda}}}[\eta_{{\text{\vlambda}}_0}(\vtheta)] = \langle \vlambda_0, \vmu \rangle$}. This is directly analogous to the mixed-space Bregman form of the KLD in \cref{eq:kl_mixed_space} and enables BLR-style additive updates in canonical coordinates. Importantly, the first argument is now a reference score $\eta_{{\text{\vlambda}}_0}$, not necessarily a normalized density. When $\Omega = -\mathcal H$ we recover the classical exponential family. When $\Omega = \Omega_{2-q}$, the induced family is the deformed $q$-exponential (proven in App.~\ref{ssec:tsallis_maxent}). In App.~\ref{ssec:qvon}, we show the quadratic statistic $\vT(\vtheta)^\top = (\vtheta,\vtheta \vtheta^\top)$ yields the bounded-support and heavy-tailed $q$-Gaussian families used by the $q$-VON \cref{alg:qvon,alg:tvon}.

\subsection{Generalizing the Variational Objective}
Using FYDs and the generalized maxent families, the generalized  BLR objective  
\begin{align}\label{eq:fy_obj}
    \min_{\text{\vlambda} \in \Lambda } ~ \myexpect_{\text{\vtheta} \sim \varpost} \left[ \ell(\vparam) \right] + \mathbb{D}_\Omega(\eta_{\text{\vlambda}_0}, \varpost), \ \
    \Lambda = \left\{ \vlambda \in \real^P: \argmax_{\pi \in\mathcal P} \left( \myexpect_{\pi}[\eta_{\text{\vlambda}}(\vtheta)]-\Omega(\pi) \right) \neq \varnothing \right\},
\end{align}
recovers the BLR objective when $\Omega = -\mathcal{H}$ and $ \eta_{\text{\vlambda}_0} = \log \prior$.
Other choices of $\Omega$ amount to new objectives. 
From the definition of FYDs we can write \cref{eq:fy_obj} as
\begin{align}\label{eq:fy_obj2}
 \min_{\text{\vlambda} \in \Lambda} ~ \myexpect_{\text{\vtheta} \sim \varpost} \left[   \ell(\vparam) -\eta_{\text{\vlambda}_0}(\vtheta) \right] + \Omega(\varpost). 
\end{align}
This separation in terms of a linear expectation of $\eta_{\text{\vlambda}_0}$ and a generalized negentropy of $\varpost$ is not possible with $f$-divergences (except the KLD). 
Up to a $\varpost$-independent scalar \cref{eq:fy_obj} is the FYD \smash{\(\mathbb D_\Omega(\eta_{\text{\vlambda}_0}-\ell,\varpost)\)}.
Thus, observing a loss $\ell$ tilts the reference score from \(\eta_{\text{\vlambda}_0}\) to \(\eta_{\text{\vlambda}_0}-\ell\). Since losses compose associatively in dual space, processing losses sequentially or jointly results in the same score, preserving sequential coherence in score-space. 

\subsection{Generalizing the BLR}

\begin{thm}[Fenchel-Young BLR]\label{thm:fy_blr}
Let \(\mathcal Q_{\Omega, \text{\vT}}\), \(\Psi\), \(\Psi^*\), and the moment map \( \vmu(\vlambda)=\myexpect_{\pi_{\text{\vlambda}}}[\vT(\vtheta)] \) be as in \Cref{ssec:maxent,prop:psi_star,prop:score_optimality}. Let \(\Lambda_0\) be an open coordinate chart of the induced family \(\mathcal Q_{\Omega,\text{\vT}}\). Assume that \( \Psi^*\in C^2(\Lambda_0)\), and define a generalized Fisher\footnote{Note that this is different from the ``$q$-Fisher'' discussed by \citet{amari2012geometry} who consider instead the Hessian of the cumulant of the solution to an escort-expectation constrained variant of \cref{eq:maxent_psi}, which leads to a different geometry than ours when $\postdeformidx \ne 1$.}   \(\vF(\vlambda):=\nabla^2_{\text{\vlambda}}\Psi^*(\vlambda) \) invertible on \(\Lambda_0\). Assume also that the dual potential \(\Psi\) is differentiable on \(\vmu(\Lambda_0)\), with \( \nabla_{\text{\vmu}}\Psi(\vmu(\vlambda))=\vlambda . \) Fix \(\vlambda_0\in\Lambda_0\), and define \( \bar\ell(\vtheta)=\ell(\vtheta)-\eta_{\text{\vlambda}_0}(\vtheta) \). Assume that the map \(\vmu\mapsto \myexpect_{\pi_{\text{\vlambda}(\text{\vmu})}}[\bar\ell(\vtheta)] \) is differentiable on \(\vmu(\Lambda_0)\). Then the generalized canonical gradient satisfies
\begin{align}
\vF(\vlambda)^{-1}
\nabla_{\text{\vlambda}}
\left[ \myexpect_{\pi_{\text{\vlambda}}}[\ell(\vtheta)] + \mathbb D_\Omega(\eta_{\text{\vlambda}_0},\pi_{\text{\vlambda}})
\right] 
= \nabla_{\text{\vmu}} \myexpect_{\pi_{\text{\vlambda}(\text{\vmu})}}[\bar\ell(\vtheta)] \Big|_{\text{\vmu}=\text{\vmu}(\text{\vlambda})} + \vlambda .
\label{eq:fy_blr_gradient_identity}
\end{align}
Consequently, the canonical-gradient step is
\begin{align}
\vlambda_{t+1} \gets
\vlambda_t - \rho_t \vF(\vlambda_t)^{-1} \nabla_{\text{\vlambda}} \left( \underset{{\pi_{\text{\vlambda}}}}{\myexpect}  [\ell(\vtheta)] + \mathbb D_\Omega(\eta_{\text{\vlambda}_0},\pi_{\text{\vlambda}}) \right) \Big|_{\text{\vlambda}_t}
= (1-\rho_t)\vlambda_t - \rho_t \nabla_{\text{\vmu}} \underset{{\pi_{\text{\vlambda}(\text{\vmu})}}}{\myexpect} [\bar\ell(\vtheta)] \Big|_{\text{\vmu}_t} .
 \label{eq:non_blr}
\end{align}
In particular, any fixed point with \(\rho_t>0\) satisfies \( \vlambda^* = - \nabla_{\text{\vmu}} \myexpect_{\pi_{\text{\vlambda}(\text{\vmu})}}[\bar\ell(\vtheta)] \Big|_{\text{\vmu}=\text{\vmu}(\text{\vlambda}^*)} \).
\end{thm}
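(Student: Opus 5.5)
The plan is to move the whole objective into expectation coordinates, differentiate there, and then pull the gradient back to $\vlambda$ with the chain rule, exactly as in the Shannon case of \cref{ssec:blr}.

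\textbf{Step 1: rewrite the objective in $\vmu$.} Use the mixed-space form of the FYD established after \cref{prop:score_optimality}:
\[
\mathbb D_\Omega(\eta_{\text{\vlambda}_0},\pi_{\text{\vlambda}})=\Psi^*(\vlambda_0)+\Psi(\vmu)-\langle\vlambda_0,\vmu\rangle .
\]
This holds because $\pi_{\text{\vlambda}}\in\mathcal Q_{\Omega,\text{\vT}}$ solves \cref{eq:maxent_psi} at $\vmu=\vmu(\vlambda)$. Next, write $\langle\vlambda_0,\vmu\rangle=\myexpect_{\pi_{\text{\vlambda}}}[\eta_{\text{\vlambda}_0}(\vtheta)]$. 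The objective then becomes, up to the constant $\Psi^*(\vlambda_0)$,
\[
G(\vmu)=\myexpect_{\pi_{\text{\vlambda}(\text{\vmu})}}[\bar\ell(\vtheta)]+\Psi(\vmu).
\]
For this I need $\vlambda\mapsto\vmu(\vlambda)$ to be a bijection onto $\vmu(\Lambda_0)$ with inverse $\vlambda(\vmu)=\nabla_{\text{\vmu}}\Psi(\vmu)$. The hypothesis $\nabla\Psi(\vmu(\vlambda))=\vlambda$ gives injectivity directly, so $\pi_{\text{\vlambda}(\text{\vmu})}$ is well defined.

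\textbf{Step 2: differentiate in $\vmu$.} By hypothesis the first term of $G$ is differentiable. The gradient of $\Psi$ at $\vmu(\vlambda)$ is $\vlambda$. Hence
\[
\nabla_{\text{\vmu}}G=\nabla_{\text{\vmu}}\myexpect[\bar\ell]+\vlambda .
\]

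\textbf{Step 3: pull back to $\vlambda$ by the chain rule.} The objective as a function of $\vlambda$ is $G(\vmu(\vlambda))$. Since $\vmu(\vlambda)=\nabla\Psi^*(\vlambda)$ with $\Psi^*\in C^2(\Lambda_0)$, the Jacobian of $\vmu(\cdot)$ is $\nabla^2\Psi^*=\vF(\vlambda)$, which is symmetric. Therefore
\[
\nabla_{\text{\vlambda}}\bigl[G(\vmu(\vlambda))\bigr]=\vF(\vlambda)\,\nabla_{\text{\vmu}}G .
\]
Multiplying by $\vF^{-1}$, which is invertible by assumption, gives \cref{eq:fy_blr_gradient_identity}.

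\textbf{Main obstacle: justifying $\vmu(\vlambda)=\nabla\Psi^*(\vlambda)$ on $\Lambda_0$.} For each $\vlambda\in\Lambda_0$, \cref{prop:psi_star} and \cref{prop:score_optimality} give $\Psi^*(\vlambda)=\langle\vlambda,\vmu(\vlambda)\rangle-\Psi(\vmu(\vlambda))$. So the Fenchel--Young equality holds for $\Psi$, which means $\vmu(\vlambda)\in\partial\Psi^*(\vlambda)$. Differentiability of $\Psi^*$ then forces this subdifferential to be the singleton $\{\nabla\Psi^*(\vlambda)\}$. A related subtlety is whether the mixed-space form requires $\Psi(\vmu)<\infty$; this is automatic because $\Omega(\pi_{\text{\vlambda}})$ is finite at an attained supremum. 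I also use that $\vmu(\cdot)$ is $C^1$, so that the chain rule applies; this follows from $\Psi^*\in C^2$.

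\textbf{Step 4: the update and its fixed point.} Substituting the identity into $\vlambda_t-\rho_t(\cdot)$ gives \cref{eq:non_blr} by direct algebra. For the fixed-point claim, set $\vlambda_{t+1}=\vlambda_t=\vlambda^*$ and divide by $\rho_t>0$; this yields the stated expression for $\vlambda^*$.
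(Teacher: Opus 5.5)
Your proposal is correct and follows essentially the same route as the paper: rewrite the objective, up to the constant $\Psi^*(\vlambda_0)$, as $\myexpect_{\pi_{\text{\vlambda}}}[\bar\ell(\vtheta)]+\Psi(\vmu(\vlambda))$, apply the chain rule with $\partial\vmu/\partial\vlambda=\vF(\vlambda)$, use $\nabla_{\text{\vmu}}\Psi(\vmu(\vlambda))=\vlambda$, and multiply by $\vF(\vlambda)^{-1}$. Your Fenchel--Young/subdifferential argument that $\vmu(\vlambda)=\nabla\Psi^*(\vlambda)$ on $\Lambda_0$ is a small but welcome addition in rigor, since the paper simply asserts this identity in \cref{ssec:maxent}.
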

The proof is in App.~\ref{app:thm1_proof}.
The canonical parameter at convergence is matched with the gradient of the loss. Analogous to the fixed-point interpretation of the BLR, this shows that generalized canonical gradients are intrinsic to the variational problem, rather than an optimization choice. The update in \Cref{eq:non_blr} is one of our main contributions. We call it the \textbf{generalized BLR}, as it applies to families of distributions larger than the exponential family (see~\Cref{fig:fig}). When $\Omega = - \mathcal{H}$ we recover the BLR. 
If \(\eta_{\text{\vlambda}_0}(\vtheta)-\ell(\vtheta) =\eta_{\text{\vlambda}'}(\vtheta)+\mathrm{const.}\),
then
\( 
\vlambda_{t+1}=(1-\rho_t)\vlambda_t+\rho_t\vlambda',
\)
so a unit step recovers the generalized conjugate posterior induced by the tilted score.


\section{Example: Tsallis Negentropy and $q$-VON}\label{sec:qvon_summary}

\begin{figure*}[t]
\noindent
\begin{minipage}[t]{\dimexpr0.5\textwidth-0.5\columnsep\relax}
\begin{algorithm}[H]
\caption{$\postdeformidx$-VON (bounded-support $\postdeformidx<1$)}
\label{alg:qvon}
\begin{algorithmic}[1]
\REQUIRE Learning rates $\rho_t$, weight-decay $\delta>0$
\REQUIRE Momentum $\beta_1,\beta_2\in[0,1)$
\REQUIRE Hessian init $h_0>0$, \\ \qquad \ \ \blue{Deformation index $\postdeformidx<1$}
\COMPUTE \blue{ $\kappa(\postdeformidx,D) \gets \frac{R^2}{D + 2(2-\postdeformidx)/(1-\postdeformidx)}$, \\
                \qquad \ \ \ \  {$\gamma \gets D + 2/(1-\postdeformidx) $}} 
\vspace{0.13cm}
\INIT $\vm \gets$ (NN-weights),   $\vh \gets h_0$, $\vg \gets 0$, $\zeta \gets N$
\vspace{0.13cm}
\INIT $\vs \gets  \zeta(\vh + \delta)$ \hfill (diagonal of $\vS$)
\INIT \smash{\blue{$\vsigma^2 \gets \left(  s_1 \cdot \hdots \cdot s_D \right)^{1/\gamma} / \vs$}} \hfill (scale)
\FOR{$t=1,2,\ldots$}
  \STATE $\widehat \vg \gets \nabla_{\text{\vtheta}} \ell(\vtheta)$, \hfill $\vtheta \sim \blue{\Qgauss(\vm,\mathrm{diag}(\vsigma^2))}$
  \STATE $\widehat \vh \gets \blue{ \kappa^{-1}} \widehat \vg \cdot (\vtheta-\vm) / \vsigma^2$ \\
     (estimator for $\myexpect_{\blue{\tilde \pi}}[\text{diag}(\nabla^2 \ell)]$ )
  \STATE $\vg \gets \beta_1 \vg + (1-\beta_1)\widehat \vg$
  \STATE $\vh \gets \beta_2 \vh + (1-\beta_2)\widehat \vh$
  \STATE $\vm \gets \vm - \rho_t (\vg+\delta\vm) / (\vh+\delta)$
  \STATE $\vs \gets \zeta(\vh + \delta)$
  \STATE \smash{\blue{$\vsigma^2 \gets \left( s_1 \cdot \hdots \cdot s_D \right)^{1/\gamma} / \vs$}}
\ENDFOR
\STATE \textbf{return} $\vm,\vsigma^2$
\end{algorithmic}
\end{algorithm}
\end{minipage}\hfill
\begin{minipage}[t]{\dimexpr0.5\textwidth-0.5\columnsep\relax}
\begin{algorithm}[H]
\caption{Student-$t$-VON (heavytailed $\postdeformidx>1$)}
\label{alg:tvon}
\begin{algorithmic}[1]
\REQUIRE Learning rates $\rho_t$, weight-decay $\delta>0$
\REQUIRE Momentum $\beta_1,\beta_2\in[0,1)$
\REQUIRE Hessian init $h_0>0$, \\ \qquad \ \  \blue{Degrees of freedom $\nu>2$}
\COMPUTE  \blue{\\ \qquad $\kappa_t \gets \frac{\nu+D}{\nu}\left[(\nu+D)^{\frac{D}{2}}\pi^{\frac{D}{2}}\frac{\Gamma(\frac{\nu}{2})}{\Gamma(\frac{\nu+D}{2})}\right]^{\frac{2}{\nu}}$}
\INIT $\vm \gets$ (NN-weights),  $\vh \gets h_0$,  $\vg \gets 0$, $\zeta \gets N$
\INIT $\vs \gets  \zeta(\vh + \delta)$ \hfill (diagonal of $\vS$)
\INIT \smash{\blue{$\vsigma^2 \gets \kappa_t \left( s_1 \cdot \hdots \cdot s_D \right) ^{-1/\nu} / \vs$}} \hfill (scale)
\FOR{$t=1,2,\ldots$}
  \STATE $\widehat \vg \gets \nabla_{\text{\vparam}} \ell(\vparam)$, \hfill $\vparam \sim \blue{\mathcal{T}_{\nu}(\vm,\mathrm{diag}(\vsigma^2))}$
  \STATE $\widehat \vh \gets \blue{\frac{\nu-2}{\nu}} \widehat \vg \cdot (\vtheta-\vm) / \vsigma^2$  \\ (estimator for $\myexpect_{\blue{\tilde \pi}}[\text{diag}(\nabla^2 \ell)]$)
  \STATE $\vg \gets \beta_1 \vg + (1-\beta_1)\widehat \vg$
  \STATE $\vh \gets \beta_2 \vh + (1-\beta_2)\widehat \vh$
  \STATE $\vm \gets \vm - \rho_t\,(\vg+\delta\vm) / (\vh+\delta)$
  \STATE $\vs \gets \zeta(\vh+\delta)$
  \STATE \smash{\blue{$\vsigma^2 \gets \kappa_t \left( s_1 \cdot \hdots \cdot s_D \right) ^{-1/\nu} / \vs$}}
\ENDFOR
\STATE \textbf{return} $\vm,\vsigma^2$
\end{algorithmic}
\end{algorithm}
\end{minipage}
\end{figure*}

As our main example we replace the Shannon negentropy by the Tsallis negentropy $\Omega_{2-\postdeformidx}$ and derive \Cref{alg:qvon,alg:tvon} (see App.~\ref{sec:qvon} for details), which we apply to CIFAR-100, CIFAR-10, MNIST (see App.~\ref{app:experiments}), and to Bayesian logistic regression on a UCI dataset (\cref{fig:fig3}). The induced maxent family is the $\postdeformidx$-exponential family (see App.~\ref{ssec:tsallis_maxent} for a proof),
\begin{align}
    \varpost(\vtheta) =
    \exp_{\postdeformidx} \left(\vlambda^\top \vT(\vtheta) - A_{2-\postdeformidx}(\vlambda)\right),
    \label{eq:deformed_exp_families}
\end{align}
where $\exp_{\postdeformidx}(u) =  [1 + (1-\postdeformidx) u]_+^{1/(1-\postdeformidx)} $ when $q\neq1$. We take \(\Theta=\mathbb R^D\), with Lebesgue reference
measure, so \(\vtheta\in\mathbb R^D\). For the quadratic statistic $\vT(\vtheta)^\top=(\vtheta,\vtheta\vtheta^\top)$ with canonical parameter $\vlambda^\top=(\vS\vm,-\tfrac12\vS)$, this yields a location-scale $(\vm,\vSigma)$ family of $\postdeformidx$-Gaussians. The quadratic geometry is unchanged, but $\postdeformidx$ controls support and tail shapes: $\postdeformidx<1$ gives parameter-dependent bounded-support \cref{alg:qvon}, and $\postdeformidx>1$ gives heavy-tailed \cref{alg:tvon} with $\nu=2/(\postdeformidx-1)-D$. 
In App.~\ref{app:q_gauss} we review the canonical, location-scale, and expectation parametrizations of bounded-support and heavy-tailed $q$-Gaussians, as well as a closed-form FYD for the heavy-tailed case, which we use to evaluate the objective optimized by \cref{alg:tvon}. In App.~\ref{app:bonnet_price_stein} we derive the reparameterized gradient estimators underlying lines 2-3 of \Cref{alg:qvon,alg:tvon}.

Specializing the generalized BLR update~\eqref{eq:non_blr} to quadratic families with expectation parameters $(\vmu,\vM)=(\myexpect_{\varpost}[\vtheta],\myexpect_{\varpost}[\vtheta\vtheta^\top])$, we obtain Newton-like canonical updates. After converting gradients with respect to $(\vmu,\vM)$ to gradients with respect to $(\vm,\vSigma)$ and applying Bonnet- and Price-type identities (App.~\ref{app:bonnet_price_stein}), both the bounded-support and heavy-tailed cases take the common form (App.~\ref{sec:qvon})
\begin{align}
    \vS_{t+1} = (1-\rho_t)\vS_t + \rho_t \myexpect_{\tilde{\pi}_t^{2-\postdeformidx}} \left[\nabla_{\text{\vtheta}}^2 \bar\ell(\vtheta)\right], \ \ \
    \vm_{t+1} = \vm_t - \rho_t \vS_{t+1}^{-1} \myexpect_{\pi_t}\left[\nabla_{\text{\vtheta}}\bar\ell(\vtheta)\right],
    \label{eq:qvon}
\end{align}
where $\tilde{\pi}^{2-\postdeformidx}$ denotes the $(2-\postdeformidx)$-escort of $\pi$. The updates in \cref{eq:qvon} are strikingly similar in form to those of VON and IVON \citep{khan2018fast,osawa2019practical,shen2402variational}. The difference between $\postdeformidx<1$ and $\postdeformidx>1$ lies in the relation between $(\vS,\vm)$ and the location-scale parametrizations, and in the corresponding escort reweighting appearing in the Price identity. The complete derivations, including the bounded-support and Student-$t$ closed forms, and the escort-based Hessian identities are given in Apps.~\ref{sec:qvon}, \ref{app:q_gauss} and \ref{app:bonnet_price_stein}. \Cref{alg:qvon,alg:tvon} give practical diagonal implementations of these updates. Relative to IVON, only the sampling law, the constants relating $\vS$ to the scale, and the corresponding diagonal Stein estimator change. As in IVON, we use minibatch stochastic gradients, a diagonal precision $\vS=\mathrm{diag}(\vs)$, the reparametrization $\vs=\zeta(\vh+\delta)$, and momentum terms for stable deep-learning optimization. While the CIFAR-100 experiment we show in App.~\ref{app:experiments} is not a large-scale benchmark, there is no structural obstacle to scaling further, and applying $q$-VON to transformer- or LLM-scale training is an empirical engineering question, which we leave to future work.


\paragraph{Mean-Field.} In high-dimensional models we use a factorized mean-field $q$-VON-MF, where each parameter follows a 1D $q$-Gaussian. We replace $\Omega$ in~\cref{eq:fy_obj} with a sum of 1D terms. The only change to \Cref{alg:qvon,alg:tvon} is to use 1D constants (set $D=1$), and update each variance parameter separately rather than through the product \smash{$s_1\cdots s_D$} (App.~\ref{ssec:meanfield}). This addresses the problem that finite covariance in high-dimensional settings forces heavy-tailed $q$-values to be close to 1, and that $D$ can dominate the max support radius in bounded-support settings. An exciting future direction that our work opens is training deep networks whose parameters have variable levels of stochasticity; for example, each layer may be associated with distinct, learnable or tunable support- and tail-behaviors.

\section{Experiments}\label{sec:experiments_summary}

\begin{figure*}[ht] 
    \begin{center}
    \includegraphics[width=\textwidth]{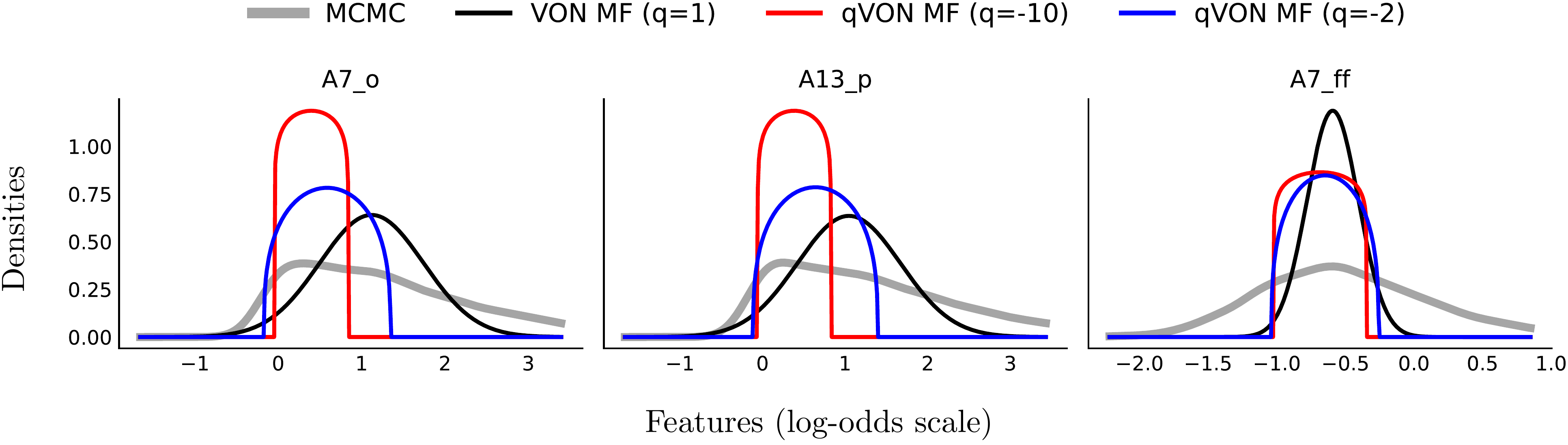}
    \caption{Bounded-support, mean-field variational posterior (red, blue) means can approach the posterior mode of features A7\_0, A13\_p and A7\_ff, by changing the value of $q$ in Bayesian logistic regression on the UCI Credit Approval dataset. This is useful when posterior marginals (grey) are skewed, and thus the empirical mode is badly approximated by a Gaussian posterior (black). }
    \label{fig:fig3}
\end{center}
\end{figure*}

We evaluate $q$-VON on Bayesian logistic regression and image classification. Detailed experiment descriptions are in App.~\ref{app:experiments}, and results on varying $q$ are in App.~\ref{app:varying_q}. 
On Bayesian logistic regression, shrinking the max support radius by decreasing $\postdeformidx<1$ lets the $q$-VON approximation closely track the MCMC posterior mode, in contrast to the Gaussian approximation (\Cref{fig:fig3}). 
On MNIST, CIFAR-10, and CIFAR-100 we compare $q$-VON and $q$-VON-MF against SGD, VSGD, IVON, and a mean-only baseline $q$-VSGD, using test accuracy, NLL, ECE, Brier score, and AUROC. Generally, $q$-VON improves over SGD and $q$-VSGD, showing that learning the posterior shape outperforms noise injection. The choice of tail behavior can trade off generalization and calibration: on MNIST, heavy-tailed $q$-VON has the highest accuracy while bounded-support $q$-VON has the best calibration. On CIFAR, $q$-VON-MF is competitive with (highly tuned) IVON without additional tuning. 
Our experiments demonstrate that generalized BLR updates can be implemented at the scale of modern neural-network optimizers while extending IVON to posterior families unavailable to the BLR. 

\bibliography{refs}






\newpage
\appendix



\startcontents[appendices]
\printcontents[appendices]{}{1}[2]{\section*{Appendix Contents}}

\newpage

\section{The sharp Gibbs bound forces the KLD}\label{app:kld}

\begin{prop}
Fix a prior density $\prior$ with respect to $h$. Suppose $\mathbb D(\varcand,\prior)$ is a loss-independent functional such that for every measurable loss $\loss:\Theta\to\real\cup\{+\infty\}$ with
\[
0 < \int e^{-\loss(\text{\vtheta})}\prior(\vtheta) dh(\vtheta) < \infty
\]
and every $\varcand \in \mathcal P$,
\[
-\log \int e^{-\loss(\text{\vtheta})}\prior(\vtheta) dh(\vtheta)
\le
\myexpect_{\varcand}[\loss(\vtheta)] + \mathbb D(\varcand,\prior),
\]
with equality at the Gibbs density
\[
\varcand_{\loss}(\vtheta) = \frac{e^{-\loss(\text{\vtheta})} \prior(\vtheta)}
{\int e^{-\loss(\text{\vtheta}')} \prior(\vtheta') dh(\vtheta')}.
\]
Then for every $\varcand \ll \prior$ such that $\dkl{\varcand}{\prior} < \infty$, defining
\[
\loss_{\varcand}(\vtheta) = -\log\frac{\varcand(\vtheta)}{\prior(\vtheta)},
\]
we have
\[
\mathbb D(\varcand,\prior) = \dkl{\varcand}{\prior}.
\]
\end{prop}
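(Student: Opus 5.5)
The plan is to plug the specific loss $\loss_\varcand=-\log(\varcand/\prior)$ into the hypothesis and use the equality clause at the Gibbs density. First I check that $\loss_\varcand$ is an admissible loss. It is measurable and takes values in $\real\cup\{+\infty\}$: on $\{\varcand=0\}$ it equals $+\infty$, and on $\{\prior=0\}$ we may set $\loss_\varcand=+\infty$, which is harmless because these sets carry no $\prior$-mass in the integral. Then $e^{-\loss_\varcand}\prior=\varcand$ on $\{\prior>0\}$, and since $\varcand\ll\prior$, $\varcand$ vanishes $h$-a.e.\ on $\{\prior=0\}$. Hence
\[
\int e^{-\loss_\varcand(\text{\vtheta})}\prior(\vtheta)\,dh(\vtheta)=\int_{\{\prior>0\}}\varcand(\vtheta)\,dh(\vtheta)=1,
\]
which lies strictly between $0$ and $\infty$. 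So the hypothesis applies with normalizer $\mathcal Z=1$, and $-\log\mathcal Z=0$.

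Next I identify the Gibbs density. It is $\varcand_{\loss_\varcand}=e^{-\loss_\varcand}\prior/1=\varcand$ $h$-a.e., so it is the given $\varcand\in\mathcal P$ itself. The equality clause of the hypothesis, evaluated at this Gibbs density, then gives
\[
0=\myexpect_\varcand[\loss_\varcand(\vtheta)]+\mathbb D(\varcand,\prior).
\]
I then compute $\myexpect_\varcand[\loss_\varcand]=-\int\varcand\log(\varcand/\prior)\,dh=-\dkl{\varcand}{\prior}$. This is a finite number by assumption, so the sum above is well defined and not of the form $\infty-\infty$. Rearranging yields $\mathbb D(\varcand,\prior)=\dkl{\varcand}{\prior}$.

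The only delicate step is making the expectation rigorous. The set where $\loss_\varcand=+\infty$ has $\varcand$-measure zero, so it does not contribute. I also have to make sure the integral $\int\varcand\log(\varcand/\prior)\,dh$ is the Lebesgue integral of a quasi-integrable function. Its negative part is controlled by the elementary bound $x\log x\ge x-1$ applied with $x=\varcand/\prior$ against $\prior$, and finiteness of $\dkl{\varcand}{\prior}$ then makes the function integrable. With this in place, the equality at the Gibbs density is an honest identity between real numbers, and the conclusion follows. The inequality part of the hypothesis is not needed; only the equality case at the Gibbs density is used.
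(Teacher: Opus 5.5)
Your proof is correct and is essentially the paper's argument: substitute $\loss_\varcand$, note that the normalizer equals $1$ so the Gibbs density is $\varcand$ itself, and read off $0=-\dkl{\varcand}{\prior}+\mathbb D(\varcand,\prior)$ from the equality clause. The additional checks you supply are details the paper leaves implicit: the convention $\loss_\varcand=+\infty$ on $\{\prior=0\}$, admissibility of the loss, and quasi-integrability via $x\log x\ge x-1$.
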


\begin{proof}
We choose a loss whose Gibbs posterior is the candidate posterior we want, and then evaluate the variational identity there. Let $\loss = \loss_{\varcand}$. Then
\[
\int e^{-\loss_{\varcand}(\text{\vtheta})} \prior(\vtheta) dh(\vtheta)
= \int \varcand(\vtheta) dh(\vtheta) = 1,
\]
so the left-hand side of the inequality is $0$. Hence
\[
0 = \myexpect_{\varcand} \left[-\log\frac{\varcand(\vtheta)}{\prior(\vtheta)}\right] + \mathbb D(\varcand,\prior)
= -\dkl{\varcand}{\prior} + \mathbb D(\varcand,\prior).
\]
\end{proof}

\section{Maxent}\label{app:maxent}

\subsection{Expfam as max Shannon entropy}\label{ssec:shannon_maxent}
The following proof is based on \citet[Prop. 9.1.2]{lasota2013chaos}. 
\begin{prop}
Let $(\Theta, \mathcal{A}, h)$ be a measure space, and let
\[
\vT(\vtheta) = \left(T_1(\vtheta), ..., T_P(\vtheta)\right)
\]
be a measurable vector-valued function. Assume there exist
\[
\vmu = (\mu_1, ..., \mu_P) \qquad \text{and} \qquad \vlambda = (\lambda_1, ..., \lambda_P)
\]
such that
\[
\mu_i = \frac{\int_{\Theta} T_i(\vtheta)\exp \left(\langle \vlambda, \vT(\vtheta)\rangle \right) dh(\vtheta)}
{\int_{\Theta} \exp \left(\langle \vlambda, \vT(\vtheta) \rangle \right) dh(\vtheta)},
\qquad i=1, ..., P,
\]
and all of the integrals are finite. Then the maximum of the Shannon entropy
\[
\mathcal H(\pi) = -\int_{\Theta} \pi(\vtheta) \log \pi(\vtheta) dh(\vtheta)
\]
over all densities $\pi \in \mathcal{P}$ satisfying
\[
\mu_i = \int_{\Theta} T_i(\vtheta) \pi(\vtheta) dh(\vtheta),
\qquad i=1, ..., P,
\]
is attained at
\[
\pi_{\text{\vlambda}}(\vtheta) = \frac{\exp \left(\langle \vlambda, \vT(\vtheta) \rangle \right)}
{\int_{\Theta} \exp \left(\langle \vlambda, \vT(\vtheta) \rangle \right) dh(\vtheta)}.
\]
\end{prop}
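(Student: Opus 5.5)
The plan is a Gibbs-inequality argument: for any admissible $\pi$ with the prescribed moments, the entropy gap $\mathcal H(\pi_{\text{\vlambda}})-\mathcal H(\pi)$ will be shown to equal $\dkl{\pi}{\pi_{\text{\vlambda}}}\ge 0$. Write $Z(\vlambda)=\int_\Theta \exp(\langle\vlambda,\vT(\vtheta)\rangle)\,dh(\vtheta)$, which is finite and positive by hypothesis, so that $\log\pi_{\text{\vlambda}}(\vtheta)=\langle\vlambda,\vT(\vtheta)\rangle-\log Z(\vlambda)$. The first step is to check that $\pi_{\text{\vlambda}}$ itself is feasible. This holds because the hypothesis on $\mu_i$ is exactly $\myexpect_{\pi_{\text{\vlambda}}}[T_i]=\mu_i$. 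Then
\[
\mathcal H(\pi_{\text{\vlambda}}) = \log Z(\vlambda)-\langle\vlambda,\vmu\rangle ,
\]
which is finite because all the integrals are assumed finite.

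The second step takes an arbitrary feasible $\pi$ and uses the elementary inequality $u\log u\ge u-1$, or equivalently $\log x\le x-1$, pointwise. Concretely, on $\{\pi>0\}$ we have
\[
-\pi\log\pi \le -\pi\log\pi_{\text{\vlambda}} + \pi_{\text{\vlambda}} - \pi ,
\]
since this is $\pi\log(\pi_{\text{\vlambda}}/\pi)\le \pi_{\text{\vlambda}}-\pi$. Integrating with respect to $h$, the last two terms integrate to at most $1-1=0$. The first term integrates to
\[
\int -\pi\log\pi_{\text{\vlambda}}\,dh = -\langle\vlambda,\myexpect_\pi[\vT]\rangle+\log Z(\vlambda)=\log Z(\vlambda)-\langle\vlambda,\vmu\rangle ,
\]
and this step is where the moment constraints enter, mirroring \cref{prop:score_optimality}. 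Hence $\mathcal H(\pi)\le\mathcal H(\pi_{\text{\vlambda}})$ for every feasible $\pi$, so the maximum is attained at $\pi_{\text{\vlambda}}$. Equality forces $\pi=\pi_{\text{\vlambda}}$ $h$-a.e., which gives uniqueness as a bonus.

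The main obstacle is integrability rather than algebra. I need to ensure that splitting $\int\pi\log\pi_{\text{\vlambda}}$ into $\langle\vlambda,\myexpect_\pi[\vT]\rangle-\log Z$ is legitimate, and that $\mathcal H(\pi)$ is well defined, possibly as $-\infty$. Here $\myexpect_\pi[T_i]=\mu_i$ is finite by feasibility, so $\pi\log\pi_{\text{\vlambda}}$ is integrable. The pointwise bound $-\pi\log\pi\le(\text{integrable function})$ then shows that the positive part of $-\pi\log\pi$ is integrable, so $\mathcal H(\pi)$ exists in $[-\infty,\infty)$ and the inequality is valid even when $\mathcal H(\pi)=-\infty$. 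Points where $\pi=0$ contribute $0\le\pi_{\text{\vlambda}}$ on the right-hand side and cause no trouble.

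An equivalent shortcut is to invoke \cref{prop:score_optimality} directly with $\Omega=-\mathcal H$. To do so one verifies that $\pi_{\text{\vlambda}}$ attains $\Omega^*(\eta_{\text{\vlambda}})=\log Z(\vlambda)$, which is the Donsker--Varadhan identity \cref{eq:dv_from_fy} with $\loss=-\eta_{\text{\vlambda}}$ and uniform reference. I would nonetheless present the self-contained Gibbs-inequality argument above, because it handles the integrability issues explicitly.
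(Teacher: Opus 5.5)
Your proof is correct and is essentially the paper's Gibbs-inequality argument. Both bound $\mathcal H(\pi)$ by the cross-entropy $-\int\pi\log\pi_{\text{\vlambda}}\,dh$, use the moment constraints to evaluate that as $\log Z-\langle\vlambda,\vmu\rangle$, and check that $\pi_{\text{\vlambda}}$ attains this value. The only difference is that you get $\dkl{\pi}{\pi_{\text{\vlambda}}}\ge 0$ pointwise from $\log x\le x-1$ instead of splitting off the KL term, which makes explicit the integrability (including the case $\mathcal H(\pi)=-\infty$) and the a.e.\ uniqueness that the paper leaves implicit.
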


\begin{proof}
Set
\[
Z = \int_{\Theta} \exp \left(\langle \vlambda, \vT(\vtheta)\rangle \right) dh(\vtheta),
\qquad 
\pi_{\text{\vlambda}}(\vtheta) = Z^{-1}\exp \left(\langle \vlambda, \vT(\vtheta) \rangle \right).
\]
Let $\pi \in \mathcal{P}$ satisfy
\[
\mu_i = \int_{\Theta} T_i(\vtheta)\pi(\vtheta) dh(\vtheta),
\qquad i=1, ..., P.
\]
Then
\begin{align*}
\mathcal H(\pi) &= -\int_{\Theta} \pi(\vtheta) \log \pi(\vtheta) dh(\vtheta) \\
&= -\int_{\Theta} \pi(\vtheta) \log \pi_{\text{\vlambda}}(\vtheta) dh(\vtheta)
-\int_{\Theta} \pi(\vtheta) \log \frac{\pi(\vtheta)}{\pi_{\text{\vlambda}}(\vtheta)} dh(\vtheta) \\
&\le -\int_{\Theta} \pi(\vtheta) \log \pi_{\text{\vlambda}}(\vtheta) dh(\vtheta),
\end{align*}
since
\[
\int_{\Theta}\pi(\vtheta) \log \frac{\pi(\vtheta)}{\pi_{\text{\vlambda}}(\vtheta)} dh(\vtheta) \ge 0.
\]
Now
\begin{align*}
-\int_{\Theta}\pi(\vtheta)\log \pi_{\text{\vlambda}}(\vtheta)\,dh(\vtheta)
&= -\int_{\Theta} \pi(\vtheta)
\left(\langle \vlambda, \vT(\vtheta) \rangle - \log Z \right) dh(\vtheta) \\
&= \log Z - \sum_{i=1}^P \lambda_i \int_{\Theta} T_i(\vtheta) \pi(\vtheta) dh(\vtheta) \\
&= \log Z - \sum_{i=1}^P \lambda_i \mu_i.
\end{align*}
Therefore,
\[
\mathcal H(\pi) \le \log Z - \sum_{i=1}^P \lambda_i\mu_i.
\]

It remains to show that $\pi_{\text{\vlambda}}$ attains this value. We compute
\begin{align*}
\mathcal H(\pi_{\text{\vlambda}})
&= -\int_{\Theta} \pi_{\text{\vlambda}}(\vtheta) \log \pi_{\text{\vlambda}}(\vtheta) dh(\vtheta) \\
&= -\int_{\Theta}\pi_{\text{\vlambda}}(\vtheta)
\left(\langle \vlambda, \vT(\vtheta)\rangle - \log Z \right) dh(\vtheta) \\
&= \log Z-\sum_{i=1}^P \lambda_i \int_{\Theta} T_i(\vtheta) \pi_{\text{\vlambda}}(\vtheta) dh(\vtheta).
\end{align*}
By the definition of $\pi_{\text{\vlambda}}$,
\[
\int_{\Theta} T_i(\vtheta)\pi_{\text{\vlambda}}(\vtheta) dh(\vtheta)
= \frac{\int_{\Theta} T_i(\vtheta) \exp \left(\langle \vlambda, \vT(\vtheta) \rangle \right) dh(\vtheta)}
{\int_{\Theta} \exp \left(\langle \vlambda, \vT(\vtheta) \rangle \right) dh(\vtheta)}
= \mu_i.
\]
Hence
\[
\mathcal H(\pi_{\text{\vlambda}}) = \log Z - \sum_{i=1}^P \lambda_i \mu_i.
\]
So for every feasible $\pi$,
\[
\mathcal H(\pi)\le \mathcal H(\pi_{\text{\vlambda}}),
\]
which proves that $\pi_{\text{\vlambda}}$ maximizes the entropy.
\end{proof}

\subsection{$q$-expfam as Max Tsallis $(2-q)$-entropy}\label{ssec:tsallis_maxent}

\begin{prop}
Let $(\Theta, \mathcal{A}, h)$ be a measure space, and let
\[
\vT(\vtheta) = \left(T_1(\vtheta), ..., T_P(\vtheta) \right)
\]
be measurable. Fix $\postdeformidx < 2$, $\postdeformidx \neq 1$, and define
\[
\beta_{\postdeformidx}(t)
= \begin{cases}
\displaystyle
\frac{t^{2-\postdeformidx}-t}{(2-\postdeformidx)(1-\postdeformidx)},
& t \ge 0, \\
+ \infty, & t < 0
\end{cases}
\]
and
\[
\Omega_{2-\postdeformidx}(\pi) = \int_\Theta \beta_{\postdeformidx} (\pi(\vtheta)) dh(\vtheta).
\]
On the simplex of densities,
\[
\Omega_{2-\postdeformidx}(\pi) = -\frac{1}{2-\postdeformidx}\mathcal S_{2-\postdeformidx}(\pi),
\qquad
\mathcal{S}_{2-\postdeformidx}(\pi) = \frac{1 - \int_\Theta \pi(\vtheta)^{2-\postdeformidx} d h(\vtheta)}{1-\postdeformidx}.
\]

Assume there exist $\vmu = (\mu_1, ...,\mu_P)$ and $\vlambda = (\lambda_1, ..., \lambda_P)$ such that
\[
\pi_{\text{\vlambda}} (\vtheta) =
\exp_{\postdeformidx} \left(\langle \vlambda,\vT(\vtheta) \rangle - A_{2-\postdeformidx}(\vlambda) \right)
\]
is a density, $\Omega_{2-\postdeformidx}(\pi_{\vlambda}) < \infty$, all the integrals below are finite, and
\[
\mu_i = \int_\Theta T_i(\vtheta) \pi_{\text{\vlambda}}(\vtheta) dh(\vtheta),
\qquad i = 1, ..., P.
\]
Then $\pi_{\text{\vlambda}}$ minimizes $\Omega_{2-\postdeformidx}$, equivalently maximizes $\mathcal{S}_{2-\postdeformidx}$, over all densities $\pi \in \mathcal{P}$ satisfying
\[
\mu_i = \int_\Theta T_i(\vtheta) \pi(\vtheta) dh(\vtheta),
\qquad i = 1, ..., P.
\]
\end{prop}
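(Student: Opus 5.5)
The plan is to mirror the Shannon argument of App.~\ref{ssec:shannon_maxent}, replacing the Gibbs inequality by the Fenchel--Young (Bregman) inequality for the convex scalar function $\beta_{\postdeformidx}$. For $\postdeformidx<2$, $\postdeformidx\neq 1$, the function $\beta_{\postdeformidx}$ is convex on $[0,\infty)$, since $\beta_{\postdeformidx}''(t)=t^{-\postdeformidx}>0$ for $t>0$. Its derivative on $(0,\infty)$ is
\[
\beta_{\postdeformidx}'(t)=\frac{(2-\postdeformidx)t^{1-\postdeformidx}-1}{(2-\postdeformidx)(1-\postdeformidx)},
\]
and a short computation shows this is the inverse of $\exp_{\postdeformidx}$ up to an affine reparametrization, $\beta_{\postdeformidx}'(t)=\frac{1}{2-\postdeformidx}\big(\log_{\postdeformidx}(t)\cdot\text{(const)}+\text{const}\big)$. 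I will fix constants so that $\pi_{\vlambda}$ satisfies the pointwise stationarity condition
\[
\beta_{\postdeformidx}'(\pi_{\vlambda}(\vtheta)) = c\,\langle\vlambda,\vT(\vtheta)\rangle + c_0 \quad \text{on } \{\pi_{\vlambda}>0\},
\]
for constants $c>0$ and $c_0$ that depend only on $\postdeformidx$ and $A_{2-\postdeformidx}(\vlambda)$. If the normalization of $\exp_{\postdeformidx}$ versus $\beta_{\postdeformidx}$ produces a mismatch such as a factor $(2-\postdeformidx)$, I will absorb it into $c$ and $c_0$. The argument only needs affinity in $\vT$, so these constants are harmless.

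The key step is the pointwise convexity inequality. For every feasible $\pi$ and every $\vtheta$,
\[
\beta_{\postdeformidx}(\pi(\vtheta)) \ge \beta_{\postdeformidx}(\pi_{\vlambda}(\vtheta)) + g(\vtheta)\big(\pi(\vtheta)-\pi_{\vlambda}(\vtheta)\big),
\]
where $g(\vtheta)=c\langle\vlambda,\vT(\vtheta)\rangle+c_0$ is required to be a subgradient of $\beta_{\postdeformidx}$ at $\pi_{\vlambda}(\vtheta)$. On $\{\pi_{\vlambda}>0\}$ this holds by differentiability. On $\{\pi_{\vlambda}=0\}$, which is nonempty only in the bounded-support case $\postdeformidx<1$, the positive part in $\exp_{\postdeformidx}$ means the argument satisfies $1+(1-\postdeformidx)(\cdot)\le 0$. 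I will check that this translates to $g(\vtheta)\le \beta_{\postdeformidx}'(0^+)=-\frac{1}{(2-\postdeformidx)(1-\postdeformidx)}$, which is finite for $\postdeformidx<1$. This is exactly the condition for $g(\vtheta)$ to lie in $\partial\beta_{\postdeformidx}(0)=(-\infty,\beta_{\postdeformidx}'(0^+)]$. For $\postdeformidx\in(1,2)$, $\pi_{\vlambda}>0$ everywhere and no boundary case arises.

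Integrating the pointwise inequality over $\Theta$ gives
\[
\Omega_{2-\postdeformidx}(\pi)\ge \Omega_{2-\postdeformidx}(\pi_{\vlambda}) + c\,\langle \vlambda, \myexpect_\pi[\vT]-\myexpect_{\pi_{\vlambda}}[\vT]\rangle + c_0\Big(\int\pi\,dh-\int\pi_{\vlambda}\,dh\Big).
\]
Both correction terms vanish: the first by the moment constraints, the second because both functions are densities. Hence $\Omega_{2-\postdeformidx}(\pi)\ge\Omega_{2-\postdeformidx}(\pi_{\vlambda})$. The stated identity $\Omega_{2-\postdeformidx}=-\frac{1}{2-\postdeformidx}\mathcal S_{2-\postdeformidx}$ on densities follows by integrating $\beta_{\postdeformidx}$ and using $\int\pi\,dh=1$, which gives the equivalent maximization statement.

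The main obstacle is the boundary subgradient check on $\{\pi_{\vlambda}=0\}$ together with getting the constants in $\beta_{\postdeformidx}'\circ\exp_{\postdeformidx}$ exactly right. Integrability is a secondary issue: splitting the integral requires $\int|g|\pi\,dh<\infty$, which the hypothesis ``all integrals finite'' supplies. Feasible $\pi$ with $\Omega_{2-\postdeformidx}(\pi)=\infty$ satisfy the inequality trivially.
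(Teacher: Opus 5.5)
Your proposal is correct and follows the paper's proof: a pointwise subgradient inequality for $\beta_q$ with a subgradient affine in $\vT$, a separate check on $\{\pi_{\boldsymbol\lambda}=0\}$ for $q<1$ using $\partial\beta_q(0)=(-\infty,\beta_q'(0^+)]$, and then integration, where the moment constraints and normalization cancel the correction terms. The constants you left open are exactly $c=1$ and $c_0=\frac{1}{2-q}-A_{2-q}(\boldsymbol\lambda)$, because $\beta_q'(t)=\log_q(t)+\frac{1}{2-q}$; hence $g=\eta_{\boldsymbol\lambda}+\frac{1}{2-q}$ with $\eta_{\boldsymbol\lambda}=\langle\boldsymbol\lambda,\vT\rangle-A_{2-q}(\boldsymbol\lambda)$, and on the zero set the condition $\eta_{\boldsymbol\lambda}\le-\frac{1}{1-q}$ gives exactly $g\le-\frac{1}{(1-q)(2-q)}=\beta_q'(0^+)$.
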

\begin{proof}
Set
\[
\eta_{\text{\vlambda}}(\vtheta) = \langle \vlambda, \vT(\vtheta) \rangle - A_{2-\postdeformidx}(\vlambda).
\]
Let $\pi \in \mathcal{P}$ satisfy the same moment constraints as $\pi_{\text{\vlambda}}$.

We claim that for $h$-a.e. $\vtheta$,
\[
\eta_{\text{\vlambda}}(\vtheta) + \frac{1}{2-\postdeformidx} \in \partial \beta_{\postdeformidx} \left(\pi_{\text{\vlambda}} (\vtheta) \right).
\]

If $\pi_{\text{\vlambda}}(\vtheta) > 0$, then
\[
\log_{\postdeformidx}\pi_{\text{\vlambda}} (\vtheta) = \eta_{\text{\vlambda}} (\vtheta),
\]
and since
\[
\beta_{\postdeformidx}'(t) = \log_{\postdeformidx}(t) + \frac{1}{2-\postdeformidx}, \qquad t > 0,
\]
it follows that
\[
\eta_{\text{\vlambda}}(\vtheta) + \frac{1}{2-\postdeformidx} = \beta'_{\postdeformidx} \left(\pi_{\text{\vlambda}}(\vtheta) \right) \in \partial \beta_{\postdeformidx} \left(\pi_{\text{\vlambda}} (\vtheta) \right).
\]

If $\postdeformidx < 1$ and $\pi_{\text{\vlambda}} (\vtheta) = 0$, then by the definition of $\exp_{\postdeformidx}$ we have
\[
\eta_{\text{\vlambda}}(\vtheta) \le -\frac{1}{1-\postdeformidx}.
\]
Also, the right derivative at 0 is
\[
\beta'_{\postdeformidx}(0) = -\frac{1}{(1-\postdeformidx) (2-\postdeformidx)},
\qquad 
\partial \beta_{\postdeformidx}(0) = (-\infty, \beta'_{\postdeformidx}(0)].
\]
Therefore
\[
\eta_{\text{\vlambda}}(\vtheta)+\frac{1}{2-\postdeformidx} \le - \frac{1}{1-\postdeformidx} + \frac{1}{2-\postdeformidx}
= -\frac{1}{(1-\postdeformidx)(2-\postdeformidx)}
= \beta'_{\postdeformidx}(0),
\]
and again
\[
\eta_{\text{\vlambda}}(\vtheta) + \frac{1}{2-\postdeformidx} \in \partial \beta_{\postdeformidx} \left(\pi_{\text{\vlambda}}(\vtheta)\right).
\]

Thus, by the subgradient inequality,
\[
\beta_{\postdeformidx} (\pi(\vtheta)) \ge 
\beta_{\postdeformidx}(\pi_{\text{\vlambda}} (\vtheta)) + \left(\eta_{\text{\vlambda}}(\vtheta) + \frac{1}{2-\postdeformidx} \right) \left(\pi(\vtheta) - \pi_{\text{\vlambda}} (\vtheta) \right)
\]
for $h$-a.e. $\vtheta$. Integrating we get
\begin{align*}
\Omega_{2-\postdeformidx}(\pi) - \Omega_{2-\postdeformidx}(\pi_{\text{\vlambda}})
&\ge \int_\Theta \left(\eta_{\text{\vlambda}}(\vtheta)+\frac{1}{2-\postdeformidx} \right)
\left(\pi(\vtheta)-\pi_{\text{\vlambda}}(\vtheta) \right) dh(\vtheta) \\
&= \int_\Theta \eta_{\text{\vlambda}}(\vtheta) \left(\pi(\vtheta) - \pi_{\text{\vlambda}} (\vtheta) \right) d h(\vtheta),
\end{align*}
because both $\pi$ and $\pi_{\text{\vlambda}}$ integrate to $1$.

Expand $\eta_{\text{\vlambda}}$:
\begin{align*}
&\int_\Theta \eta_{\text{\vlambda}}(\vtheta)
\left(\pi(\vtheta) - \pi_{\text{\vlambda}} (\vtheta) \right) dh(\vtheta) \\
&= \sum_{i=1}^P \lambda_i \left( \int_\Theta T_i(\vtheta)\pi(\vtheta) dh(\vtheta) - \int_\Theta T_i(\vtheta) \pi_{\text{\vlambda}}(\vtheta) dh(\vtheta) \right) 
- A_{2-\postdeformidx} (\vlambda) \left( \int_\Theta \pi(\vtheta) dh(\vtheta) - \int_\Theta \pi_{\text{\vlambda}}(\vtheta) dh(\vtheta) \right) 
= 0,
\end{align*}
since $\pi$ and $\pi_{\text{\vlambda}}$ have the same prescribed moments and both are densities. Hence
\[
\Omega_{2-\postdeformidx} (\pi) \ge \Omega_{2-\postdeformidx} (\pi_{\text{\vlambda}}).
\]

When $\postdeformidx < 2$,
\[
\Omega_{2-\postdeformidx}(\pi) = -\frac{1}{2 - \postdeformidx} \mathcal{S}_{2-\postdeformidx} (\pi),
\]
and minimizing $\Omega_{2-\postdeformidx}$ is equivalent to maximizing $\mathcal S_{2-\postdeformidx}$.
\end{proof}

This certifies optimality of a feasible $\postdeformidx$-exponential candidate. What we have not proven is the existence of $\vlambda$ or of a feasible $\pi_{\text{\vlambda}}$.

\textit{Remark. } In general we assume
\[
\postdeformidx < \frac{D+4}{D+2}, \qquad \postdeformidx \neq 1.
\]
Then $\postdeformidx < 1$ yields bounded-support $q$-Gaussians, while
\[
1 < \postdeformidx < \frac{D+4}{D+2}
\]
yields Student-$t$ $q$-Gaussians with
\[
\nu = \frac{2}{\postdeformidx - 1} - D > 2.
\]
Under the parameter range imposed by finite covariance, $\postdeformidx = 2$ is excluded.

\subsection{Existence}\label{ssec:maxent_existence}

Our abstract construction in Sec.~\ref{ssec:maxent} is stated on the canonical domain
\[
\Lambda_\Omega = \{\vlambda\in\real^P: \Omega^*(\eta_{\text{\vlambda}}) \text{ is attained by a unique } \pi_{\text{\vlambda}}\in\mathcal P\}.
\]
This conditional formulation is deliberate. A general existence theorem for arbitrary \(\Omega\) would require additional infinite-dimensional assumptions on the support, the moment functions, and the regularizer. For the Shannon case, sufficient conditions of this kind are available in \citet{ishwar2005existence}. For the Tsallis families used in Sec.~\ref{sec:qvon}, we do not need such a general result because the optimizer is explicit: the bounded-support case is compactly supported, while the heavy-tailed case reduces to a Student-\(t\) family whose normalization and moment conditions can be checked directly from the closed form in App.~\ref{app:q_gauss}.

\section{Proof of \cref{thm:fy_blr}}\label{app:thm1_proof}

To generalize \citet{khan2023bayesian} we derive a canonical gradient descent for optimizing \cref{eq:fy_obj2}. We use \smash{$\barloss(\text{\vtheta}) =  \ell(\vparam) -\eta_{\text{\vlambda}_0}(\vtheta) $}. 
By \Cref{prop:psi_star}, the finite-dimensional potential $\Psi^*$ characterizes the geometry of the canonical parameter space. Thus, we define the generalized Fisher matrix, 
\begin{align}
\vF(\vlambda) = \nabla^2_{\text{\vlambda}} \Psi^*(\vlambda)
= \nabla^2_{\text{\vlambda}} \Omega^*(\eta_{\text{\vlambda}})
= \frac{\partial \text{\vmu}(\text{\vlambda})}{\partial \text{\vlambda}}.
\end{align}
and use the chain rule $\vF(\vlambda)^{-1} \nabla_{\text{\vlambda}}   = \vF(\vlambda)^{-1} \vF(\vlambda) \nabla_{\text{\vmu}}  = \nabla_{\text{\vmu}} $, to obtain a generalized canonical gradient update of the same form as that of \citet{khan2023bayesian}. Whenever $\vF(\vlambda)$ exists and is invertible, the Fenchel-Young BLR is  
\begin{align}
    \vlambda_{t+1} &\gets \vlambda_t - \rho_t \vF(\vlambda_t)^{-1} \nabla_{\text{\vlambda}} \left( \underset{\varpost}{\myexpect} [\bar{\ell}(\vtheta)] + \Omega(\varpost)\right) \Big|_{{\text{\vlambda}} = {\text{\vlambda}}_t} 
    = (1 - \rho_t) \vlambda_t - \rho_t \nabla_{\text{\vmu}} \myexpect_{\varpost}[\bar{\ell}(\vtheta)].
\end{align}
The equality follows from the fact that \smash{$\nabla_{\text{$\vmu$}} \Omega(\pi_{\text{$\vlambda$}}) = \nabla_{\text{$\vmu$}} \Psi(\vmu) = \vlambda$} (\Cref{ssec:maxent}), which results in the $(1-\rho_t)$ factor. This characterizes the stationary condition of the generalized variational objective. At any fixed point $\vlambda^*$ of the canonical update \( \vlambda^* = - \nabla_{\text{\vmu}} \myexpect_{\pi_{\text{\vlambda}^*}}[\bar{\ell}(\vtheta)] \).

\begin{proof}[Proof of \cref{thm:fy_blr}]
By the definition of the Fenchel-Young discrepancy,
\begin{align}
\myexpect_{\pi_{\text{\vlambda}}}[\ell(\vtheta)] + \mathbb D_\Omega(\eta_{\text{\vlambda}_0},\pi_{\text{\vlambda}})
= \Omega^*(\eta_{\text{\vlambda}_0}) + \myexpect_{\pi_{\text{\vlambda}}}[\bar\ell(\vtheta)] + \Omega(\pi_{\text{\vlambda}}).
\label{eq:fy_blr_obj_expanded}
\end{align}
By \Cref{prop:psi_star}, \( \Omega^*(\eta_{\text{\vlambda}_0}) = \Psi^*(\vlambda_0)\), which is constant with respect to \(\vlambda\). By \Cref{prop:score_optimality}, each \(\pi_{\text{\vlambda}}\) solves the maxent problem at moment vector \(\vmu(\vlambda)\). Hence \( \Omega(\pi_{\text{\vlambda}}) = \Psi(\vmu(\vlambda)) \). Therefore, up to the additive constant \(\Psi^*(\vlambda_0)\),
\[
\myexpect_{\pi_{\text{\vlambda}}}[\ell(\vtheta)] + \mathbb D_\Omega(\eta_{\text{\vlambda}_0},\pi_{\text{\vlambda}})
= \myexpect_{\pi_{\text{\vlambda}}}[\bar\ell(\vtheta)] + \Psi(\vmu(\vlambda)).
\]
Taking the gradient w.r.t. \(\vlambda\) and applying the chain rule 
\begin{align}
\nabla_{\text{\vlambda}} \left[ \myexpect_{\pi_{\text{\vlambda}}}[\ell(\vtheta)] + \mathbb D_\Omega(\eta_{\text{\vlambda}_0},\pi_{\text{\vlambda}}) \right]
&= \frac{\partial \vmu(\vlambda)}{\partial \vlambda} \left[ \nabla_{\text{\vmu}} \myexpect_{\pi_{\text{\vlambda}(\text{\vmu})}}[\bar\ell(\vtheta)] + \nabla_{\text{\vmu}}\Psi(\vmu) \right]_{\text{\vmu}=\text{\vmu}(\text{\vlambda})}
\nonumber\\
&= \vF(\vlambda) \left[ \nabla_{\text{\vmu}} \myexpect_{\pi_{\text{\vlambda}(\text{\vmu})}}[\bar\ell(\vtheta)] + \nabla_{\text{\vmu}}\Psi(\vmu) \right]_{\text{\vmu}=\text{\vmu}(\text{\vlambda})} .
\end{align}
Using the dual-coordinate relation \( \nabla_{\text{\vmu}}\Psi(\vmu(\vlambda))=\vlambda \) and multiplying by \(\vF(\vlambda)^{-1}\) we get
\[
\vF(\vlambda)^{-1} \nabla_{\text{\vlambda}} \left[ \myexpect_{\pi_{\text{\vlambda}}}[\ell(\vtheta)] + \mathbb D_\Omega(\eta_{\text{\vlambda}_0},\pi_{\text{\vlambda}}) \right] 
= \nabla_{\text{\vmu}} \myexpect_{\pi_{\text{\vlambda}(\text{\vmu})}}[\bar\ell(\vtheta)] \Big|_{\text{\vmu}= \text{\vmu}(\text{\vlambda})} + \vlambda.
\]
This proves \cref{eq:fy_blr_gradient_identity}. Substituting this identity into the canonical-gradient step gives
\[
\vlambda_{t+1} = \vlambda_t - \rho_t \left[ \nabla_{\text{\vmu}} \myexpect_{\pi_{\text{\vlambda}(\text{\vmu})}}[\bar\ell(\vtheta)] \Big|_{\text{\vmu}=\text{\vmu}(\text{\vlambda}_t)} + \vlambda_t \right],
\]
which rearranges to
\[
\vlambda_{t+1} = (1-\rho_t)\vlambda_t - \rho_t \nabla_{\text{\vmu}} \myexpect_{\pi_{\text{\vlambda}(\text{\vmu})}}[\bar\ell(\vtheta)] \Big|_{\text{\vmu}= \text{\vmu}( \text{\vlambda}_t)} .
\]
The fixed-point condition follows by setting
\(\vlambda_{t+1}=\vlambda_t=\vlambda^*\) with \(\rho_t>0\).
\end{proof}


\section{$\postdeformidx$-Gaussians: $\postdeformidx$-exponential Family Form to Location-Scale}\label{app:q_gauss}

\subsection{Canonical $\postdeformidx$-exponential form}

We start from the $\postdeformidx$-exponential family canonical parametrization of $\postdeformidx$-Gaussians (\cref{eq:deformed_exp_families}). To derive \cref{alg:qvon} and \cref{alg:tvon} we use the following sufficient statistics and canonical parameters, which along with the deformation index $\postdeformidx$ determine the $\postdeformidx$-Gaussian density
\begin{align}
\vT(\vparam)=
\begin{pmatrix}
\vtheta\\ 
\vtheta\vtheta^\top
\end{pmatrix},
\qquad
\vlambda=
\begin{pmatrix}
\vS\vm\\ 
-\tfrac12\vS
\end{pmatrix}.
\end{align}
Both bounded-support ($\postdeformidx<1$), and heavy-tailed densities ($\postdeformidx>1$) can be written with subtractive normalization in $\postdeformidx$-exponential form
\begin{equation} \label{eq:q_canonical}
\varpost(\vtheta) =\exp_{\postdeformidx} \left(\vlambda^\top \vT(\vparam) - A_{2-\postdeformidx}(\vlambda)\right),
\qquad
\vlambda^\top \vT(\vparam)
= -\tfrac12(\vtheta-\vm)^\top\vS(\vtheta-\vm)+\tfrac12\vm^\top\vS\vm,
\end{equation}
with constants absorbed into the deformed cumulant $A_{2-\postdeformidx}$.

When $\boldsymbol{\postdeformidx<1}$ 
\begin{align} \label{eq:cumulant_bounded}
A_{2-\postdeformidx}(\vlambda) =
\frac{1}{2} \vm^\top \vS \vm - \frac{1}{2}
\left( R(\postdeformidx,D)^2 |\vS|^{1/\gamma} - \frac{2}{1-q}\right)
\end{align}
with constant $\gamma = D+\frac{2}{1-\postdeformidx}$, and where the max support radius $R$ has a closed form that we derive below. 

When $\boldsymbol{\postdeformidx>1}$ 
\begin{equation} \label{eq:cumulant_t}
A_{2-\postdeformidx}(\vlambda) =
\frac{1}{2} \vm^\top \vS \vm + \frac{1}{\postdeformidx-1}
\left( |\vS|^{-1/\nu} \frac{\nu}{\nu+D}\kappa_t(\nu, D) -1 \right)
\end{equation}
where $\Gamma$ is the Gamma function, and $\nu = \frac{2}{q-1} - D$ is the Student-$t$ parameter as we show in more detail below where we also define constant $\kappa_t$.

\subsection{Radial laws and location-scale form}
To derive the closed forms of the covariances of $\vtheta$ that we use to derive the Newton-like updates in \cref{ssec:qvon} for the different ranges of $\postdeformidx$ we need the following polar coordinate stochastic reparametrization. For feature dimension $D$ we have
\begin{align} \label{eq:polar}
r^2(\vtheta)\ =\ (\vtheta-\vm)^\top \vSigma^{-1}(\vtheta-\vm),
\qquad
\vtheta = \vm + \vSigma^{1/2}\vz,\quad \vz = \vSigma^{-1/2}(\vtheta-\vm) = r\vu,
\end{align}
where $\vu \sim \mathrm{Unif}(S^{D-1})$ on the unit sphere surface $S^{D-1} = \{ \vu: \vu^\top \vu = 1 \}$, and $r\ge 0$ is independent of $\vu$.

When $\boldsymbol{\postdeformidx<1}$ we use the determinant-rescaled elliptical scale matrix $\vSigma = |\vS|^{1/\gamma}\vS^{-1}$ to write $\varpost(\vtheta)$ in elliptical Pearson Type II, location-scale form, which is obtained from \cref{eq:q_canonical} and \cref{eq:cumulant_bounded} by the definition of $\exp_{\postdeformidx}$ (\cref{eq:defomed_exp}) and some algebraic manipulation 
\begin{align}
\varpost(\vtheta) = |\vSigma|^{-1/2} \left[\frac{1-\postdeformidx}{2}\left(R^2-r^2(\vtheta)\right)\right]_+^{\frac{1}{1-\postdeformidx}},
\qquad \mathrm{supp}(\varpost) = \{\vtheta: r^2(\vtheta)<R^2\}. 
\end{align}
Evaluating the integral $\int_{\text{\real}^D} \left[\frac{1-\postdeformidx}{2}\left(R^2-\| \vz \|^2 \right)\right]_+^{\frac{1}{1-\postdeformidx}} d\vz = 1$ in terms of the polar coordinate reparametrization from \cref{eq:polar} yields the closed form of the max support radius $R(\postdeformidx, D)$
\begin{equation}
R(\postdeformidx,D)= \left[ \frac{\Gamma \left(\frac{D}{2}+\frac{2-\postdeformidx}{1-\postdeformidx}\right)}{\pi^{D/2}\Gamma \left(\frac{2-\postdeformidx}{1-\postdeformidx}\right)}
\left(\frac{2}{1-\postdeformidx}\right)^{\frac{1}{1-\postdeformidx}}
\right]^{\frac{1-\postdeformidx}{2+D(1-\postdeformidx)}},
\end{equation}
where $\Gamma$ is the Gamma function.

The squared ``radial fraction'' follows a Beta distribution:
\begin{align}\label{eq:radial_bounded}
\frac{r^2}{R^2(\postdeformidx,D)}
\sim \mathrm{Beta} \left(\frac{D}{2}, \frac{2-\postdeformidx}{1-\postdeformidx}\right),
\qquad r\in[0,R(\postdeformidx,D)].
\end{align}
Hence \textbf{sampling} can be implemented as $\vu\sim\mathrm{Unif}(S^{D-1})$, $\frac{r^2}{R^2(\postdeformidx,D)} \sim\mathrm{Beta}(\frac{D}{2}, \frac{2-\postdeformidx}{1-\postdeformidx})$, $\vz \gets r \vu$, $\vtheta \gets \vm+\vSigma^{1/2} \vz$.

When $\boldsymbol{\postdeformidx>1}$  
\begin{align}
&\nu  = \frac{2}{\postdeformidx-1}-D \quad \mathrm{and} \\
& \vSigma  = \kappa_t|\vS|^{-1/\nu}\vS^{-1}  \quad \mathrm{where} \quad \kappa_t(\nu,D) = \frac{\nu+D}{\nu}\left[(\nu+D)^{D/2}\pi^{D/2}\frac{\Gamma(\nu/2)}{\Gamma((\nu+D)/2)}\right]^{2/\nu},
\end{align}
From \cref{eq:q_canonical} and \cref{eq:cumulant_t} by the definition of $\exp_{\postdeformidx}$ (\cref{eq:defomed_exp}) and some algebraic manipulation we obtain the elliptical Pearson Type VII form of the density
\begin{align} \label{eq:pearsonVII_S_form}
\varpost(\vtheta) 
&= \left( |\vS|^{-1/\nu} \frac{\nu }{\nu + D}\kappa_t +\frac{\postdeformidx-1}{2}(\vtheta-\vm)^\top\vS(\vtheta-\vm) \right)^{-\frac{1}{\postdeformidx-1}}
\propto \left(1 + \frac{1}{\nu} r^2(\vtheta) \right)^{-(\nu+D)/2}.
\end{align}

Finally, we can bring the density in Student-$t$ form: 
\begin{align}
\mathcal{T}_{\nu}(\vtheta\mid \vm,\vSigma) = \frac{1}{Z_{\nu}|\vSigma|^{1/2}} \left(1+\tfrac{1}{\nu}r^2(\vtheta)\right)^{-(\nu+D)/2},
\qquad
Z_\nu=\frac{\Gamma(\nu/2)(\nu\pi)^{D/2}}{\Gamma((\nu+D)/2)},
\end{align}
since with $\Sigma$ as defined above, \cref{eq:pearsonVII_S_form} is proportional to the standard Student-t kernel.  

In standardized coordinates the Pearson Type VII density is:
\begin{align}
\pi(\vz) =\frac{1}{Z_\nu}\left(1+\tfrac{1}{\nu}\|\vz\|^2\right)^{-(\nu+D)/2},
\qquad \vz\in\mathbb{R}^D .
\end{align}
Using again $\vz=r\vu$ with Jacobian $d\vz=r^{D-1} dr d\vu$ and integrating out the angular part $\vu$, the induced density of the radius $r$ is proportional to
\begin{align}
r^{D-1}\left(1+\tfrac{1}{\nu}r^2\right)^{-(\nu+D)/2},
\qquad r\in[0,\infty).
\end{align}
Equivalently, by the change of variables $r \mapsto r^2$ (i.e.\ $dr=\tfrac{1}{2}(r^2)^{-1/2} d(r^2)$), the induced
density of $r^2$ is proportional to
\begin{align}
(r^2)^{\frac{D}{2}-1}\left(1+\tfrac{1}{\nu}r^2\right)^{-(\nu+D)/2},
\qquad r^2\in(0,\infty),
\end{align}
which is a Beta-prime kernel. Hence,
\begin{align}
\frac{r^2}{\nu} \sim\ \mathrm{BetaPrime} \left(\frac{D}{2},\frac{\nu}{2}\right).
\end{align}
A standard equivalent representation is the chi-square ratio or Fisher-$F$ form:
\begin{align} \label{eq:radial_t}
r^2 \stackrel{d}{=}
\nu \frac{\chi^2_D}{\chi^2_{\nu}} = D F_{D,\nu},
\qquad r\in[0,\infty),
\end{align}
with independent chi-squares in the first expression.

We can \textbf{sample} $\vu\sim\mathrm{Unif}(S^{D-1})$, $U\sim\chi^2_D$, $V\sim\chi^2_\nu$ independently, set
$r\gets\sqrt{\nu\,U/V}$, $\vz\gets r\,\vu$, and finally $\vtheta\gets \vm+\vSigma^{1/2}\vz$. Equivalently, sample $F\sim F_{D,\nu}$ and set $r\gets\sqrt{D F}$.

\subsection{Expectation parameters}\label{ssec:expectation_params}
To derive the general mixed Bregman form of the BLR we use linear moments under $\varpost$:
\begin{align}
\vmu =\myexpect_{\varpost}[\vtheta]=\vm,
\qquad
\vM =\myexpect_{\varpost}[\vtheta\vtheta^\top]
=\vm\vm^\top+\Var_{\varpost}(\vtheta).
\end{align}

For any elliptical $\varpost$ with stochastic representation $\vtheta=\vm+\vSigma^{1/2}(r\vu)$, using $\vtheta-\vmu = \vSigma^{1/2}(r\vu)$
\begin{align} \label{eq:elliptical_var}
\Var_{\varpost}(\vtheta)
= \vSigma^{1/2}\myexpect[r^2]\myexpect[\vu\vu^\top]\vSigma^{1/2}
= \vSigma^{1/2}\myexpect[r^2] \frac{1}{D}\vI\vSigma^{1/2}
= \frac{\myexpect[r^2]}{D}\vSigma.
\end{align}

In particular, from \cref{eq:elliptical_var} and the radial laws \cref{eq:radial_bounded} and \cref{eq:radial_t}:
\begin{align}
\postdeformidx<1:\ \ \Var_{\varpost}(\vtheta)=\kappa(\postdeformidx,D) \vSigma,
\qquad
\postdeformidx>1:\ \ \Var_{\varpost}(\vtheta)=\frac{\nu}{\nu-2} \vSigma\quad(\nu>2)
\end{align}
with $\kappa = \frac{R^2}{D + \frac{2(2-\postdeformidx)}{1-\postdeformidx}}$ as in \cref{alg:qvon}.

\paragraph{Escort reweightings. }
The Price- and Stein-type identities that can be derived for $\postdeformidx$-Gaussians naturally introduce a $(2-\postdeformidx)$-escorts (see  App.~\ref{app:bonnet_price_stein} for more details on related bibliography, and a proof of the heavy-tailed Price-type identity):
\begin{align}
\tilde \pi^{2-\postdeformidx}(\vtheta) \propto \pi(\vtheta)  w_{2-\postdeformidx}(\vtheta),
\end{align}
with
\begin{align}
\postdeformidx<1:\quad w_{2-\postdeformidx}(\vtheta)=R^2(\postdeformidx,D)-r^2(\vtheta),
\qquad
\postdeformidx>1:\quad w_{2-\postdeformidx}(\vtheta)=1+\frac{1}{\nu}r^2(\vtheta). 
\end{align}

\subsection{FY Mixed-space Bregman between two heavy-tailed $q$-Gaussians}

Analogously to the expression in \citep[Prop. 18]{martins2022sparse} we derive the FY mixed-space Bregman divergence for the $q>1$ case. For $\vT(\vparam) = (\vtheta, \vtheta\vtheta^\top)$ and $\vlambda=(\vS \vm,-\tfrac12 \vS)$, we consider the quadratic form of a score $\eta_0(\vtheta) = \vlambda_0^\top \vT(\vparam)$ in the same linear family $\vlambda_0 = \begin{pmatrix} \vS_0 \vm_0 \\ -\tfrac12 \vS_0 \end{pmatrix}$. Up to an additive constant independent of $\vtheta$ $\eta_0(\vtheta) = -\tfrac12(\vtheta-\vm_0)^\top\vS_0(\vtheta-\vm_0)$. Let $\pi_0$ denote the $\postdeformidx$-Gaussian induced by this score, i.e.
\begin{align}
\pi_0(\vtheta) = \mathcal{T}_{\nu}(\vtheta \mid \vm_0, \vSigma_0),
\qquad
\vSigma_0=\kappa_t |\vS_0|^{-1/\nu}\vS_0^{-1}.
\end{align}

The FY mixed-space Bregman divergence induced by $\Omega_{2-\postdeformidx}$ is
\begin{align}
\mathrm{D}_{\Omega_{2-\postdeformidx}}(\eta_0, \pi) &= \Omega_{2-\postdeformidx}(\pi)-\Omega_{2-\postdeformidx}(\pi_0) -\vlambda_0^\top\left(\vmu(\pi)-\vmu(\pi_0)\right),
\end{align}
where $\vmu(\pi)=(\myexpect_{\pi}[\vtheta],\myexpect_\pi[\vtheta\vtheta^\top])$. Using $\myexpect_\pi[\vtheta] = \vm$ and $\myexpect_\pi [\vtheta\vtheta^\top] = \vm\vm^\top + \Var_\pi(\vtheta)$, we get 
\begin{align}
-\vlambda_0^\top(\vmu(\pi) - \vmu(\pi_0)) = \frac12(\vm-\vm_0)^\top\vS_0(\vm-\vm_0) \nonumber \\
+ \frac12 \mathrm{Tr} \left(\vS_0\left(\Var_{\pi}(\vtheta)-\Var_{\pi_0}(\vtheta)\right)\right).
\end{align}

For Student-$t$ with $\nu>2$, $\Var_\pi(\vtheta)=\frac{\nu}{\nu-2}\vSigma$, hence
\begin{align}
\frac12 \mathrm{Tr} \left(\vS_0\left(\Var_\pi(\vtheta)-\Var_{\pi_0}(\vtheta)\right)\right)
= \frac{\kappa_t\nu}{2(\nu-2)} \left( |\vS|^{-1/\nu} \mathrm{Tr}(\vS_0\vS^{-1}) - D |\vS_0|^{-1/\nu} \right).
\end{align}
Using the deformed cumulant expression in \cref{eq:cumulant_t} and
$\Omega_{2-\postdeformidx}(\pi)=\frac{1}{2-\postdeformidx}(\vlambda^\top\vmu-A_{2-\postdeformidx}(\vlambda))$, we obtain (up to a constant depending only on $(\nu,D)$)
\begin{align}
\Omega_{2-\postdeformidx}(\pi) = \text{const.}(\nu, D) -\frac{\kappa_t\nu(\nu+D)}{2(\nu-2)} |\vS|^{-1/\nu},
\end{align}
and the difference simplifies to
\begin{align}
\Omega_{2-\postdeformidx}(\pi)-\Omega_{2-\postdeformidx}(\pi_0)
= -\frac{\kappa_t\nu(\nu+D)}{2(\nu-2)} \left(|\vS|^{-1/\nu}-|\vS_0|^{-1/\nu}\right).
\label{eq:fy_t_entropy_diff}
\end{align}

Putting everything together we get the heavy-tail, $q>1$ FY mixed-space Bregman divergence
\begin{align}
\mathrm{D}_{\Omega_{2-\postdeformidx}}(\eta_0, \pi)
&= \frac12(\vm-\vm_0)^\top\vS_0(\vm-\vm_0)  \nonumber \\
&+ \frac{\kappa_t \nu}{2(\nu-2)}
\left[|\vS|^{-1/\nu}\left(\mathrm{Tr}(\vS_0 \vS^{-1})-(\nu+D)\right)
+\nu |\vS_0|^{-1/\nu} \right].
\end{align}

\section{Bonnet-, Price-, and Stein-type identities for $\postdeformidx$-Gaussians} \label{app:bonnet_price_stein}

In \cref{alg:qvon} and \cref{alg:tvon} we replace the Gaussian samples of VON with elliptical $\postdeformidx$‑Gaussian samples, using generalized Bonnet and Price identities related to appropriate elliptical Stein-type identities. The resulting $\postdeformidx$‑VON algorithms retain VON's form, differing by factors depending only on $(\postdeformidx, D)$ and closed-form escort reweighting derived from the dual; as $\postdeformidx \to 1$ we recover standard VON. 

By generalizing the VON sampling step from a Gaussian to a $\postdeformidx$-Gaussian distribution we obtain a more general class of scalable $\postdeformidx$-VON algorithms, which extend the BLR to optimize over a much wider class of distributions. We focus on variants of VON involving deformed exponential families with bounded, parameter-dependent support ($q < 1$), and with heavy tails ($q > 1$). Elliptical families include $\postdeformidx$‑Gaussians, which are encompassed under Pearson Type II (bounded support) and Type VII (heavy‑tailed) distributions for appropriate ranges of $\postdeformidx$, and provide bounded‑support and heavy‑tailed behaviors with tractable geometry and reparameterizations. Further, they admit Stein‑type identities which lead to generalized Bonnet and Price theorems \citep{landsman2008stein,landsman2015some,matsuzoe2011geometry}. The references concerning the heavy-tailed cases prove Stein-type identities but do not apply them to obtain Bonnet- and Price-type theorems, which we prove below for reference. Related proofs can be found in \citet{khan2017conjugate, lukashchukquotient}. A bounded-support $q$-Gaussian Stein identity is derived and used to prove Bonnet- and Price-type identities in \citet{qsteincpal}.

Here we review the gradient identities used to derive the $\postdeformidx$-VON updates (\cref{ssec:qvon}, \cref{alg:qvon} and \cref{alg:tvon}). Throughout, $\vtheta\in\real^D$ is the random variable and $\varpost(\vtheta)$ is the variational posterior with canonical parameters $\vlambda = \left(\vS \vm, -\tfrac12\vS \right)$. We write the same distribution in elliptical location-scale form $\varpost(\vtheta)=q_{\postdeformidx}(\vtheta\mid \vm,\vSigma)$, with $r^2(\vtheta) = (\vtheta-\vm)^\top \vSigma^{-1}(\vtheta-\vm)$.

\subsection{Bonnet-type identities} \label{app:bonnet}

For both $\postdeformidx<1$ and $\postdeformidx>1$, for differentiable $f$ with suitable integrability, the gradient with respect to the location parameter satisfies the Bonnet identity
\begin{align}
\nabla_{\text{\vm}} \myexpect_{\varpost} \left[f(\vtheta)\right]
= \myexpect_{\varpost}\left[\nabla_{\text{\vtheta}} f(\vtheta)\right].
\label{eq:bonnet_qgauss}
\end{align}

\subsubsection{Proof of the $q>1$ or Student-$t$ Bonnet-type identity}

For a proof of the $q<1$ case see \citet{qsteincpal}. 

Let $\varpost(\vtheta)=\mathcal{T}_{\nu}(\vtheta\mid \vm,\vSigma)$ with $\nu>2$ and support $\mathbb{R}^D$, 
\begin{align}
\varpost(\vtheta) = \frac{1}{Z_{\nu} |\vSigma|^{1/2}} \left(1+\tfrac{1}{\nu}r^2(\vtheta)\right)^{-(\nu+D)/2}.
\end{align}
Assume $f:\real^D\to\mathbb{R}$ is differentiable and integrable under $\varpost$, and that boundary terms vanish (e.g. $f$ and $\nabla_{\text{\vtheta}}f$ are bounded; it suffices that $\varpost(\vtheta)$ decays polynomially).

We show \cref{eq:bonnet_qgauss} holds componentwise. Fix $i\in\{1,\dots,D\}$. By dominated convergence we may differentiate under the integral:
\begin{align}
\frac{\partial}{\partial \vm_i} \myexpect_{\varpost}[f(\vtheta)] 
&= \int_{\mathbb{R}^D} f(\vtheta)\,\frac{\partial}{\partial \vm_i}\varpost(\vtheta)\,d\vtheta.
\end{align}
Since $\varpost(\vtheta)$ depends on $(\vtheta,\vm)$ only through $(\vtheta-\vm)$, we have the identity
\begin{align}
\frac{\partial}{\partial \vm_i}\varpost(\vtheta) = -\frac{\partial}{\partial \vtheta_i}\varpost(\vtheta).
\end{align}
Therefore,
\begin{align}
\frac{\partial}{\partial \vm_i}\,\myexpect_{\varpost}[f(\vtheta)]
&= -\int f(\vtheta) \frac{\partial}{\partial \vtheta_i}\varpost(\vtheta) d\vtheta
= -\int \frac{\partial}{\partial \vtheta_i} \left(\varpost(\vtheta) f(\vtheta)\right) d\vtheta
+\int \varpost(\vtheta) \frac{\partial f(\vtheta)}{\partial \vtheta_i} d\vtheta.
\end{align}
The total-derivative term vanishes by the boundary assumption, and thus
\begin{align}
\frac{\partial}{\partial \vm_i} \myexpect_{\varpost}[f(\vtheta)]
= \myexpect_{\varpost} \left[\frac{\partial f(\vtheta)}{\partial \vtheta_i}\right].
\end{align}
Stacking coordinates gives the vector form \cref{eq:bonnet_qgauss}: $\nabla_{\text{\vm}}\myexpect_{\varpost}[f(\vtheta)] = \myexpect_{\varpost}[\nabla_{\text{\vtheta}}f(\vtheta)]$.

\subsection{Price-type identities} \label{app:price}

Let $f$ be twice differentiable with $\nabla_{\text{\vtheta}} f$ bounded and $\nabla_{\text{\vtheta}}^2 f$ integrable. Then the gradient with respect to the scale parameter admits a generalized Price identity of the form
\begin{align}
\nabla_{\text{\vSigma}} \myexpect_{\varpost} \left[f(\vtheta)\right]
= \frac12 \myexpect_{\varpost} \left[\tau_{\postdeformidx}(\vtheta) \nabla_{\text{\vtheta}}^2 f(\vtheta)\right],
\label{eq:price_tau_form}
\end{align}
where the scalar factor $\tau_{\postdeformidx}(\vtheta)$ depends on the value of $\postdeformidx$:
\begin{align}
&\tau_{\postdeformidx}(\vtheta)=
\begin{cases}
\displaystyle
\frac{1-\postdeformidx}{2(2-\postdeformidx)}w_{2-q},
& \postdeformidx<1,\\
\displaystyle
\frac{\nu}{\nu+D-2}w_{2-q} =
\frac{\nu+r^2(\vtheta)}{\nu+D-2},
& \postdeformidx>1,  
\end{cases} 
\quad \text{where } \\
&w_{2-\postdeformidx}(\vtheta) =
\begin{cases}
R^2(\postdeformidx,D)-r^2(\vtheta), & \postdeformidx<1,\\
1+\tfrac{1}{\nu}r^2(\vtheta), & \postdeformidx>1.
\end{cases}
\end{align}

\paragraph{Escort form. } 
For both ranges of $q$
\begin{align}
\tilde \pi^{2-\postdeformidx}(\vtheta) 
\propto\ \varpost(\vtheta)w_{2-\postdeformidx}(\vtheta).
\end{align}
This means that \cref{eq:price_tau_form} can be rewritten compactly in a form that resembles the Gaussian Stein identity as
\begin{equation}
\nabla_{\text{\vSigma}} \myexpect_{\varpost} \left[f(\vtheta)\right] =
\frac{c_{\postdeformidx}}{2} \myexpect_{\tilde \pi^{2-\postdeformidx}} \left[\nabla_{\text{\vtheta}}^2 f(\vtheta)\right],
\label{eq:price_escort_form}
\end{equation}
with the constant
\begin{align}
c_{\postdeformidx} = \myexpect_\pi \tau_q (\vtheta) = 
\begin{cases}
\kappa(\postdeformidx,D), & \postdeformidx<1,\\[0.25em]
\dfrac{\nu}{\nu-2}, & \postdeformidx>1\ (\nu>2).
\end{cases}
\end{align}
When $\postdeformidx\to 1$ we retrieve the Gaussian identity $\nabla_{\text{\vSigma}} \myexpect_{\text{\gauss}(\text{\vm}, \text{\vSigma})}[f] = \frac12 \myexpect[\nabla_{\text{\vtheta}}^2 f]$.

\subsubsection{Diagonal Stein estimator}\label{app:stein_score}

Eq.~\eqref{eq:price_escort_form} shows that the gradient with respect to the scale parameter depends on the Hessian of $f$ averaged under the complementary escort $\tilde \pi^{2-\postdeformidx}$.
In our $\postdeformidx$-VON algorithms we restrict to diagonal $\vSigma$ and only require the diagonal of $\myexpect_{\tilde \pi^{2-\postdeformidx}}[\nabla_{\text{\vtheta}}^2 f(\vtheta)]$.
To avoid explicit Hessian evaluations, we use a Stein identity to rewrite these escort Hessian
diagonal entries in terms of first derivatives evaluated at samples $\vtheta \sim \varpost$.

Let $f:\real^D\to\mathbb{R}$ be differentiable and integrable under $\varpost$. Stein-type identities take the following score form
\begin{align}
\myexpect_{\varpost} \left[\nabla_{\text{\vtheta}} f(\vtheta)\right]
= -\myexpect_{\varpost} \left[f(\vtheta) \nabla_{\text{\vtheta}}\log \varpost(\vtheta)\right],
\label{eq:stein_score_general}
\end{align}
provided boundary terms vanish. For $\postdeformidx>1$ (unbounded support) a Stein-type identity has been proven by \citet{landsman2015some}. For $\postdeformidx<1$ the support is bounded but $\varpost(\vtheta)$ vanishes at the boundary, so a similar identity holds under mild regularity conditions on $f$ as shown in \cite{qsteincpal}.

For $\postdeformidx$-Gaussian densities with scale matrix $\vSigma$ the scores can be written explicitly as
\begin{align}
\nabla_{\text{\vtheta}}\log \varpost(\vtheta) = - \frac{1}{(2-q) \tau_q (\vtheta)} \vSigma^{-1}(\vtheta-\vm)
\end{align}

Equivalently, and more conveniently for our purposes, these score functions imply an escort covariance form
Stein identity: 
\begin{align}\label{eq:stein_escort_cov_form}
\myexpect_{\varpost} \left[(\vtheta-\vm) f(\vtheta)\right] = c_{\postdeformidx} \vSigma
\myexpect_{\tilde \pi^{2-\postdeformidx}} \left[\nabla_{\text{\vtheta}} f(\vtheta)\right].
\end{align}

Algs.~\ref{alg:qvon}-\ref{alg:tvon} only require an estimate of a diagonal $\myexpect_{\tilde \pi^{2-\postdeformidx}}[\nabla_{\text{\vtheta}}^2 f(\vtheta)]$. Assume $\vSigma=\mathrm{diag}(\vsigma^2)$ and define $\widehat\vg=\nabla_{\text{\vtheta}}f(\vtheta)$.
Then, componentwise,
\begin{align}
\myexpect_{\tilde \pi^{2-\postdeformidx}} \left[\frac{\partial^2 f(\vtheta)}{\partial \vtheta_i^2}\right]
= \frac{1}{c_{\postdeformidx}} \myexpect_{\varpost} \left[\frac{\partial f(\vtheta)}{\partial \vtheta_i} \frac{\vtheta_i-\vm_i}{\vsigma_i^2} \right],
\qquad i=1,\dots,D . 
\label{eq:diag_stein_estimator}
\end{align}
Thus a single-sample unbiased estimator for the diagonal Hessian under the complementary escort is
\begin{align}
\widehat \vh = \frac{1}{c_{\postdeformidx}} \widehat \vg \cdot (\vtheta-\vm) / \vsigma^2,
\quad \vtheta\sim \varpost,
\end{align}
which is exactly the form used in Algs.~\ref{alg:qvon} and ~\ref{alg:tvon}.

\subsubsection{Proof of the $q>1$ Student-$t$ Price-type identity }

For a proof of the $q<1$ Price-type identity see \cite{qsteincpal}. 

Let $\varpost(\vtheta)=\mathcal{T}_{\nu}(\vtheta\mid \vm,\vSigma)$ with $\nu>2$.
Assume $f$ is twice continuously differentiable, integrable under $\varpost$, and that boundary terms vanish for the integrations by parts below (it is sufficient that $f$ and $\nabla_{\text{\vtheta}}f$ are bounded).

We prove the componentwise identity
\begin{align}\label{eq:price_studentt_component_proof}
\frac{\partial}{\partial \vSigma_{ij}}\,\myexpect_{\varpost}[f(\vtheta)]
= \frac{1}{2}\myexpect_{\varpost}\!\left[
\frac{\nu+r^2(\vtheta)}{\nu+D-2}\;
\frac{\partial^2 f(\vtheta)}{\partial \vtheta_i\,\partial \vtheta_j}
\right],
\qquad i,j\in\{1,\dots,D\},
\end{align}
which is equivalent to \cref{eq:price_tau_form} for $\postdeformidx>1$ since $\frac{\nu+r^2}{\nu+D-2} = \frac{\nu}{\nu+D-2} \left(1+\frac{1}{\nu}r^2\right)$.

\paragraph{Derivative with respect to $\vSigma$ of the expectation. } By dominated convergence,
\begin{align}
\frac{\partial}{\partial \text{\vSigma}_{ij}} \myexpect_{\varpost}[f(\vtheta)]
&= \int f(\vtheta) \frac{\partial}{\partial \text{\vSigma}_{ij}}\varpost(\vtheta) d\vtheta
= \myexpect_{\varpost} \left[f(\vtheta) \frac{\partial}{\partial \text{\vSigma}_{ij}} \log \varpost(\vtheta)\right].
\end{align}
Writing $ \log \varpost(\vtheta)=\text{const.} - \tfrac12 \log |\vSigma| - \tfrac{\nu+D}{2} \log \left(1+\tfrac{1}{\nu}r^2(\vtheta)\right)$, and using $\frac{\partial}{\partial \text{\vSigma}_{ij}} \log |\vSigma| = (\vSigma^{-1})_{ji}$
and $\frac{\partial}{\partial \vSigma_{ij}}r^2(\vtheta) = -(\vSigma^{-1}(\vtheta-\vm)(\vtheta-\vm)^\top\vSigma^{-1})_{ij}$, we obtain
\begin{align}\label{eq:studentt_dlogq_dSigma}
\frac{\partial}{\partial \vSigma_{ij}}\log \varpost(\vtheta)
= -\frac{1}{2}(\vSigma^{-1})_{ij} +\frac{\nu+D}{2(\nu+r^2(\vtheta))} \left(\vSigma^{-1}(\vtheta-\vm)(\vtheta-\vm)^\top\vSigma^{-1}\right)_{ij}.
\end{align}
Define $\va(\vtheta)=\vSigma^{-1}(\vtheta-\vm)$ so that $r^2(\vtheta) = (\vtheta-\vm)^\top\va(\vtheta)$ and $\left(\vSigma^{-1}(\vtheta-\vm)(\vtheta-\vm)^\top\vSigma^{-1}\right)_{ij} = \va_i(\vtheta) \va_j(\vtheta)$.
Then \cref{eq:studentt_dlogq_dSigma} becomes
\begin{align}\label{eq:studentt_dlogq_dSigma_a}
\frac{\partial}{\partial \vSigma_{ij}}\log \varpost(\vtheta)
= -\frac{1}{2}(\vSigma^{-1})_{ij} +\frac{\nu+D}{2(\nu+r^2(\vtheta))} \va_i(\vtheta)\va_j(\vtheta).
\end{align}
Therefore,
\begin{align}\label{eq:lhs_price_studentt}
\frac{\partial}{\partial \vSigma_{ij}} \myexpect_{\varpost}[f(\vtheta)]
= \myexpect_{\varpost} \left[f(\vtheta) \left(-\frac{1}{2}(\vSigma^{-1})_{ij} + \frac{\nu+D}{2(\nu+r^2(\vtheta))} \va_i(\vtheta)\va_j(\vtheta)\right) \right].
\end{align}

\paragraph{Integration by parts to express \cref{eq:lhs_price_studentt} in terms of the Hessian. } We use the Student-$t$ score:
\begin{align}\label{eq:studentt_score_used}
\frac{\partial}{\partial \vtheta_i}\log \varpost(\vtheta)
= -\frac{\nu+D}{\nu+r^2(\vtheta)} \va_i(\vtheta).
\end{align}
An important simplification is 
\begin{align}\label{eq:key_identity_q_weight}
\frac{\partial}{\partial \vtheta_i} \left(\varpost(\vtheta) (\nu+r^2(\vtheta))\right)
= -(\nu+D-2) \varpost(\vtheta) \va_i(\vtheta),
\end{align}
which follows from the product rule, \cref{eq:studentt_score_used}, and $\frac{\partial}{\partial \vtheta_i}r^2(\vtheta)=2\,\va_i(\vtheta)$.

Let $w(\vtheta) = \frac{\nu+r^2(\text{\vtheta})}{\nu+D-2}$. Then \cref{eq:key_identity_q_weight} is equivalent to
\begin{align}\label{eq:derivative_qw}
\frac{\partial}{\partial \vtheta_i}\left(\varpost(\vtheta) w(\vtheta)\right)
= -\varpost(\vtheta) \va_i(\vtheta).
\end{align}
Now integrate by parts twice:
\begin{align}
\myexpect_{\varpost} \left[w(\vtheta) \frac{\partial^2 f(\vtheta)}{\partial \vtheta_i\partial \vtheta_j}\right]
&= \int \varpost(\vtheta) w(\vtheta) \frac{\partial^2 f(\vtheta)}{\partial \vtheta_i\partial \vtheta_j} d\vtheta
= -\int \frac{\partial}{\partial \vtheta_i} \left(\varpost(\vtheta) w(\vtheta)\right) \frac{\partial f(\vtheta)}{\partial \vtheta_j} d\vtheta \nonumber\\
&\overset{\cref{eq:derivative_qw}}{=}
\int \varpost(\vtheta) \va_i(\vtheta) \frac{\partial f(\vtheta)}{\partial \vtheta_j} d\vtheta
= -\int \frac{\partial}{\partial \vtheta_j} \left(\varpost(\vtheta) \va_i(\vtheta)\right) f(\vtheta) d\vtheta.
\label{eq:ibp_two_steps}
\end{align}
We compute the derivative inside the last integral. Since $\frac{\partial}{\partial \text{\vtheta}_j} \va_i(\vtheta) = (\vSigma^{-1})_{ij}$ and $\frac{\partial}{\partial \text{\vtheta}_j} \varpost(\vtheta) = \varpost(\vtheta) \frac{\partial}{\partial \text{\vtheta}_j} \log \varpost(\vtheta)$, we get
\begin{align}
\frac{\partial}{\partial \vtheta_j} \left(\varpost(\vtheta) \va_i(\vtheta)\right)
&= \varpost(\vtheta) (\vSigma^{-1})_{ij} +\varpost(\vtheta) \va_i(\vtheta) \frac{\partial}{\partial \vtheta_j} \log\varpost(\vtheta) \nonumber\\
&\overset{\cref{eq:studentt_score_used}}{=}
\varpost(\vtheta) (\vSigma^{-1})_{ij} -\varpost(\vtheta) \va_i(\vtheta) \frac{\nu+D}{\nu+r^2(\vtheta)} \va_j(\vtheta).
\end{align}
Substituting into \cref{eq:ibp_two_steps} yields
\begin{align}\label{eq:hess_to_score_studentt}
\myexpect_{\varpost} \left[w(\vtheta) \frac{\partial^2 f(\vtheta)}{\partial \vtheta_i\partial \vtheta_j}\right]
= \myexpect_{\varpost} \left[f(\vtheta)\left(-(\vSigma^{-1})_{ij} + \frac{\nu+D}{\nu+r^2(\vtheta)} \va_i(\vtheta)\va_j(\vtheta) \right) \right].
\end{align}
Comparing \cref{eq:hess_to_score_studentt} with \cref{eq:lhs_price_studentt} gives exactly the Price-type identity \cref{eq:price_studentt_component_proof}:
\begin{align}
\frac{\partial}{\partial \vSigma_{ij}} \myexpect_{\varpost}[f(\vtheta)]
= \frac12 \myexpect_{\varpost} \left[w(\vtheta) \frac{\partial^2 f(\vtheta)}{\partial \vtheta_i\partial \vtheta_j}\right],
\end{align}
which completes the proof.

\section{Tsallis Negentropy and $q$-VON}\label{sec:qvon}

As the main example we replace the Shannon negentropy by the Tsallis negative entropy $\Omega_{2-\postdeformidx}$, which is maximized by $\postdeformidx$-exponential families (see App.~\cref{app:maxent} for a proof). When we use quadratic statistics the maxent family becomes a $\postdeformidx$-Gaussian, which has bounded support for $\postdeformidx<1$ and heavy tails for $\postdeformidx>1$. Respectively, these lead to the $\postdeformidx$-VON and Student-$t$-VON algorithms. Thus, the quadratic statistic remains the same, while the choice of $\Omega$ changes the support and tail behavior.
\begin{align}\label{eq:tsallis}
    \Omega_{2-\postdeformidx}(\pi) = \left\{
    \begin{array}{ll}
       \hspace{-0.15cm}\frac{\mathbb{E}_{\pi}[\pi^{1-q} - 1] }{\bigl( (2-q)^2-(2-q) \bigr)} & \text{if $q \ne 1$,} \\
       \hspace{-0.15cm}\mathbb{E}_{\pi}[\log \pi]
         & \text{if $q = 1$,} 
    \end{array}
    \right.
\end{align}
Using entropic index $2-\postdeformidx$, and linear expectation constraints, the maxent problem in \cref{eq:maxent_psi} is optimized by $\postdeformidx$-exponential families (see App.~\ref{ssec:tsallis_maxent} for a proof):\footnote{We assume a constant base measure for simplicity; what follows would work with general base measures.} 
\begin{align}\label{eq:deformed_exp_families}
    \varpost(\vtheta) = \exp_{\postdeformidx}(\vlambda^\top \vT(\vparam) - A_{2-\postdeformidx}(\vlambda)),
\end{align}
where $A_{2-\postdeformidx}(\vlambda)$ is the deformed cumulant (we give detailed closed form examples in App.~\ref{app:q_gauss}; see also \citet{naudts2011generalised, amari2011geometry, amari2016information, martins2022sparse}). 
The $\postdeformidx$-exp and $\postdeformidx$-log functions are defined as
\begin{align} \label{eq:defomed_exp}
    \exp_{\postdeformidx}(u) = \left\{
    \begin{array}{ll}
        [1 + (1-\postdeformidx) u]_+^{1/(1-\postdeformidx)} & \text{if $\postdeformidx \ne 1$} \\
        \exp u & \text{if $\postdeformidx=1$}
    \end{array}
    \right.,
    \qquad
    \log_{\postdeformidx}(u) = 
    \left\{
    \begin{array}{ll}
        \frac{u^{1-\postdeformidx}-1}{1-\postdeformidx} & \text{if $\postdeformidx \ne 1$} \\
        \log u & \text{if $\postdeformidx=1$}
    \end{array}
    \right.,
\end{align}
and exponential families are recovered when $\postdeformidx=1$.

We will repeatedly use escort distributions associated with a density $\pi$. For any order $a>0$ such that $\int \pi(\vtheta)^a d\vtheta < \infty$, we define the escort of order $a$ by
\[
\tilde{\pi}^{a}(\vtheta) = \frac{\pi(\vtheta)^a}{\int \pi(\vtheta)^a d\vtheta}.
\]
In the Tsallis setting below, the orders $a=\postdeformidx$ and $a=2-\postdeformidx$ arise naturally in the gradient identities and in the Newton-like updates.

When $q$-exponential families are derived by maximum Tsallis entropy subject to linear expectation constraints, $A_{2-\postdeformidx}$ is related to $\Omega$ and $\Omega^*$ as follows \citep[Prop.10, App. B]{martins2022sparse}:
\begin{align}\label{eq:deformed_exp_formulae}
    \Omega_{2-\postdeformidx}(\varpost) &= \frac{1}{2-\postdeformidx} (\vlambda^\top \vmu(\vlambda) - A_{2-\postdeformidx}(\vlambda))\nonumber\\
    \Omega_{2-\postdeformidx}^*(\eta_{\text{\vlambda}}) &= (1- \postdeformidx) \Omega_{2-\postdeformidx}(\varpost) + A_{2-\postdeformidx}(\vlambda)
    = \frac{1}{2-\postdeformidx} \left( (1-\postdeformidx) \vlambda^\top \vmu(\vlambda) + A_{2-\postdeformidx}(\vlambda)\right),
\end{align}
where $\vmu(\vlambda) = \myexpect_{\text{\vtheta} \sim \varpost}[\vT(\vparam)] = \nabla_{\text{\vlambda}} \Omega_{2-\postdeformidx}^*(\eta_{\text{\vlambda}})$, and
\begin{align}\label{eq:fisher}
    \frac{\partial \text{\vmu}(\text{\vlambda})}{\partial \text{\vlambda}} = \nabla^2_{\text{\vlambda}} \Omega^*_{2-\postdeformidx}(\eta_{\text{\vlambda}}) =: \vF_{2-\postdeformidx}(\vlambda). 
\end{align}
\cref{eq:fisher} is useful in invoking the chain rule to convert gradients into natural or canonical gradients and vice-versa, and the Hessian of $\Omega^*$ with respect to the canonical parameters $\vF_{2-\postdeformidx}$ recovers the Fisher matrix when $\postdeformidx=1$. 

From \cref{eq:deformed_exp_formulae} we obtain:
\begin{align*}
    \nabla_{\text{\vmu}} \Omega_{2-\postdeformidx}(\varpost) &= \nabla_{\text{\vmu}} \myexpect_{\varpost}\left[\frac{1}{2-\postdeformidx}\log_{\postdeformidx} \varpost(\vtheta)\right]\nonumber\\
    &= \frac{1}{2-\postdeformidx} \nabla_{\text{\vmu}} \int_{\text{\vtheta}} \varpost(\vtheta) [\vlambda^\top \vT(\vtheta) - A_{2-\postdeformidx}(\vlambda)]\nonumber\\
    &= \frac{1}{2-\postdeformidx} \nabla_{\text{\vmu}} [\vlambda^\top \underbrace{\myexpect_{\varpost}[\vT(\vtheta)]}_{\text{\vmu}(\text{\vlambda})} - A_{2-\postdeformidx}(\vlambda)] \nonumber\\
    &= \frac{1}{2-\postdeformidx} \left[ \frac{\partial \text{\vlambda}}{\partial \text{\vmu}} \vmu + \vlambda - \frac{\partial\vlambda}{\partial \vmu} \frac{\partial A_{2-\postdeformidx}(\vlambda)}{\partial \vlambda} \right]\nonumber\\
    &= \frac{1}{2-\postdeformidx} \left[\vlambda + \vF_{2-\postdeformidx}^{-1}(\vlambda) (\vmu - \tmu_{\postdeformidx}) \right].
\end{align*}
We define $\tmu_{\postdeformidx} = \nabla A_{2-\postdeformidx}(\vlambda) = \myexpect_{\tilde{\pi}^{\postdeformidx}} [\vT(\vparam) ] $ as the expectation with respect to the escort distribution  $\tilde{\pi}^{\postdeformidx}$ of order $\postdeformidx$, which is defined as $\tilde{\pi}^{\postdeformidx} = \pi^{\postdeformidx}/\int \pi^{\postdeformidx} d \text{\vtheta}$. \cref{eq:deformed_exp_formulae} further implies that
\begin{align}
    \vmu - \tilde{\vmu}_{\postdeformidx} = ( 1- \postdeformidx) \vF_{2-\postdeformidx} \vlambda,
\end{align}
and substituting above, it follows that when $\Omega$ is the Tsallis negentropy
\begin{align}\label{eq:grad_entropy}
    \nabla_{\text{\vmu}} \Omega_{2-\postdeformidx} (\varpost) = \frac{1}{2-\postdeformidx}(\vlambda + (1-\postdeformidx)\vlambda) = \vlambda, 
\end{align}
which we use below to derive Newton-like generalized BLR algorithms (\cref{alg:qvon}, \cref{alg:tvon}).

\subsection{$\postdeformidx$-Variational Online Newton}\label{ssec:qvon}

We now specialize the generalized BLR to the quadratic statistic $\vT(\vparam)^\top = (\vparam,\vparam\vparam^\top)$, $\vtheta \in \real^D$, which induces a location-scale family of $\postdeformidx$-Gaussians.
The derivation in this subsection proceeds in three steps. First, we write the generic canonical update for this quadratic family in terms of the expectation parameters $(\vmu,\vM)$. Second, we convert gradients with respect to $(\vmu,\vM)$ into gradients with respect to the location-scale parameters $(\vm,\vSigma)$. Third, we specialize our result to bounded-support $\postdeformidx<1$ and the heavy-tailed $\postdeformidx>1$ cases, where Bonnet- and Price-type identities yield the Newton-like updates implemented in \cref{alg:qvon} and \cref{alg:tvon}.
The derivation follows immediately below the final algorithms, which we present first for orientation.

First, we connect the pseudocode to the general canonical update in \cref{eq:non_blr}. 
In both $\postdeformidx<1$ and $\postdeformidx>1$ cases we start from the same canonical update, and what changes is the relation between expectation parameters and location-scale parameters, as well as the corresponding stochastic identities used to evaluate the gradients. In Variational Online Newton (VON) gradients and Hessians are estimated by sampling from a Gaussian variational posterior \cite{khan2018fast}. VON-style natural-gradient steps result in fast convergence and competitive test accuracy in large-scale deep learning applications \citep{shen2402variational}. Here, we generalize these updates to bounded-support (\cref{alg:qvon}) and heavy-tailed (\cref{alg:tvon}) $q$-Gaussian posteriors in a unified way. Remarkably, our generalizations require only small changes in the VON pseudo-code \citep[Alg.~1]{shen2402variational}, highlighted in blue.

Using the Tsallis negentropy $\Omega_{2-\postdeformidx}$ in \cref{eq:non_blr} and sufficient statistics $\vT(\vparam)^\top = (\vparam, \vparam\vparam^\top)$ we obtain $\postdeformidx$-Gaussian families with canonical parameters $\vnatparam^\top = (\vS\vm, -\half \vS)$.
$\postdeformidx$-Gaussians include both bounded-support and heavy-tailed (Student-$t$) subfamilies. The sufficient statistics and canonical parameters determine the elliptical geometry (that is, the quadratic form), but the value of $\postdeformidx$ determines the support and tail behavior. We derive the specific forms that \cref{eq:non_blr} takes for $\postdeformidx<1$ and $\postdeformidx>1$, resulting respectively in \cref{alg:qvon} and \cref{alg:tvon}, where in blue we highlight differences from the BLR's Variational Online Newton form \cite{khan2018fast} and IVON~\citep[Alg.~1]{shen2402variational}. Detailed background on $q$-Gaussians can be found in Appendix~\ref{app:q_gauss}.

We begin by writing \cref{eq:non_blr} more explicitly as 
\begin{align}\label{eq:q_gauss_lr}
    \vlambda_{t+1} =  
    (1-\rho_t) \vlambda_t - \rho_t 
    \begin{bmatrix}
    \nabla_{\text{\vmu}} \myexpect_{\varpost}   \bar \ell(\vtheta) \\ 
    \nabla_{\text{\vM}} \myexpect_{\varpost}   \bar \ell(\vtheta)
    \end{bmatrix}.
\end{align}
For both $q<1$ and $q>1$ the expectation parameters are
\begin{align} \label{eq:exp_param}
    \begin{bmatrix} 
    \vmu \\ 
    \vM   
    \end{bmatrix}  =
    \begin{bmatrix} 
    \myexpect_{\varpost} [\text{\vtheta} ] \\ 
    \myexpect_{\varpost} [\text{\vtheta} \text{\vtheta}^\top ]  
    \end{bmatrix}  
    = \begin{bmatrix} \vm \\
    \Var_{\varpost} (\vtheta) + \vm \vm^\top  
    \end{bmatrix}.
\end{align}
Importantly, $\Var (\vtheta)$ has distinct forms for $q<1$ and $q>1$, and it is \textit{not} equal to the canonical parameter $\vS^{-1}$ or the scale parameter $\vSigma$, unless $\postdeformidx=1$ in which case \smash{$\vS^{-1} = \vSigma = \Var(\vtheta)$}.

\paragraph{Chain rule. } To use Bonnet- and Price-type stochastic gradient identities, we need a chain rule to express gradients with respect to expectation parameters $(\vmu, \vM)$ as gradients with respect location-scale parameters $\vm(\vmu), \vSigma(\vmu, \vM)$. For any function $f(\vm(\vmu), \vSigma(\vmu, \vM))$, a straightforward generalization of the well known Gaussian chain rule \citep{opper2009variational, khan2017conjugate} is 
\begin{align}
    \nabla_{\text{\vmu}} f \left(\vm, \vSigma\right) 
     = \nabla_{\text{\vm}} f \nabla_{\text{\vmu}} \vm  + \nabla_{\text{\vSigma}} f  \nabla_{\text{\vmu}} \vSigma \label{eq:grad_m1}, \\
    \nabla_{\text{\vM}} f\left(\vm, \vSigma \right) 
     = \nabla_{\text{\vm}} f  \nabla_{\text{\vM}} \vm + \nabla_{\text{\vSigma}}  f  \nabla_{\text{\vM}} \vSigma \label{eq:grad_m2}. 
\end{align}

\paragraph{Bounded support. } When $\postdeformidx<1$, bounded-support $q$-Gaussians have maximum support radius $R(q, D)$. The max support radius has a closed-form (reviewed in App.~\ref{app:q_gauss}), which avoids dependence on $\vS^{-1}$ by expressing the scale matrix as a determinant reweighting \smash{$\vSigma = |\vS|^{ 1/\gamma} \vS^{-1}$} with constant \smash{$\gamma = D + \frac{2}{1-\postdeformidx}$}. From \cref{eq:exp_param} we can show $\Var_{\varpost} (\vtheta) =  \kappa \vSigma $ with constant \smash{$\kappa = R^2/(D + 2(2-\postdeformidx)/(1-\postdeformidx))$} (see App.~\ref{ssec:expectation_params}). 

From \cref{eq:grad_m1} and \smash{$ \partial \text{\vSigma}/\partial \text{\vmu} = -2 \kappa^{-1} \vm $}, 
it follows that
\begin{align}
\nabla_{\text{\vmu}} f = \nabla_{\text{\vm}} f - \frac{2}{\kappa}  \nabla_{\mathbf{\Sigma}} f  \vm, 
\end{align}
and from \cref{eq:grad_m2} \smash{$ \nabla_{\text{\vM}} \vm = 0 $} and \smash{$\partial\text{\vSigma}/\partial \text{\vM} = \kappa^{-1}$} that
\begin{align}
\nabla_{\text{\vM}} f = \kappa^{-1} \nabla_{\text{\vSigma}} f.
\end{align}
Setting $f = \bar \ell$, \cref{eq:q_gauss_lr} becomes
\begin{align} \label{eq:bounded_qvon}
   \vlambda_{t+1} &= (1-\rho_t) \vlambda_t 
   - \rho_t  
   \begin{bmatrix}
   \nabla_{\text{\vm}} \underset{\text{\vtheta}\sim \varpost}{\myexpect}  \bar \ell(\vtheta) - 2 \kappa^{-1} \left( \nabla_{\text{\vSigma}} \underset{\text{\vtheta}\sim \varpost}{\myexpect} [ \bar \ell(\vtheta) ]\right)  \vm  \\
   \kappa^{-1} \nabla_{\text{\vSigma}} \underset{\text{\vtheta}\sim \varpost}{\myexpect} [\bar \ell(\vtheta)]
   \end{bmatrix}.
\end{align}
Rearranging \cref{eq:bounded_qvon}, and using Price- and Bonnet-type identities for bounded-support $\postdeformidx$-Gaussians, to push the gradients inside the expectations~\citep{qsteincpal}, we obtain the bounded-support Newton-like generalized BLR (\cref{alg:qvon}): 
\begin{align}
    \vS_{ t+1} &= (1-\rho_t) \vS_{ t} + \rho_t \frac{1}{\kappa D}  \left( \myexpect_{\pi_t} [r^2(\vtheta) ] \myexpect_{\tilde{\pi}_t^{2-\postdeformidx}} [\nabla_{\text{\vtheta}}^2 \bar \ell (\vtheta) ] \right) 
    = (1-\rho_t) \vS_{ t} + \rho_t  \myexpect_{\tilde{\pi}_t^{2-\postdeformidx}} [\nabla_{\text{\vtheta}}^2 \bar \ell (\vtheta) ] \label{eq:bounded_s_update}\\
    \vm_{t+1}  &= \vm_t - \rho_t \vS_{t+1}^{-1} \myexpect_{\pi_t} [\nabla  \bar \ell(\vtheta)], \label{eq:bounded_m_update}
\end{align}
where $r^2(\vtheta) = (\vtheta-\vm)^\top\vSigma^{-1}(\vtheta-\vm)$ and $\tilde{\pi}^{2-\postdeformidx}$ is the escort distribution of order $2-\postdeformidx$ of $\varpost$. 
The final form of \cref{eq:bounded_s_update} follows by canceling the factor $\kappa^{-1}D$ with $ \myexpect_{\pi} [r^2(\vtheta)]$, and expressing the expectation with respect to the $(2-\postdeformidx)$-escort.


\paragraph{Heavy tails. } $\postdeformidx$-Gaussians with $\postdeformidx>1$ resulting from \cref{eq:deformed_exp_families} coincide with Student-$t$ densities with degrees of freedom \smash{$\nu=2/(\postdeformidx-1)-D$} and unbounded support $\real^D$ (further details are reviewed in App.~\ref{app:q_gauss}). We assume \smash{$\nu>2 \iff 1<\postdeformidx<(D+4)/ (D+2)$} so that the covariance exists (and may optionally require $\nu>4$ if we need finite kurtosis). We write the Student-$t$ scale matrix as $\vSigma = \kappa_t |\vS|^{-1/\nu} \vS^{-1}$ with constant 
\[
\kappa_t(\nu, D) = \frac{\nu+D}{\nu} \left[ (\nu+D)^{D/2}\pi^{D/2}\frac{\Gamma(\nu/2)}{\Gamma((\nu+D)/2)} \right]^{2/\nu},
\]
where $\Gamma$ is the Gamma function. Using the Student-$t$ parametrization \smash{$\varpost(\vtheta)=\mathcal{T}_\nu(\vtheta \mid \vm,\vSigma)$}, we can derive from \cref{eq:exp_param} that $\Var_{\varpost}(\vtheta)=\nu/(\nu-2) \vSigma $ (see App.~\ref{ssec:expectation_params}). 

Thus, from \cref{eq:grad_m1} we obtain
\begin{align}
\nabla_{\text{\vmu}} f = \nabla_{\text{\vm}} f - 2  \frac{\nu-2}{\nu} \nabla_{\mathbf{\Sigma}} f  \vm,
\end{align}
and, from \cref{eq:grad_m2}, since \smash{$\partial \text{\vSigma}/\partial \text{\vmu} =-2/\nu (\nu-2) \vm$} and \smash{$\partial \text{\vSigma}/\partial \text{\vM} = (\nu-2)/ \nu $} 
\begin{align}
     \nabla_{\text{\vM}} f =  \frac{\nu-2}{\nu} \nabla_{\text{\vSigma}} f.
\end{align}
Again, setting $f=\bar\ell$, the chain rule of \cref{eq:q_gauss_lr} results in 
\begin{align}
\vlambda_{t+1} \label{eq:studentt_lambda_update}
&=(1-\rho_t)\vlambda_t  \\ \nonumber
&-\rho_t
\begin{bmatrix}
\nabla_{\text{\vm}} \myexpect_{\pi}[\bar\ell(\vtheta)]
-2\frac{\nu-2}{\nu} \left(\nabla_{\text{\vSigma}}\myexpect_{\pi}[\bar\ell(\vtheta)] \right) \vm
\\
\frac{\nu-2}{\nu}\nabla_{\text{\vSigma}} \myexpect_{\pi}[\bar\ell(\vtheta)] 
\end{bmatrix}.
\end{align}
In App.~\ref{app:bonnet_price_stein} we prove the Student-$t$ Bonnet-type identity $\nabla_{\text{\vm}}\myexpect_{\pi}[\bar\ell(\vtheta)] = \myexpect_{\pi}[\nabla_{\text{\vtheta}} \bar\ell(\vtheta)]$ and the Student-$t$ Price-type identity:
\begin{align}\label{eq:studentt_price}
\nabla_{\text{\vSigma}}\myexpect_{\pi}[\bar\ell(\vtheta)] &= \frac{\nu}{2(\nu+D-2)} \myexpect_{\pi} \left[\left(1+\tfrac{1}{\nu}r^2(\vtheta)\right)\nabla_{\text{\vtheta}}^2\bar\ell(\vtheta)\right].
\end{align}
Similarly to the bounded-support case, we can express the Price-type identity in terms of the $(2-\postdeformidx)$-escort 
\begin{align}
    \nabla_{\text{\vSigma}} \myexpect_{\pi}[\bar\ell(\vtheta)]
    = \frac{\nu}{2 (\nu-2)} \myexpect_{\tilde \pi} \left[\nabla_{\text{\vtheta}}^2\bar\ell(\vtheta)\right],
\label{eq:studentt_hess}
\end{align}
which is a neat fact that, as far as we know, has not been noticed before. Since \smash{$\postdeformidx=1+2/(\nu+D)$} and \smash{$\pi(\vtheta)\propto\left(1+ r^2(\vtheta)/\nu\right)^{-(\nu+D)/2}$}, with $r^2$ defined as in the bounded-support case, we can express the \smash{$(2-\postdeformidx)$-escort} as a reweighting of the base distribution $\pi$:
\begin{align}
    \tilde \pi(\vtheta)^{2-\postdeformidx} = \frac{\pi(\text{\vtheta})^{2-\postdeformidx}}{\int \pi(\text{\vtheta})^{2-\postdeformidx} d \text{\vtheta}} 
    \propto \pi(\vtheta)\left(1+\tfrac{r^2}{\nu}(\vtheta)\right).
\end{align}
For $\nu>2$, 
\[
\myexpect_{\varpost}[r^2(\vtheta)] = \frac{D \nu}{\nu-2}\Rightarrow \myexpect_{\varpost}[1+ \frac{r^2}{\nu}]=\frac{D+\nu-2}{\nu-2},
\]
and substituting into \cref{eq:studentt_price} yields \cref{eq:studentt_hess}.

Thus, we obtain the Student-$t$ Newton-like canonical parameter updates that are used in \cref{alg:tvon} by solving \cref{eq:studentt_lambda_update}:
\begin{align}
    \vS_{t+1} = (1-\rho_t)\vS_{t} +\rho_t \myexpect_{\tilde{\pi}_t^{2-\postdeformidx}} \left[\nabla_{\text{\vtheta}}^2\bar\ell(\vtheta)\right], \label{eq:studentt_sigma_nat_update} \\
    \vm_{t+1} =\vm_t-\rho_t \vS_{t+1}^{-1} \myexpect_{\pi_t}\left[\nabla_{\text{\vtheta}}\bar\ell(\vtheta)\right].
\label{eq:studentt_m_update}
\end{align}

\paragraph{Implementation. } In both the bounded-support and heavy-tailed cases we exploit escort distributions to mirror the familiar Gaussian stochastic gradient expressions in an elegant way, and to highlight the probabilistic structure that would otherwise be obscured by repeated integration by parts. For efficient sampling in \cref{alg:qvon} and \cref{alg:tvon}, we express $\tilde{\pi}^{2-\postdeformidx}$ as a reweighting of $\pi_{\text{\vlambda}}$. When $\postdeformidx>1$ the $(2-\postdeformidx)$-escort relatively upweights larger values of $r(\vtheta)$ since typical samples dominate $\myexpect_\pi[\nabla\bar\ell]$ and the mean update is mode-driven. The scale update in this case is more tail-aware since atypical large-radius samples influence the Hessian estimate. The combined effect is a conservative preconditioning because a larger effective curvature leads to smaller steps in $\vm$, but possibly higher variance in the curvature estimator. When $\postdeformidx<1$ the $(2-\postdeformidx)$-escort concentrates on the interior of the support. Qualitatively, in this case, the mean is more sensitive to the bounded-support shape, while the scale update has lower variance, leading to a stable scale estimate, with bounded exploration due to the compact support.

We make additional adjustments to the generalized BLR $q$-VON updates in \cref{eq:studentt_sigma_nat_update,eq:studentt_m_update} and \cref{eq:bounded_s_update,eq:bounded_m_update}, in order to easily implement them as optimizers in modern deep-learning frameworks. Our simplifications are analogous to \citet[App.~D]{khan2018fast}'s and \citet{shen2402variational}'s modifications of the BLR that lead to deep-learning-friendly VON and IVON algorithms.
First, the loss function is reweighted with an inverse temperature $\zeta > 0$ and scaled by $1/N$, that is, we use $\zeta \frac{1}{N} \sum_{i=1}^N \ell_i(\vparam)$ in \cref{eq:fy_obj}. This reweighting allows us to use mini-batch stochastic gradients along with sampling to estimate the expectations (lines~2 of \cref{alg:qvon} and \cref{alg:tvon}). 
Second, we use a diagonal $\vS = \text{diag}(\vs)$ and reparametrize it via $\vs = \zeta (\vh + \delta)$ (lines~7), where $\delta > 0$ is the $L_2$-regularizer, $-\eta_0(\vparam) = \frac{\zeta \delta}{2} \| \vparam \|^2$ in the generalized Bayesian objective of \cref{eq:fy_obj}. The $\vS$-updates in \cref{eq:bounded_s_update,eq:studentt_sigma_nat_update} are rewritten in terms of $\vh$ in lines~5 of the algorithms, and separate learning rates $(1 - \beta_2)$ and $\alpha_t$ are introduced in place of $\rho_t$. Similar to IVON~\citep{shen2402variational}, we add a gradient momentum in lines~4. This is not a direct consequence of the theory, but does not change the fixed-points of the methods and we found it improves results. The initial $\vm$ is set to common initialization schemes, and $\vS = \zeta (h_0 + \delta)\vI$ where $h_0 > 0$ is a hyperparameter. To stabilize the algorithms, we ensure that the noise added to the parameters remains bounded by clipping the entries of $\vs$ to a minimum $s_\text{min} > 0$.


\subsection{Mean-Field Variants}
\label{ssec:meanfield}
The constants  $\kappa_t$, $\kappa$ and $\gamma$ used in \cref{alg:qvon} and \cref{alg:tvon} depend on the dimensionality of the parameter $\vparam$. To further adapt the proposed $q$-VON algorithms to high-dimensional problems, we also propose mean-field variants, where each parameter follows a one-dimensional $q$-Gaussian. This corresponds to a factorized posterior, that is, $\pi(\vparam) = \pi(\param_1) \cdot \hdots \cdot \pi(\param_D)$. Mean-field variants of \cref{alg:qvon} and \cref{alg:tvon} are easily obtained when $\Omega$ in the generalized Bayesian variational problem~\cref{eq:fy_obj} is replaced with the sum of one-dimensional functions on each parameter. 

In practice, the only modification needed to \Cref{alg:qvon} and \Cref{alg:tvon} is to set $D=1$ in the computation of the constants, and compute $\vsigma^2$ in lines~8 individually for each $s_i$ rather than taking their product. The sampling step in lines~2 is also modified to sample from a one-dimensional $q$-Gaussian or Student-t distribution. In the remainder of the paper, we refer to the mean-field $q$-VON variants as $q$-VON-MF.

\section{Experiments}\label{app:experiments}

\subsection{Bayesian logistic regression}

We demonstrate how $q$-VON-MF with bounded support  (\Cref{alg:qvon},  \Cref{sec:qvon_summary}) can be used to approximate the posterior mode in Bayesian logistic regression on the UCI Credit Approval dataset (\Cref{fig:fig3}). Posterior marginals for each feature are computed with MCMC. We compare variational posterior approximations: mean-field Gaussian ($q=1$) to bounded-support $q$-Gaussians with $q=-2$ and $q=-10$.

\subsection{Image Classification with Neural Networks}
We use $q$-VON (\Cref{alg:qvon} and \Cref{alg:tvon}) and the mean-field variant $q$-VON-MF (described in \Cref{sec:qvon_summary}) to train neural networks on the MNIST, CIFAR-10 and CIFAR-100 datasets. We compare with several baseline learning algorithms on test accuracy, test loss (NLL), expected calibration error~\cite{guo2017calibration}, Brier score and the AUROC metric introduced by~\citet{osawa2019practical}. The baselines are SGD (with momentum) applied to the standard empirical risk minimization objective to find the MAP, variational SGD (VSGD) which is SGD with Gaussian noise injection~\citep{orvieto2022anticorrelated}, and the Improved Variational Online Newton method (IVON) \citep{shen2402variational}. In all experiments, we closely follow the  setup of \citet{shen2402variational} and use their publicly available code. In particular, we use the same hyperparameters for $q$-VON and $q$-VON-MF as IVON. IVON and VSGD target the standard Bayesian objective for Gaussian families with fixed isotropic variance and flexible diagonal variance respectively. Another baseline we consider is $q$-VSGD, which targets the generalized Bayesian objective but only learns the mean of the approximate posterior distribution. All algorithms have the same complexity and run-time, as each iteration is dominated by the cost of forward and backward propagation through the neural network. The results are in \Cref{tab:deeplearning} for MNIST and \Cref{tab:deeplearning2} for CIFAR-10 and CIFAR-100. Below we summarize our main findings.

\textit{$q$-VON consistently improves over $q$-VSGD and SGD.} Both on MNIST and CIFAR-10, we see that $q$-VON consistently improves over $q$-VSGD and SGD (MAP). This confirms that learning the shape of the approximate posterior rather than using a fixed shape improves both accuracy and calibration performance, and that the method is superior to pure noise injection approaches~\citep{orvieto2022anticorrelated}. 

\textit{$q$-VON can exceed IVON's performance. } On the MNIST dataset, the heavy-tailed variant of $q$-VON ($q > 1$) gives better generalization performance than IVON, whereas the compact support variant ($q<1$) gives better calibration. On CIFAR-10 and CIFAR-100, $q$-VON-MF reaches competitive performance to IVON, without tuning any of its hyperparameters. 

\textit{Posterior sampling helps on the CIFAR datasets.} For $q$-VON and IVON which estimate a distribution over weights, using the posterior predictive distribution generally boosts the performance. 

Overall, the experiments demonstrate that the proposed $q$-VON algorithms and the mean field variants easily scale to large neural networks. The methods significantly improve over MAP estimation and reach competitive performance with the recent state-of-the-art method IVON~\citep{shen2402variational}. We used the same setup and hyperparameters from~\citep{shen2402variational} (which were tuned for Gaussian posteriors). Using IVON-tuned hyperparameters, $q$-VON is already competitive; dedicated tuning remains future work.

\cref{tab:deeplearning2,tab:deeplearning} report results on deep learning with the proposed $q$-VON algorithms,  as described in \Cref{sec:qvon_summary}, App.~\ref{sec:qvon}.
$q$-VON ($q<1$) denotes \Cref{alg:qvon} with bounded-support Gaussian posterior and $q=0.6$. $q$-VON ($q>1$) denotes \Cref{alg:tvon} with Student-$t$ posterior with $\nu = 5 \cdot 10^5$. $q$-VON-MF refers to the mean-field variant; each parameter has its own $q$-Gaussian ($q=0.6$ for $q<1$ and $\nu=5$ for $q>1$).
 
\begin{table*}[h!]
\centering
\caption{
Results on convolutional neural networks. We evaluate various $q$-VON variants using the same hyperparameters used in~\citet{shen2402variational} for IVON. Even without tuning, the results are comparable to IVON, with the heavy-tailed $q$-VON-MF reaching slightly higher AUROC scores.}
\setlength{\tabcolsep}{4pt}
\renewcommand{\arraystretch}{1.12}
\resizebox{\textwidth}{!}{%
\begin{tabular}{llccccc}
\toprule
Task & Method & Acc. $(\uparrow)$ & NLL $(\downarrow)$ & ECE $(\downarrow)$ & Brier $(\downarrow)$ & AUROC $(\uparrow)$ \\
\midrule
\multirow{8}{*}{\parbox{2.2cm}{CIFAR-10\\ResNet-20\\270k params}} & SGD (MAP) & 91.2 $\pm$ 0.1 & 0.292 $\pm$ 0.001 & 3.7 $\pm$ 0.1 & 0.133 $\pm$ 0.001 & 91.9 $\pm$ 0.1 \\
 & VSGD & 91.9 $\pm$ 0.0 & 0.310 $\pm$ 0.005 & 4.5 $\pm$ 0.1 & 0.127 $\pm$ 0.001 & 92.1 $\pm$ 0.1 \\
 & IVON &  92.4 $\pm$ 0.1 & { 0.259} $\pm$ 0.006 & { 3.5} $\pm$ 0.2 & { 0.117} $\pm$ 0.002 & 92.4 $\pm$ 0.0 \\
& ~~+ sampling & {\bf 92.6} $\pm$ 0.1 & {\bf 0.221} $\pm$ 0.003 & {\bf 0.8} $\pm$ 0.1 & {\bf 0.109} $\pm$ 0.002 & 92.6 $\pm$ 0.1 \\
  & q-VSGD ($q<1$) & 91.9 $\pm$ 0.0 & 0.309 $\pm$ 0.008 & 4.5 $\pm$ 0.1 & 0.128 $\pm$ 0.002 & 92.0 $\pm$ 0.1 \\
 & q-VON ($q<1$) & 92.3 $\pm$ 0.2 & 0.271 $\pm$ 0.008 & 3.9 $\pm$ 0.2 & 0.119 $\pm$ 0.003 & 92.6 $\pm$ 0.2 \\
 & ~~+ sampling & 92.4 $\pm$ 0.2 & 0.243 $\pm$ 0.006 & 2.6 $\pm$ 0.2 & 0.113 $\pm$ 0.003 & {\bf 92.7} $\pm$ 0.2 \\
 & q-VON-MF ($q>1$) & 92.4 $\pm$ 0.0 & 0.270 $\pm$ 0.003 & 3.8 $\pm$ 0.1 & 0.118 $\pm$ 0.001 & 92.5 $\pm$ 0.2 \\
 & ~~+sampling  & 92.5 $\pm$ 0.0 & 0.229 $\pm$ 0.002 & 1.5 $\pm$ 0.1 & 0.110 $\pm$ 0.001 & {\bf 92.7} $\pm$ 0.1 \\
\midrule 
\multirow{8}{*}{\parbox{2.2cm}{CIFAR-100\\PreResNet-110\\4M params}} & SGD (MAP) & 73.0 $\pm$ 0.2 & 1.270 $\pm$ 0.009 & 14.1 $\pm$ 0.1 & 0.410 $\pm$ 0.002 & 86.1 $\pm$ 0.1 \\
 & VSGD & 73.8 $\pm$ 0.2 & 1.319 $\pm$ 0.009 & 15.5 $\pm$ 0.2 & 0.410 $\pm$ 0.003 & 86.1 $\pm$ 0.1 \\
 & IVON & {\ 75.1} $\pm$ 0.3 & { 1.150} $\pm$ 0.012 & { 13.8} $\pm$ 0.3 & { 0.382} $\pm$ 0.004 & { 86.8} $\pm$ 0.2 \\
 & ~~ + sampling & {\bf 75.7} $\pm$ 0.2 & {\bf 0.880} $\pm$ 0.006 & {\bf 2.9} $\pm$ 0.2 & {\bf 0.336} $\pm$ 0.003 & { 87.0} $\pm$ 0.2 \\
 & q-VSGD ($q<1$) & 73.3 $\pm$ 0.1 & 1.326 $\pm$ 0.001 & 15.6 $\pm$ 0.1 & 0.416 $\pm$ 0.002 & 86.1 $\pm$ 0.2 \\
 & q-VON ($q<1$) & 74.6 $\pm$ 0.2 & 1.216 $\pm$ 0.008 & 14.0 $\pm$ 0.1 & 0.391 $\pm$ 0.002 & 86.3 $\pm$ 0.1 \\
 & ~~+ sampling & 74.7 $\pm$ 0.2 & 1.189 $\pm$ 0.008 & 13.3 $\pm$ 0.1 & 0.387 $\pm$ 0.002 & 86.4 $\pm$ 0.1 \\
 & q-VON-MF ($q>1$) & 74.9 $\pm$ 0.1 & 1.204 $\pm$ 0.008 & 14.4 $\pm$ 0.1 & 0.389 $\pm$ 0.002 & 86.7 $\pm$ 0.1 \\
 & ~~ + sampling & 75.4 $\pm$ 0.1 & 0.930 $\pm$ 0.005 & 5.6 $\pm$ 0.1 & 0.344 $\pm$ 0.001 & {\bf 87.1} $\pm$ 0.1\\
\addlinespace[2pt]
\bottomrule
\end{tabular}%
}
\label{tab:deeplearning2}
\end{table*}

\begin{table*}[h!]
\centering
\caption{Additional results on MNIST. Heavy-tailed posteriors can improve generalization performance, whereas truncated ones can improve calibration. }
\setlength{\tabcolsep}{4pt}
\renewcommand{\arraystretch}{1.12}
\resizebox{\linewidth}{!}{%
\begin{tabular}{llccccc}
\toprule
Task & Method & Acc. $(\uparrow)$ & Test Loss $(\downarrow)$ & ECE $(\downarrow)$ & Brier $(\downarrow)$ & AUROC $(\uparrow)$ \\
\midrule
\addlinespace[2pt]
\midrule
\multirow{13}{*}{\parbox{2.2cm}{MNIST\\MLP \\ 550k params}} & SGD (MAP) & 98.69 $\pm$ 0.01 & 0.043 $\pm$ 0.000 & 0.37 $\pm$ 0.02 & 0.020 $\pm$ 0.000 & 97.87 $\pm$ 0.07 \\
 & VSGD & 98.75 $\pm$ 0.03 & 0.048 $\pm$ 0.000 & 0.60 $\pm$ 0.02 & 0.020 $\pm$ 0.000 & 97.68 $\pm$ 0.05 \\
 & IVON & 98.74 $\pm$ 0.02 & 0.048 $\pm$ 0.001 & 0.60 $\pm$ 0.02 & 0.020 $\pm$ 0.000 & 97.52 $\pm$ 0.10 \\
 & ~~+ sampling & 98.72 $\pm$ 0.01 & 0.050 $\pm$ 0.000 & 1.53 $\pm$ 0.02 & 0.022 $\pm$ 0.000 & 97.88 $\pm$ 0.05 \\
 \addlinespace[2pt]
\cmidrule(lr){2-7} \addlinespace[2pt] 
 \addlinespace[2pt]
 & q-VSGD ($q<1$) & 98.62 $\pm$ 0.02 & 0.045 $\pm$ 0.001 & 0.46 $\pm$ 0.01 & 0.021 $\pm$ 0.000 & 97.94 $\pm$ 0.06 \\
 & q-VON ($q<1$) & 98.69 $\pm$ 0.02 & {\bf 0.042} $\pm$ 0.000 & {\bf 0.40} $\pm$ 0.03 & 0.020 $\pm$ 0.000 & 98.04 $\pm$ 0.05 \\
 & ~~+ sampling & 98.69 $\pm$ 0.02 & {\bf 0.042} $\pm$ 0.000 & {\bf 0.40} $\pm$ 0.04 & 0.020 $\pm$ 0.000 & 98.04 $\pm$ 0.05 \\
 & q-VSGD ($q>1$) & 98.59 $\pm$ 0.01 & 0.045 $\pm$ 0.000 & 0.48 $\pm$ 0.02 & 0.022 $\pm$ 0.000 & 97.97 $\pm$ 0.05 \\
 & q-VON ($q>1$) & {\bf 98.81} $\pm$ 0.03 & 0.043 $\pm$ 0.000 & 0.60 $\pm$ 0.01 & {\bf 0.019} $\pm$ 0.000 & 97.84 $\pm$ 0.11 \\
 & ~~+ sampling & 98.76 $\pm$ 0.02 & {\bf 0.042} $\pm$ 0.000 & 0.85 $\pm$ 0.01 & 0.020 $\pm$ 0.000 & {\bf 98.05} $\pm$ 0.09 \\
 \addlinespace[2pt]
\cmidrule(lr){2-7} 
\addlinespace[2pt] 
& q-VON-MF ($q<1$) & 98.66 $\pm$ 0.03 & 0.052 $\pm$ 0.001 & 0.60 $\pm$ 0.02 & 0.021 $\pm$ 0.000 & 97.19 $\pm$ 0.16 \\
& ~~+ sampling & 98.62 $\pm$ 0.02 & 0.053 $\pm$ 0.001 & 1.60 $\pm$ 0.02 & 0.024 $\pm$ 0.000 & 97.82 $\pm$ 0.04 \\
 & q-VON-MF ($q>1$) & 98.76 $\pm$ 0.02 & 0.049 $\pm$ 0.001 & 0.59 $\pm$ 0.02 & 0.020 $\pm$ 0.000 & 97.14 $\pm$ 0.07 \\
 & ~~+sampling & 98.67 $\pm$ 0.01 & 0.054 $\pm$ 0.001 & 1.90 $\pm$ 0.02 & 0.023 $\pm$ 0.000 & 97.88 $\pm$ 0.05 \\
\addlinespace[2pt]
\bottomrule
\end{tabular}%
}
\label{tab:deeplearning}
\end{table*}

\newpage

\subsection{Empirical Verification of Convergence}

A convergence analysis of BLR/VON-type methods in the generality considered here is still largely open. Existing results cover narrower settings: \citet{wu2024understanding} prove non-asymptotic rates for stochastic natural gradient (NG) VI in conjugate exponential family models and show that for non-conjugate log-concave likelihoods, the negative ELBO need not be convex in the expectation-parameter coordinates used by canonical NGVI. 
\citet{guilmeau2025regularized} instead study Bregman-proximal algorithms for regularized R\'enyi-divergence minimization over exponential families, with convergence guarantees based on relative smoothness with respect to the log-partition geometry. These assumptions are different from ours: our geometry is induced by a density-space Fenchel-Young discrepancy, which defines the generalized maxent family and the corresponding BLR update. 
A convergence theorem for the Fenchel-Young BLR would be a substantial project in its own right and is outside the scope of this paper. 
Instead, we provide empirical evidence that the implemented generalized updates move toward lower values of their corresponding variational objectives.

We use 2D Bayesian logistic regression as a minimal nontrivial diagnostic, with bounded-support, Gaussian, and Student-$t$ variants. The plots in \cref{fig:convergence} show slices of the optimizer-specific generalized objective over one mean-dispersion coordinate, with the remaining coordinate fixed at its final value. 
Each panel recomputes the objective associated with that posterior family and parametrization, because the Fenchel-Young discrepancy, sampling law, and scale-precision map change with \(q\) and with the full versus mean-field construction. All trajectories move toward low-objective regions and stabilize near the minimum. 

\begin{figure*}[h!] 
    \begin{center}
    \includegraphics[width=\textwidth]{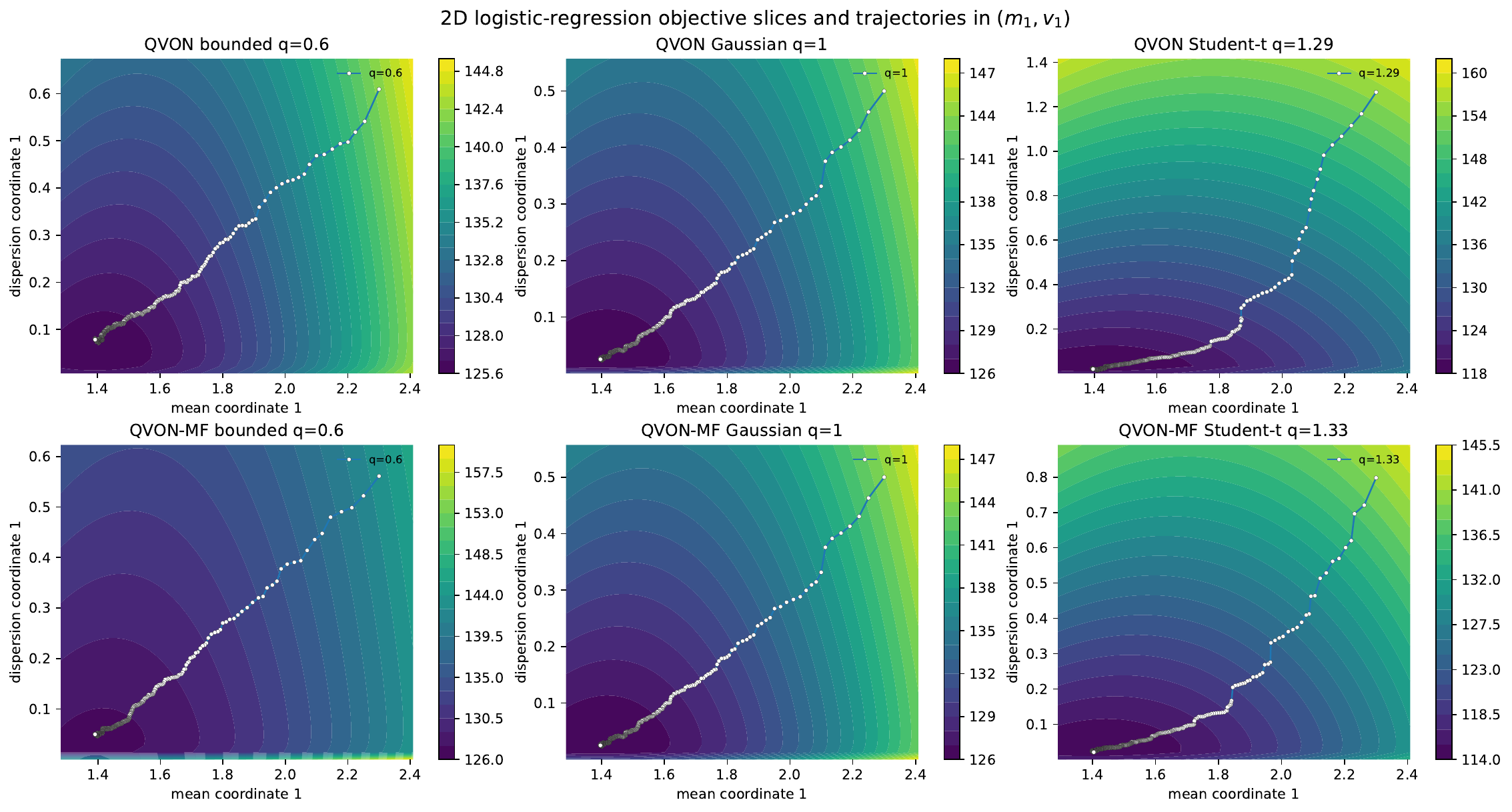}
    \caption{Optimizer-specific convergence on 2D Bayesian logistic regression. Each panel shows the objective associated with an optimizer and posterior family. The plots show slices of the optimizer-specific generalized objective over one mean-dispersion coordinate, with the remaining coordinate fixed at its final value. The trajectories consistently approach low-objective regions for bounded-support, Gaussian, and Student-$t$ variants, for both full and mean-field implementations. The same $\nu=5$ in both Student-$t$ panels corresponds to different $q$'s because of the dependence on $D$.}
    \label{fig:convergence}
\end{center}
\end{figure*}

\newpage 
\section{Varying $q$}\label{app:varying_q}

Different choices of \(q\) change the variational objective, the scale-precision map, and the stochastic Hessian estimator, so iteration counts across \(q\)'s are not directly comparable. Nevertheless, $q$-VON retains the fast
VON-style optimizer structure: a Newton-like mean update preconditioned by a learned diagonal precision, implemented with minibatch stochastic gradients, momentum, and Hessian-diagonal estimates rather than explicit second-order
computations \citep{khan2018fast, osawa2019practical, shen2402variational}.
Since $q$-VON changes only the sampling law, the precision constants, and the corresponding diagonal Stein estimator, we expect similar fast behavior.

\subsection{1D Logistic Regression}

In \cref{fig:1D_logreg} we use a synthetic dataset of size \(N=256\), with covariates \(x_i \sim \mathcal N(0,1)\), labels \(y_i \sim \mathrm{Bernoulli}(\sigma(2x_i))\), and Gaussian prior \(w \sim \mathcal N(0,0.1)\). For each \(q\), $q$-VON is run for 3000 steps and repeated over 5 seeds. Points show the average posterior mean or variance, and error bars show \(95\%\) confidence intervals. Horizontal reference lines show the exact Bayesian posterior quantities for the ordinary Gaussian-prior model. Exact references are approximated by evaluating the negative log-posterior \(\ell(w)+w^2/0.2\) on a grid.
\begin{figure*}[h!] 
    \begin{center}
    \includegraphics[width=\textwidth]{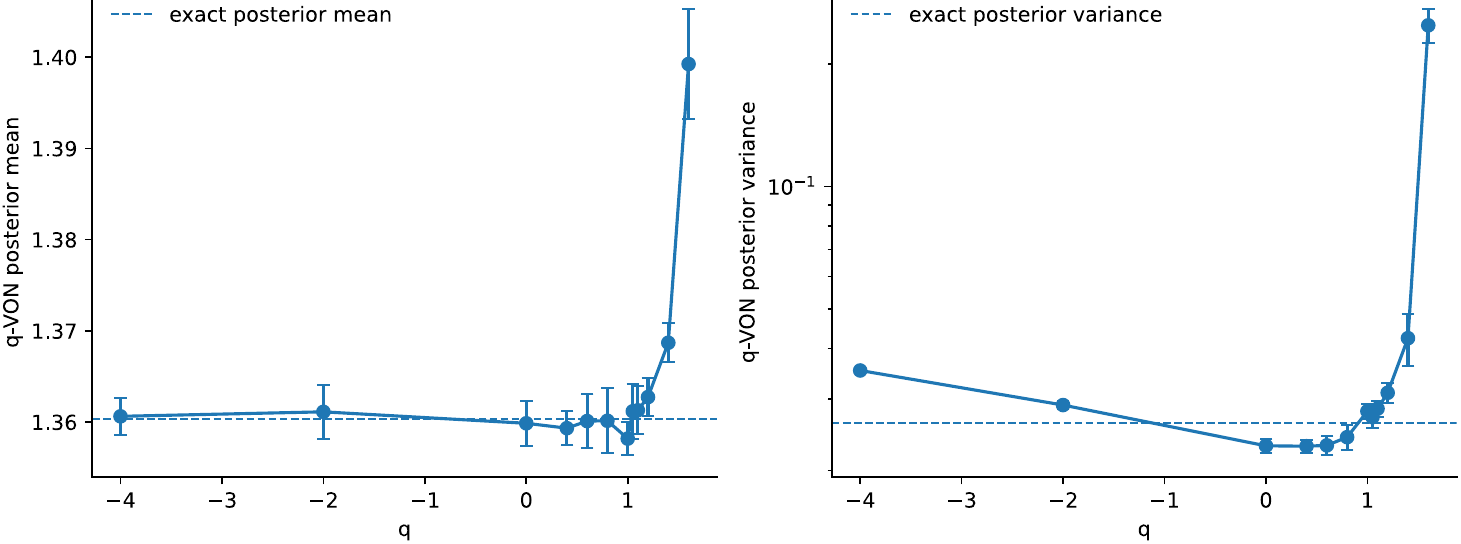}
\caption{
1D Bayesian logistic regression \(q\)-sweep with $q$-VON.
}\label{fig:1D_logreg}
\end{center}
\end{figure*}

\newpage

\subsection{MNIST}

In \cref{tab:mnist_qvon_mf_qsweep} we report the results of varying $q$ on MNIST training with $q$-VON-MF, again using the hyperparameters obtained from previous tuning with IVON.  
\begin{table*}[h!]
\centering
\caption{MNIST $q$-sweep for $q$-VON-MF. Results are mean $\pm$ s.e. over 5 seeds.}
\setlength{\tabcolsep}{4pt}
\renewcommand{\arraystretch}{1.12}
\resizebox{\linewidth}{!}{%
\begin{tabular}{llccccc}
\toprule
Task & Method & Acc. $(\uparrow)$ & Test Loss $(\downarrow)$ & ECE $(\downarrow)$ & Brier $(\downarrow)$ & AUROC $(\uparrow)$ \\
\midrule
\addlinespace[2pt]
\midrule
\multirow{25}{*}{\parbox{2.2cm}{MNIST\\MLP \\ 550k params}} & q-VON-MF ($q=-10$) & 81.42 $\pm$ 15.77 & 0.514 $\pm$ 0.412 & 1.35 $\pm$ 0.59 & 0.208 $\pm$ 0.165 & 88.55 $\pm$ 6.79 \\
 & q-VON-MF ($q=-8$) & 97.67 $\pm$ 0.03 & 0.086 $\pm$ 0.001 & 0.88 $\pm$ 0.01 & 0.036 $\pm$ 0.000 & 96.26 $\pm$ 0.18 \\
 & q-VON-MF ($q=-6$) & 98.15 $\pm$ 0.08 & 0.070 $\pm$ 0.002 & 0.86 $\pm$ 0.04 & 0.029 $\pm$ 0.001 & 96.79 $\pm$ 0.12 \\
 & q-VON-MF ($q=-4$) & 98.66 $\pm$ 0.03 & 0.053 $\pm$ 0.001 & 0.72 $\pm$ 0.01 & 0.022 $\pm$ 0.000 & 97.67 $\pm$ 0.12 \\
 & q-VON-MF ($q=-2$) & {\bf 98.75} $\pm$ 0.01 & 0.048 $\pm$ 0.001 & 0.61 $\pm$ 0.03 & {\bf 0.020} $\pm$ 0.000 & 97.52 $\pm$ 0.05 \\
 & q-VON-MF ($q=-1$) & 98.66 $\pm$ 0.02 & 0.047 $\pm$ 0.000 & 0.60 $\pm$ 0.02 & 0.021 $\pm$ 0.000 & 97.93 $\pm$ 0.07 \\
 & q-VON-MF ($q=0$) & 98.60 $\pm$ 0.01 & 0.046 $\pm$ 0.001 & 0.54 $\pm$ 0.02 & 0.021 $\pm$ 0.000 & 97.98 $\pm$ 0.05 \\
 & q-VON-MF ($q=0.4$) & 98.59 $\pm$ 0.03 & 0.047 $\pm$ 0.001 & 0.51 $\pm$ 0.02 & 0.022 $\pm$ 0.000 & 97.88 $\pm$ 0.05 \\
 & q-VON-MF ($q=0.6$) & 98.56 $\pm$ 0.02 & 0.047 $\pm$ 0.001 & 0.50 $\pm$ 0.02 & 0.022 $\pm$ 0.000 & 98.01 $\pm$ 0.03 \\
 & q-VON-MF ($q=0.8$) & 98.57 $\pm$ 0.02 & 0.046 $\pm$ 0.000 & 0.48 $\pm$ 0.02 & 0.022 $\pm$ 0.000 & 97.91 $\pm$ 0.05 \\
 & q-VON-MF ($q=0.95$) & 98.55 $\pm$ 0.04 & 0.047 $\pm$ 0.001 & 0.48 $\pm$ 0.03 & 0.022 $\pm$ 0.000 & 97.98 $\pm$ 0.06 \\
\addlinespace[2pt]
\cmidrule(lr){2-7}
\addlinespace[2pt]
 & q-VON-MF ($q=1$) & 98.57 $\pm$ 0.01 & 0.046 $\pm$ 0.001 & {\bf 0.45} $\pm$ 0.02 & 0.022 $\pm$ 0.000 & 98.02 $\pm$ 0.04 \\
\addlinespace[2pt]
\cmidrule(lr){2-7}
\addlinespace[2pt]
 & q-VON-MF ($q=1.000004$) & 98.53 $\pm$ 0.02 & 0.046 $\pm$ 0.000 & 0.50 $\pm$ 0.02 & 0.022 $\pm$ 0.000 & 98.03 $\pm$ 0.04 \\
 & q-VON-MF ($q=1.02$) & 98.55 $\pm$ 0.03 & 0.046 $\pm$ 0.001 & 0.48 $\pm$ 0.01 & 0.022 $\pm$ 0.000 & 97.95 $\pm$ 0.08 \\
 & q-VON-MF ($q=1.05$) & 98.53 $\pm$ 0.01 & 0.046 $\pm$ 0.000 & 0.49 $\pm$ 0.01 & 0.022 $\pm$ 0.000 & 98.05 $\pm$ 0.07 \\
 & q-VON-MF ($q=1.1$) & 98.56 $\pm$ 0.01 & 0.046 $\pm$ 0.000 & 0.50 $\pm$ 0.02 & 0.022 $\pm$ 0.000 & 98.03 $\pm$ 0.03 \\
 & q-VON-MF ($q=1.2$) & 98.53 $\pm$ 0.02 & 0.046 $\pm$ 0.000 & 0.48 $\pm$ 0.03 & 0.022 $\pm$ 0.000 & 98.03 $\pm$ 0.03 \\
 & q-VON-MF ($q=4/3$) & 98.51 $\pm$ 0.01 & 0.046 $\pm$ 0.000 & 0.47 $\pm$ 0.02 & 0.022 $\pm$ 0.000 & 98.05 $\pm$ 0.04 \\
 & q-VON-MF ($q=1.5$) & 98.54 $\pm$ 0.01 & 0.046 $\pm$ 0.001 & 0.48 $\pm$ 0.03 & 0.022 $\pm$ 0.000 & 98.00 $\pm$ 0.03 \\
 & q-VON-MF ($q=1.57143$) & 98.55 $\pm$ 0.02 & 0.046 $\pm$ 0.001 & 0.48 $\pm$ 0.01 & 0.022 $\pm$ 0.000 & 97.99 $\pm$ 0.04 \\
 & q-VON-MF ($q=1.6$) & 98.55 $\pm$ 0.03 & 0.046 $\pm$ 0.000 & 0.48 $\pm$ 0.03 & 0.022 $\pm$ 0.000 & 98.02 $\pm$ 0.07 \\
 & q-VON-MF ($q=1.625$) & 98.54 $\pm$ 0.04 & 0.047 $\pm$ 0.001 & 0.48 $\pm$ 0.04 & 0.022 $\pm$ 0.000 & 98.01 $\pm$ 0.06 \\
 & q-VON-MF ($q=1.64516$) & 98.55 $\pm$ 0.03 & {\bf 0.046} $\pm$ 0.000 & 0.48 $\pm$ 0.04 & 0.022 $\pm$ 0.000 & {\bf 98.07} $\pm$ 0.06 \\
 & q-VON-MF ($q=1.65574$) & 98.55 $\pm$ 0.02 & 0.046 $\pm$ 0.001 & 0.46 $\pm$ 0.03 & 0.022 $\pm$ 0.000 & 97.93 $\pm$ 0.05 \\
 & q-VON-MF ($q=1.66445$) & 98.57 $\pm$ 0.03 & 0.046 $\pm$ 0.000 & 0.47 $\pm$ 0.03 & 0.022 $\pm$ 0.000 & 97.99 $\pm$ 0.07 \\
\addlinespace[2pt]
\bottomrule
\end{tabular}%
}
\label{tab:mnist_qvon_mf_qsweep}
\end{table*}



\end{document}

%% file: defns.tex
\definecolor{tablecolor}{rgb}{0.8,0.8,0.8}

\newcommand{\barloss}{\bar{\ell}}

\newcommand{\loss}{\ell}

\newcommand{\vparam}{\vtheta}
\newcommand{\param}{\theta}

\newcommand{\vnatparam}{\vlambda}

\newcommand{\dkl}[2]{\mathbb{D}_{\text{KL}}(#1 \, \|\, #2)}

\newcommand\cut[1]{}

\newcommand{\tmu}{\widetilde{\vmu}}

\newcommand{\squishlist}{
   \begin{list}{$\bullet$}
    { \setlength{\itemsep}{0pt}      \setlength{\parsep}{3pt}
      \setlength{\topsep}{3pt}       \setlength{\partopsep}{0pt}
      \setlength{\leftmargin}{1.5em} \setlength{\labelwidth}{1em}
      \setlength{\labelsep}{0.5em} } }

\newcommand{\squishlisttwo}{
   \begin{list}{$\bullet$}
    { \setlength{\itemsep}{0pt}    \setlength{\parsep}{0pt}
      \setlength{\topsep}{0pt}     \setlength{\partopsep}{0pt}
      \setlength{\leftmargin}{2em} \setlength{\labelwidth}{1.5em}
      \setlength{\labelsep}{0.5em} } }

\newcommand{\squishend}{
    \end{list}  }

\newtheorem{thm}{Theorem}{}
\newtheorem{prop}{Proposition}{}
{}

\newcommand{\half}{\mbox{$\frac{1}{2}$}}
\newcommand{\real}{\mbox{$\mathbb{R}$}}

\newcommand{\myang}[1]{\langle#1\rangle}
\newcommand{\myexpect}{\mathbb{E}}

\newcommand{\gauss}{\mbox{${\cal N}$}}

\newcommand{\myvec}[1]{\mbox{$\mathbf{#1}$}}
\newcommand{\myvecsym}[1]{\mbox{$\boldsymbol{#1}$}}

\newcommand{\vzero}{\mbox{$\myvecsym{0}$}}

\newcommand{\vmu}{\mbox{$\myvecsym{\mu}$}}

\newcommand{\vlambda}{\mbox{$\myvecsym{\lambda}$}}

\newcommand{\vtheta}{\mbox{$\myvecsym{\theta}$}}

\newcommand{\vsigma}{\mbox{$\myvecsym{\sigma}$}}
\newcommand{\vSigma}{\mbox{$\myvecsym{\Sigma}$}}

\newcommand{\va}{\mbox{$\myvec{a}$}}
\newcommand{\vb}{\mbox{$\myvec{b}$}}

\newcommand{\vg}{\mbox{$\myvec{g}$}}
\newcommand{\vh}{\mbox{$\myvec{h}$}}

\newcommand{\vm}{\mbox{$\myvec{m}$}}

\newcommand{\vs}{\mbox{$\myvec{s}$}}

\newcommand{\vu}{\mbox{$\myvec{u}$}}

\newcommand{\vz}{\mbox{$\myvec{z}$}}

\newcommand{\vF}{\mbox{$\myvec{F}$}}

\newcommand{\vI}{\mbox{$\myvec{I}$}}

\newcommand{\vM}{\mbox{$\myvec{M}$}}

\newcommand{\vS}{\mbox{$\myvec{S}$}}
\newcommand{\vT}{\mbox{$\myvec{T}$}}

\newcommand{\Var}{\mbox{$\operatorname{Var}$}}



%% file: refs.bib
@string{icml = {International Conference on Machine Learning (ICML)}}

@string{nips = {Advances in Neural Information Processing Systems (NeurIPS)}}

@string{cpal = {Conference on Parsimony and Learning (CPAL)}}

@article{martins2022sparse,
  title={Sparse continuous distributions and {F}enchel-{Y}oung losses},
  author={Martins, Andr{\'e} FT and Treviso, Marcos and Farinhas, Ant{\'o}nio and Aguiar, Pedro MQ and Figueiredo, M{\'a}rio AT and Blondel, Mathieu and Niculae, Vlad},
  journal={Journal of Machine Learning Research},
  volume={23},
  number={257},
  pages={1--74},
  year={2022}
}

@inproceedings{khan2017conjugate,
  title={Conjugate-{C}omputation {V}ariational {I}nference: {C}onverting {V}ariational {I}nference in {N}on-conjugate {M}odels to {I}nferences in {C}onjugate {M}odels},
  author={Khan, Mohammad and Lin, Wu},
  booktitle={Artificial Intelligence and Statistics},
  pages={878--887},
  year={2017},
  organization={PMLR}
}

@article{khan2023bayesian,
  title={The {B}ayesian {L}earning {R}ule},
  author={Khan, Mohammad Emtiyaz and Rue, H{\aa}vard},
  journal={Journal of Machine Learning Research},
  volume={24},
  number={281},
  pages={1--46},
  year={2023}
}

@article{amari2011geometry,
  title={Geometry of $q$-{E}xponential {F}amily of {P}robability {D}istributions},
  author={Amari, Shun-Ichi and Ohara, Atsumi},
  journal={Entropy},
  volume={13},
  number={6},
  pages={1170--1185},
  year={2011},
  publisher={MDPI}
}

@article{matsuzoe2011geometry,
  title={Geometry for $q$-exponential families},
  author={Matsuzoe, Hiroshi and Ohara, Atsumi},
  journal={Recent progress in differential geometry and its related fields},
  pages={55--71},
  year={2011},
  publisher={World Scientific}
}

@inproceedings{khan2018fast,
  title={Fast and {S}calable {B}ayesian {D}eep {L}earning by {W}eight-{P}erturbation in {A}dam},
  author={Khan, Mohammad Emtiyaz and Nielsen, Didrik and Tangkaratt, Voot and Lin, Wu and Gal, Yarin and Srivastava, Akash},
  booktitle=icml,
  year={2018},
}

@article{landsman2008stein,
  title={Stein's Lemma for elliptical random vectors},
  author={Landsman, Zinoviy and Ne{\v{s}}lehov{\'a}, Johanna},
  journal={Journal of Multivariate Analysis},
  volume={99},
  number={5},
  pages={912--927},
  year={2008},
  publisher={Elsevier}
}

@article{landsman2015some,
  title={Some {S}tein-type {I}nequalities for {M}ultivariate {E}lliptical {D}istributions and {A}pplications},
  author={Landsman, Zinoviy and Vanduffel, Steven and Yao, Jing},
  journal={Statistics \& Probability Letters},
  volume={97},
  pages={54--62},
  year={2015},
  publisher={Elsevier}
}

@inproceedings{osawa2019practical,
  title={Practical deep learning with {B}ayesian principles},
  author={Osawa, Kazuki and Swaroop, Siddharth and Khan, Mohammad Emtiyaz and Jain, Anirudh and Eschenhagen, Runa and Turner, Richard E and Yokota, Rio},
  booktitle=nips,
  year={2019}
}

@inproceedings{shen2402variational,
  title={Variational {L}earning is {E}ffective for {L}arge {D}eep {N}etworks},
  author={Shen, Yuesong and Daheim, Nico and Cong, Bai and Nickl, Peter and Marconi, Gian Maria and Bazan, Clement and Yokota, Rio and Gurevych, Iryna and Cremers, Daniel and Khan, Mohammad Emtiyaz and M\"ollenhoff, Thomas},
  booktitle=icml, 
  year={2024}
}

@article{Zellner1988,
    author = {Arnold Zellner},
    title = {Optimal {I}nformation {P}rocessing and {B}ayes’s {T}heorem},
    journal = {The American Statistician},
    year = {1988},
    volume = {42},
    number = {4},
    pages = {278--280}
}

@article{Blei2017,
    author = {David M. Blei and Alp Kucukelbir and Jon D. McAuliffe},
    title = {Variational {I}nference: {A} {R}eview for {S}tatisticians},
    journal = {Journal of the American Statistical Association },
    year = {2017},
volume = {112},
number = {518},
pages = {859--877}
}

@inproceedings{blondel2019learning,
  title={Learning classifiers with {Fenchel-Young} losses: {G}eneralized entropies, margins, and algorithms},
  author={Blondel, Mathieu and Martins, Andre and Niculae, Vlad},
  booktitle={The 22nd International Conference on Artificial Intelligence and Statistics},
  pages={606--615},
  year={2019},
  organization={PMLR}
}

@book{amari2016information,
  title={Information {G}eometry and its {A}pplications},
  author={Amari, Shun-{I}chi},
  volume={194},
  year={2016},
  publisher={Springer}
}

@article{wainwright2008graphical,
  title={Graphical {M}odels, {E}xponential {F}amilies, and {V}ariational {I}nference},
  author={Wainwright, Martin J and Jordan, Michael I and others},
  journal={Foundations and Trends in Machine Learning},
  volume={1},
  number={1--2},
  pages={1--305},
  year={2008},
  publisher={Now Publishers, Inc.}
}

@article{bissiri2016general,
  title={A {G}eneral {F}ramework for {U}pdating {B}elief {D}istributions},
  author={Bissiri, Pier Giovanni and Holmes, Chris C and Walker, Stephen G},
  journal={Journal of the Royal Statistical Society: Series B (Statistical Methodology)},
  volume={78},
  number={5},
  pages={1103--1130},
  year={2016},
  publisher={Wiley Online Library}
}

@article{knoblauch2022optimization,
  title={An {O}ptimization-centric {V}iew on {B}ayes' {R}ule: {R}eviewing and {G}eneralizing {V}ariational {I}nference},
  author={Knoblauch, Jeremias and Jewson, Jack and Damoulas, Theodoros},
  journal={Journal of Machine Learning Research},
  volume={23},
  number={132},
  pages={1--109},
  year={2022}
}

@book{naudts2011generalised,
  title={Generalised {T}hermostatistics},
  author={Naudts, Jan},
  year={2011},
  publisher={Springer Science \& Business Media}
}

@techreport{minka2005divergence,
  title={Divergence {M}easures and {M}essage {P}assing},
  author={Minka, Tom and others},
  year={2005},
  institution={Microsoft Research},
  publisher={Technical report, Microsoft Research}
}

@article{li2016renyi,
  title={R{\'e}nyi {D}ivergence {V}ariational {I}nference},
  author={Li, Yingzhen and Turner, Richard E},
  journal={Advances in neural information processing systems},
  volume={29},
  year={2016}
}

@book{murphy2012machine,
  title={Machine {L}earning: {A} {P}robabilistic {P}erspective},
  author={Murphy, Kevin P},
  year={2012},
  publisher={MIT press}
}

@article{ivon,
  title={Variational {L}earning is {E}ffective for {L}arge {D}eep {N}etworks},
  author={Shen, Yuesong and Daheim, Nico and Cong, Bai and Nickl, Peter and Marconi, Gian Maria and Bazan, Clement and Yokota, Rio and Gurevych, Iryna and Cremers, Daniel and Khan, Mohammad Emtiyaz and Moellenhoff, Thomas},
  booktitle=icml, 
  year={2024}
}

@inproceedings{alquier2021non,
  title={{N}on-{E}xponentially {W}eighted {A}ggregation: {R}egret {B}ounds for {U}nbounded {L}oss {F}unctions},
  author={Alquier, Pierre},
  booktitle=icml, 
  year={2021},
}

@article{amari2010information,
  title={Information geometry of divergence functions},
  author={Amari, Shun-{I}chi and Cichocki, Andrzej},
  journal={Bulletin of the {P}olish academy of sciences. Technical {S}ciences},
  volume={58},
  number={1},
  pages={183--195},
  year={2010},
  publisher={Polska Akademia Nauk. Czasopisma i Monografie PAN}
}

@article{amari2012geometry,
  title={Geometry of deformed exponential families: {I}nvariant, dually-flat and conformal geometries},
  author={Amari, Shun-{I}chi and Ohara, Atsumi and Matsuzoe, Hiroshi},
  journal={Physica A: Statistical Mechanics and its Applications},
  volume={391},
  number={18},
  pages={4308--4319},
  year={2012},
  publisher={Elsevier}
}

@article{grunwald2020fast,
  title={Fast {R}ates for {G}eneral {U}nbounded {L}oss {F}unctions: {F}rom {ERM} to {G}eneralized {B}ayes},
  author={Gr{\"u}nwald, Peter D and Mehta, Nishant A},
  journal={Journal of Machine Learning Research},
  volume={21},
  number={56},
  pages={1--80},
  year={2020}
}

@article{amari2009alpha,
  title={$\alpha$-{D}ivergence {I}s {U}nique, {B}elonging to Both $f$-Divergence and {B}regman {D}ivergence {C}lasses},
  author={Amari, Shun-{I}chi},
  journal={IEEE Transactions on Information Theory},
  volume={55},
  number={11},
  pages={4925--4931},
  year={2009},
  publisher={IEEE}
}

@article{opper2009variational,
  title={The {V}ariational {G}aussian {A}pproximation {R}evisited},
  author={Opper, Manfred and Archambeau, C{\'e}dric},
  journal={Neural computation},
  volume={21},
  number={3},
  pages={786--792},
  year={2009},
  publisher={MIT Press}
}

@inproceedings{guo2017calibration,
  title={On calibration of modern neural networks},
  author={Guo, Chuan and Pleiss, Geoff and Sun, Yu and Weinberger, Kilian Q},
  booktitle=icml, 
  year={2017},
}

@inproceedings{orvieto2022anticorrelated,
  title={Anticorrelated {N}oise {I}njection for {I}mproved {G}eneralization},
  author={Orvieto, Antonio and Kersting, Hans and Proske, Frank and Bach, Francis and Lucchi, Aurelien},
  booktitle=icml, 
  year={2022},
}

@inproceedings{qsteincpal,
  title={A {S}tein {I}dentity for $q$-{G}aussians with {B}ounded {S}upport},
  author={Sklaviadis, Sophia and Möllenhoff, Thomas and Figueiredo, Mario A. T. and Martins, Andre F. T. and Khan, Mohammad Emtiyaz},
  booktitle=cpal, 
  year={2026},
}

@article{nielsen2022statistical,
  title={Statistical {D}ivergences between {D}ensities of {T}runcated {E}xponential {F}amilies with {N}ested {S}upports: Duo {B}regman and duo {J}ensen divergences},
  author={Nielsen, Frank},
  journal={Entropy},
  volume={24},
  number={3},
  pages={421},
  year={2022},
  publisher={MDPI}
}

@misc{fyvi,
  title = {Fenchel-Young {V}ariational {L}earning},
  author = {Sklaviadis, Sophia and Möllenhoff, Thomas and Martins, Andre and Figueiredo, Mario},
  year = {2025},
  note = {arXiv:2502.10295 [cs.LG]},
  doi = {10.48550/arXiv.2502.10295},
  url = {https://arxiv.org/abs/2502.10295}
}

@inproceedings{lukashchukquotient,
  title={The {Q}uotient {B}ayesian {L}earning {R}ule},
  author={Lukashchuk, Mykola and Tr{\'e}sor, Rapha{\"e}l and Nuijten, Wouter WL and Senoz, Ismail and de Vries, Bert},
  booktitle={The Thirty-ninth Annual Conference on Neural Information Processing Systems}
}

@article{naudts2002deformed,
  title={Deformed exponentials and logarithms in generalized thermostatistics},
  author={Naudts, Jan},
  journal={Physica A: Statistical Mechanics and its Applications},
  volume={316},
  number={1-4},
  pages={323--334},
  year={2002},
  publisher={Elsevier}
}

@inproceedings{hernandez2016black,
  title={Black-box $\alpha$-divergence {M}inimization},
  author={Hernandez-Lobato, Jose and Li, Yingzhen and Rowland, Mark and Bui, Thang and Hern{\'a}ndez-Lobato, Daniel and Turner, Richard},
  booktitle={International conference on machine learning},
  pages={1511--1520},
  year={2016},
  organization={PMLR}
}

@book{gibbs1902elementary,
  title={Elementary {P}rinciples in {S}tatistical {M}echanics: {D}eveloped with {E}special {R}eference to the {R}ational {F}oundations of {T}hermodynamics},
  author={Gibbs, Josiah Willard},
  year={1902},
  publisher={C. Scribner's sons}
}

@article{jaynes1957information,
  title={Information {T}heory and {S}tatistical {M}echanics},
  author={Jaynes, Edwin T},
  journal={Physical review},
  volume={106},
  number={4},
  pages={620},
  year={1957},
  publisher={APS}
}

@article{williams1980bayesian,
  title={Bayesian {C}onditionalisation and the {P}rinciple of {M}inimum {I}nformation},
  author={Williams, Peter M},
  journal={The British Journal for the Philosophy of Science},
  volume={31},
  number={2},
  pages={131--144},
  year={1980},
  publisher={Oxford University Press}
}

@article{catoni2007pac,
  title={{PAC}-{B}ayesian {S}upervised {C}lassification: {T}he {T}hermodynamics of {S}tatistical {L}earning},
  author={Catoni, Olivier},
  journal={arXiv preprint arXiv:0712.0248},
  year={2007}
}

@inproceedings{zhang1999theoretical,
  title={Theoretical {A}nalysis of a {C}lass of {R}andomized {R}egularization {M}ethods},
  author={Zhang, Tong},
  booktitle={Proceedings of the twelfth annual conference on Computational learning theory},
  pages={156--163},
  year={1999}
}

@article{hoffman2013stochastic,
  title={Stochastic {V}ariational {I}nference},
  author={Hoffman, Matthew D and Blei, David M and Wang, Chong and Paisley, John},
  journal={Journal of machine learning research},
  year={2013}
}

@book{lasota2013chaos,
  title={Chaos, {F}ractals, and {N}oise: {S}tochastic {A}spects of {D}ynamics},
  author={Lasota, Andrzej and Mackey, Michael C},
  volume={97},
  year={2013},
  publisher={Springer Science \& Business Media}
}

@article{zhu2014bayesian,
  title={Bayesian {I}nference with {P}osterior {R}egularization and {A}pplications to {I}nfinite {L}atent {SVM}s},
  author={Zhu, Jun and Chen, Ning and Xing, Eric P},
  journal={The Journal of Machine Learning Research},
  volume={15},
  number={1},
  pages={1799--1847},
  year={2014},
  publisher={JMLR. org}
}

@article{jewson2018principles,
  title={Principles of {B}ayesian {I}nference {U}sing {G}eneral {D}ivergence {C}riteria},
  author={Jewson, Jack and Smith, Jim Q and Holmes, Chris},
  journal={Entropy},
  volume={20},
  number={6},
  pages={442},
  year={2018},
  publisher={MDPI}
}

@article{ghosh2016robust,
  title={Robust {B}ayes estimation using the density power divergence},
  author={Ghosh, Abhik and Basu, Ayanendranath},
  journal={Annals of the Institute of Statistical Mathematics},
  volume={68},
  number={2},
  pages={413--437},
  year={2016},
  publisher={Springer}
}

@article{miller2019robust,
  title={Robust {B}ayesian inference via coarsening},
  author={Miller, Jeffrey W and Dunson, David B},
  journal={Journal of the American Statistical Association},
  year={2019},
  publisher={Taylor \& Francis}
}

@article{ishwar2005existence,
  title={On the existence and characterization of the maxent distribution under general moment inequality constraints},
  author={Ishwar, Prakash and Moulin, Pierre},
  journal={IEEE transactions on information theory},
  volume={51},
  number={9},
  pages={3322--3333},
  year={2005},
  publisher={IEEE}
}

@article{touchette2005legendre,
  title={Legendre-{F}enchel transforms in a nutshell},
  author={Touchette, Hugo},
  journal={URL http://www. maths. qmul. ac. uk/ht/archive/lfth2. pdf},
  pages={25},
  year={2005}
}

@book{boyd2004convex,
  title={Convex {O}ptimization},
  author={Boyd, Stephen and Vandenberghe, Lieven},
  year={2004},
  publisher={Cambridge University Press}
}

@article{wu2024understanding,
  title={Understanding {S}tochastic {N}atural {G}radient {V}ariational {I}nference},
  author={Wu, Kaiwen and Gardner, Jacob R},
  journal={arXiv preprint arXiv:2406.01870},
  year={2024}
}

@article{guilmeau2025regularized,
  author  = {Guilmeau, Thomas and Chouzenoux, Emilie and Elvira, V{\'i}ctor},
  title   = {Regularized R{\'e}nyi Divergence Minimization through
             Bregman Proximal Gradient Algorithms},
  journal = {Journal of Machine Learning Research},
  year    = {2025},
  volume  = {26},
  number  = {157},
  pages   = {1--56},
  url     = {https://www.jmlr.org/papers/v26/23-0573.html}
}
